\let\norm\undefined 
\let\set\undefined 
\let\pr\bP
\let\ex\bE
\let\top\intercal
\DeclareBoldMathCommand{\vpi}{\pi}
\DeclareBoldMathCommand{\vmu}{\mu}
\DeclareBoldMathCommand{\vlambda}{\lambda}
\DeclareBoldMathCommand{\c}{c}
\DeclareBoldMathCommand{\w}{w}
\DeclareBoldMathCommand{\v}{v}
\DeclareMathOperator{\Alt}{Alt}
\def\UCB{\mathrm{U}}
\def\LCB{\mathrm{L}}
\def\KL{\mathrm{KL}}
\def\ihat{\hat{\imath}}
\let\inf\undefined
\DeclareMathOperator*{\inf}{\vphantom{p}inf}
\let\log\ln
\begin{document}

\title{Mixture Martingales Revisited \\with Applications to Sequential Tests and Confidence Intervals}

\author{\name{Emilie Kaufmann}\\
  \addr{Univ. Lille, CNRS, Inria,  Centrale Lille, UMR 9189 - CRIStAL, \\ F-59000 Lille, France}
  \email{emilie.kaufmann@univ-lille.fr}
  \AND
  \name{Wouter M. Koolen} \\
  \addr{Centrum Wiskunde \& Informatica, Science Park 123, Amsterdam, Netherlands}
  \email{wmkoolen@cwi.nl}
}

\editor{Csaba Szepesv\'ari}

\maketitle

\begin{abstract}
  This paper presents new deviation inequalities that are valid uniformly in time under adaptive sampling in a multi-armed bandit model. The deviations are measured using the Kullback-Leibler divergence in a given one-dimensional exponential family, and take into account \emph{multiple arms} at a time. They are obtained by constructing for each arm a mixture martingale based on a hierarchical prior, and by multiplying those martingales. Our deviation inequalities allow us to analyze stopping rules based on generalized likelihood ratios for a large class of sequential identification problems. We establish asymptotic optimality of sequential tests generalising the track-and-stop method to problems beyond best arm identification. We further derive sharper stopping thresholds, where the number of arms is replaced by the newly introduced pure exploration problem rank. We construct tight confidence intervals for linear functions and minima/maxima of the vector of arm means.
\end{abstract}

\begin{keywords}
mixture methods, test martingales, multi-armed bandits, best arm identification, adaptive sequential testing
\end{keywords}

\section{Introduction}\label{sec:intro}

We are interested in making decisions under uncertainty in its myriad forms, including sequential allocation and hypothesis testing problems. In this paper our goal is the design of tight confidence regions that are valid uniformly in time, as well as the design of efficient stopping rules for a large class of sequential tests.

We will develop our results in the standard multi-armed bandit model with $K$ independent one-dimensional exponential family \emph{arms} that are parameterised by their means $\vmu = (\mu_1, \ldots, \mu_K)$. In this setup, samples $X_1,X_2\dots$ are sequentially gathered from the different arms: $X_t$ is drawn from the distribution that has mean $\mu_{A_t}$ where $A_t \in \{1,\dots,K\}$ is the arm selected at round $t$. Our techniques all make use of \emph{self-normalised sums}, which are defined after $t$ rounds by
\begin{equation}\label{eq:snss}
  \sum_{a \in \cS} N_a(t) d(\hat \mu_a(t), \mu_a).
\end{equation}
Here $\cS$ is a subset of the arms $\{1,\dots,K\}$, $N_a(t)$ is the \emph{random} number of observations from arm $a$, $\hat \mu_a(t)$ is the empirical mean of these observations after $t$ rounds, and $d(\mu, \lambda) \ge 0$ is the relative entropy (Kullback-Leibler divergence) from the exponential family distribution with mean $\mu$ to that with mean $\lambda$. The more the empirical means of arms in $\cS$ deviate from the true means, the larger the self-normalised sum. We call the summands self-normalised as they are KL-based analogues of the (squared) $t$-statistic. Namely, a second-order Taylor expansion in $\mu$ around $\hat \mu(t)$ reveals that $N(t) d(\hat \mu(t), \mu) \approx N(t) \frac{(\hat \mu(t)- \mu)^2}{2 \mathbb V(\hat \mu(t))}$, where $\mathbb V(\mu)$ is the variance of the model with mean $\mu$. One of the reasons why self-normalized sums show up in different sequential learning problems is their relation to (generalized) log likelihood ratio statistics. For example, it can be shown that \[\ln \frac{\ell(X_1,\ldots,X_t ; \hat{\vmu}(t))}{\ell(X_1,\ldots,X_t ; \bm{\mu})} = \sum_{a=1}^{K}N_a(t) d (\hat{\mu}_a(t),\mu_a)\] where $\ell(X_1,\ldots,X_t ; \bm \lambda)$ is the likelihood of the observations under a bandit model whose vector of means is $\bm \lambda$ and $\hat{\vmu}(t) = (\hat{\mu}_1(t),\dots,\hat{\mu}_K(t))$.

The proposed analyses of the sequential procedures discussed in this paper all rely on a tight control of the deviations of self-normalized sums of the form \eqref{eq:snss}, which inform us about possible values of the means. Our first contribution is the construction of explicit \emph{calibration functions} $\cC(x) = x + o(x)$ for which, under any sampling rule (effecting the $N_a(t)$ sampling counts), any bandit model $\vmu$ and any confidence $\delta \in (0,1)$, the self-normalised sum associated to \emph{any subset of arms} $\cS$ satisfies
\begin{equation}\label{eq:stylised}
  \pr_\vmu \left(
    \exists t \in \N :
    \sum_{a \in \cS} \sbr[\Big]{
      N_a(t) d(\hat \mu_a(t), \mu_a)
      - O(\ln \ln N_a(t))
    }
    \ge
    |\cS| \cC\del*{\frac{\ln \frac{1}{\delta}}{|\cS|}}
  \right)
  ~\le~
  \delta
  .
\end{equation}
The salient features of this result are that it is uniform in time, exploits the information geometry (KL) intrinsic to the exponential family (rather than relying on non-parametric relaxations including sub-Gaussianity), and, more importantly, it generalises confidence ellipses by combining in the strong summation sense the evidence from \emph{multiple arms}. Furthermore, as we develop inequalities that hold for any subset $\cS$, at the moderate price of a weighted union bound we may apply the bound to any arbitrary (random) subset of the arms, and thereby control the model-selection trade off between the amount of evidence on the left and the magnitude of the threshold on the right.

We may recognise two well-known statistical effects (i.e.\ fundamental barriers) in the form of the bound \eqref{eq:stylised}. First, the Law of the Iterated Logarithm informs us that, in the Gaussian case, $\lim\sup_{N_a(t) \to \infty} \frac{N_a(t) d(\hat \mu_a(t), \mu_a)}{\ln \ln N_a(t)} = \lim\sup_{N_a(t) \to \infty} \frac{N_a(t) (\hat \mu_a(t)- \mu_a)^2}{\ln \ln N_a(t)} $ is a universal constant a.s., whence the correction in the sum. Moreover, it follows from the Wilks phenomenon \citep{Wilks38}, which gives the limit distribution of Generalized Likelihood Ratio statistics, that, when $\cS = \{1,\dots,K\}$, twice the self-normalised sum \eqref{eq:snss} converges in distribution to a $\chi^2_{K}$ distribution. The $K$ degrees of freedom are reflected in the perspective scaling of the threshold to which $\sum_{a=1}^{K}N_a(t) d (\hat{\mu}_a(t),\mu_a)$ is compared in \eqref{eq:stylised}.

%

The formal statement of our concentration inequalities is given in Section~\ref{sec:expfam.case}, in which we prove a general result that holds for any exponential family (Theorem~\ref{thm:DevExpo}) and state two improved results for Gaussian and Gamma distributions (Theorems~\ref{corr:Gaussian} and~\ref{corr:Gamma} respectively). We now compare our results to previous work and explain why measuring deviations over multiple arms simultaneously is crucial for applications to sequential learning, which we discuss in Sections~\ref{sec:generic.solved} to~\ref{sec:ProjectedCI}.

\subsection{Novelty of our Concentration Results}

Due to the sequential nature of the data collection process, the analysis of virtually any bandit algorithm relies on deviation inequalities that can take into account the random number of observations from each arm. Several such inequalities have thus been developed in this literature and beyond. 

However, most of these results measure deviations for one arm only, which can be rephrased in the form of the following time-uniform deviation inequality
\begin{equation}\label{eq:stylised2}
  \pr_\mu \left(
    \exists t \in \N :
    t d(\hat \mu_t, \mu)
      - O(f(t))
    \ge
     \cC\del*{\ln \frac{1}{\delta}}
  \right)
  ~\le~
  \delta,
\end{equation}
where $\hat{\mu}_t$ is the empirical average of $t$ i.i.d.\ observations with mean $\mu$ in a one-parameter exponential family and $f(t) = \ln(t)$ or $\ln\ln(t)$.\footnote{Some existing results rather upper bound the probability that $|\hat{\mu}_t - \mu|$ exceeds some threshold. For one-parameter exponential families, we think that it is more natural to measure deviations with the KL-divergence function as the Cramér-Chernoff inequality for such distributions can be expressed as $\bP(td(\hat{\mu}_t,\mu) > \ln(1/\delta), \hat{\mu}_t > \mu) \leq \delta$.
  This form is also more convenient for measuring deviations for multiple arms, which is supported by our new inequalities.}
Further, the majority of existing inequalities were obtained for Gaussian (or sub-Gaussian) distributions with thresholds featuring $f(t)=\ln(t)$ \cite[e.g.][]{DeLaPenaal04,Maillard19HDR} or $f(t) = \ln\ln(t)$ \citep{Robbins70LIL,Jamiesonal14LILUCB,JMLR15,ZhaoZSE16,Howard18Bernstein}.
For other one-dimensional exponential families, time-uniform deviation inequalities with $f(t) = \log(t)$ have been stated for Bernoulli \citep{Lai76,Jonsson20} and Gamma distributions \citep{Lai76}. \cite{Lai76} also provides a generic recipe for general one-parameter exponential families, but that leads to intractable thresholds. On the contrary, our Theorem~\ref{thm:DevExpo} applied to $|\cS|=1$ leads to an explicit inequality of the form \eqref{eq:stylised2} for any exponential family, with a scaling in $f(t) = \log\log(t)$. The closest existing result is that of \cite{AOKLUCB}, which controls the deviations uniformly for $t$ in a finite time range $\{1,\dots,n\}$.

To the best of our knowledge, the only prior result that controls deviations over multiple arms simultaneously is Theorem 2 of \cite{Combes14Lip}, which also bounds deviations for $t$ in a finite time range $\{1,\dots,n\}$. We provide a detailed comparison with this result in Section~\ref{sec:expfam.case}, showing that our Theorem~\ref{thm:DevExpo} leads to tighter thresholds, which are furthermore valid for the entire time range $t\in \N$. Given the large number of results that are available for $|\cS|=1$, a natural question is whether inequalities like \eqref{eq:stylised2} for different arms can be combined to obtain an inequality like \eqref{eq:stylised}. There is no straightforward way to do so and obtain the right scaling in $\delta$: using a naive union bound leads to an inequality of the form \eqref{eq:stylised} in which the right-hand side is $|\cS|\cC(\ln(1/\delta)) \simeq |\cS|\ln(1/\delta)$ instead of $|\cS|\cC(\ln(1/\delta)/|\cS|) \simeq \ln(1/\delta)$. Hence, specific techniques are needed to propose deviation inequalities that sum evidence across arms, which we provide.

In this work we obtain essentially tight calibration functions by building suitable martingales. We show that a calibration function $\cC$ satisfying \eqref{eq:stylised} can be obtained by exhibiting a martingale that multiplicatively dominates $\exp\left(\lambda\left[N_a(t) d(\hat \mu_a(t), \mu_a)- O(\ln \ln N_a(t))\right]\right)$ for a suitable $\lambda \in (0,1)$. This central assumption to derive deviation inequalities that sum evidence across arms is formalized in Section~\ref{sec:general.deviations}. Our results are then obtained by leveraging some particular martingales called \emph{mixture martingales} that have this property, which are defined in Section~\ref{subsec:MixtureMartingales}.

Using martingales to obtain time-uniform inequalities is an old idea that can be traced back to \cite{Ville39} and all the concentration results quoted above also rely on martingales. We refer the reader to the recent survey of \citet{Howard20Survey} who study in great detail the power of elementary martingales for deriving time-uniform inequalities, yet without the particular focus on exponential families or multiple arms that we adopt here. 
Two important techniques based on martingales are the use of a peeling trick (see, e.g.\ \citealt{KLUCBJournal}) or the ``method of mixtures'' that has been popularized by \cite{DeLaPenaal04,DeLaPenaal09Book}, and is sometimes also referred to as the Laplace method \citep{Maillard19HDR}. We refer the reader to the discussion in Section~\ref{subsec:MixtureMartingales} for examples of use of mixture martingales. Our results rely on new constructions of mixture martingales that are tailored for exponential families. Interestingly, we note that the result of \cite{Combes14Lip} is not based on mixture martingales: its proof relies on a peeling technique which requires the knowledge of $n$, and a stochastic dominance argument. Our proof technique based on mixture martingales is more flexible as it allows to easily bound deviations uniformly over the entire domain $t\in \N$, which is crucial for the analysis of sequential tests that involve random stopping.

\subsection{Applications to Sequential Learning}
In this section we give more context, review our contribution, and illustrate its advantage on a simple example.

\subsubsection{Related Work on Bandits} Stochastic multi-armed bandit models can be traced back to the work of \cite{Thompson33} motivated by clinical trials. They were later studied by \cite{Robbins52Freq,LaiRobbins85bandits} who introduced the regret minimization objective: the samples $X_1,\dots,X_t$ are seen as rewards and the goal is to find a sequential strategy to maximize the (expected) cumulated reward, which is equivalent to minimizing some notion of regret \cite[see e.g.][for surveys]{Bubeck:Survey12,BanditBook}.

In the meantime, pure-exploration problems in bandit models have also received increased attention \citep{EvenDaral06,Bubeckal11}. In this context, a common objective is to identify as quickly and accurately as possible the arm with the largest mean, relinquishing the incentive to maximize the sum of rewards. In the fixed-confidence setting, the minimal number of samples needed to identify the best arm with accuracy larger than $1-\delta$ when arms belong to a one-dimensional family has been identified by \cite{GK16}, in a regime of small values of $\delta$. Their Track-and-Stop algorithm is shown to asymptotically match this optimal sample complexity. Extensions of this best arm identification problem in which one should answer quickly and accurately some more general query about the means of the arms have also been studied \citep{HuangASM17,ChenGLQW17}. Prototypical queries beyond Best Arm include Top-$M$ \citep{Shivaram:al10}, Thresholding \citep{thresholding}, Minimum Threshold \citep{kaufmann2018sequential}, Combinatorial Bandits \citep{Chen14ComBAI}, pure-strategy Nash equilibria \citep{pmlr-v70-zhou17b} or Monte-Carlo Tree Search \citep{Teraoka14MCTS}. 
We note that Track-and-Stop has recently been generalized by \cite{Juneja19} to a generic ``partition identification'' problem similar to the one that we consider in Section~\ref{sec:generic.solved}, while \cite{multiple.answers} have studied its extension to queries with multiple correct answers.
Finally, recent research has also focused on developing alternatives to Track-and-Stop that are more efficient numerically, like \cite{purex.games} who develop algorithms based on iterative saddle point solving.

\subsubsection{Our Contributions} The first impact of our concentration results is that they permit to analyse new stopping rules based on Generalized Likelihood Ratios, which extend the stopping rule originally proposed for Track-and-Stop \citep{GK16} to generic sequential identification problems. Our generic stopping rule is presented in Section~\ref{sec:generic.solved}, in which we further show that under some assumptions on the identification problem itself, such a stopping rule combined with a suitable sampling rule is (asymptotically) optimal in terms of sample complexity. We then provide in Section~\ref{sec:stop.for.test} refined stopping criteria for some particular tests that replace the number of arms $K$ in the threshold by a new notion of rank.

Next, we explain in Section~\ref{sec:ProjectedCI} how our deviation inequalities can be used to build tight confidence regions on (functions of) the unknown parameter $\bm\mu$. Indeed, the sum form of the left-hand quantity in \eqref{eq:stylised} allows us to build confidence regions that exclude the configuration of all (many) empirical estimates $\hat \mu_a(t)$ being far from their means $\mu_a$ simultaneously. We show how this effect yields improved confidence intervals for functions of the mean $\vmu$ in the cases of linear functions and minima.
In concrete examples, we can quantify the benefit precisely.

\subsubsection{Illustration of the Benefit of \eqref{eq:stylised} on a Simple Example} A common task in sequential learning is to construct a confidence interval on the difference $\mu_1-\mu_2$ in mean between two arms, for example to decide whether $\mu_1$ can plausibly be higher than $\mu_2$ in a best arm identification scenario. We now quantify the benefit of using the self-normalized sum \eqref{eq:stylised} compared to the classical approach of combining per-arm intervals using the union bound, with an illustration provided in Figure~\ref{fig:stylised}.

For maximum interpretability, we instantiate \eqref{eq:stylised} for Gaussian arms with variance 1 (so that $d(x,y) = (x-y)^2/2$), we ignore the $\ln\ln$ terms, and we approximate $K \mathcal C(\ln \tfrac{1}{\delta}/{K}) \approx \ln \tfrac{1}{\delta}$. Then if we follow the classical per-arm approach, we obtain a confidence interval on $\mu_a$ for each arm $a$ separately using \eqref{eq:stylised} (which now reduces to the standard Chernoff bound), combine these into a rectangular confidence region on the pair $(\mu_1, \mu_2)$ using the union bound over arms (called ``Box'' in Figure~\ref{fig:stylised}), and work out what we know about the difference $\mu_1-\mu_2$ by projecting. Doing so, we obtain a confidence interval on $\mu_1 - \mu_2$ that has diameter $\sqrt{8  \ln \frac{2}{\delta}} \del*{\sqrt{\frac{1}{N_a(t)}} + \sqrt\frac{1}{N_b(t)}}$. In contrast, the self-normalised sum of $2$ arms directly provides a confidence ellipse on the pair $(\mu_1, \mu_2)$ (called ``Sum'' in Figure~\ref{fig:stylised}), and projecting that to the difference $\mu_1-\mu_2$ yields a tighter interval of diameter $
   \sqrt{8  \ln \frac{1}{\delta} \left(\frac{1}{N_a(t)}+\frac{1}{N_b(t)}\right)}
   $. The advantage of the second approach can be up to a factor $\sqrt{2}$, which occurs for equal sample sizes $N_a(t)=N_b(t)$. In typical adaptive stopping problems, a reduction by $\sqrt{2}$ in confidence width leads to an improvement by a factor $2$ of the sample complexity. 
   
   In Section~\ref{sec:projected.linear}, we quantify the obtained improvement for the more general task of building a confidence interval on a linear function $\v^\top \vmu$ of the means $\vmu \in \mathbb R^K$, which can be as large as $\sqrt{K}$.


\begin{figure}[h]
  \centering
  \subfloat[Confidence region for $\vmu$]{
    \includegraphics[width=.5\textwidth]{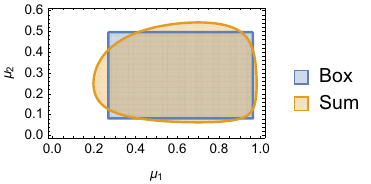}
  }
  \subfloat[Confidence interval for the difference $\mu_1-\mu_2$ obtained by projecting confidence regions for $\vmu$. The dashed grey help lines connect points of equal difference $\mu_1-\mu_2$. The largest and smallest values for the difference are obtained by squeezing the confidence interval between diagonal tangents (solid lines). We see that the confidence width, which is the distance between the intercepts, is strictly larger for Box than for Sum: the rounded nature of Sum provides tighter control on the difference.
  ]{
    \includegraphics[width=.5\textwidth]{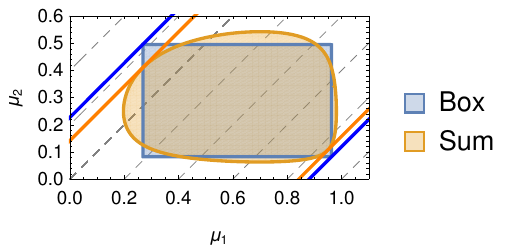}
  }
  \caption{%
    Visual two-arm comparison of confidence regions for $\vmu$ and the implied confidence interval for the difference $\mu_1-\mu_2$. A union bound over traditional per-arm confidence intervals gives the ``Box'' region. Our new bound \eqref{eq:stylised} results in a confidence region of the egg-shape marked ``Sum''.
}\label{fig:stylised}
\end{figure}




\section{Martingales and Deviation Inequalities for Exponential Family Bandit Models} \label{sec:general.deviations}

In this section, we formally introduce the stochastic processes for which we want to obtain deviation inequalities. We then present a general method for obtaining deviation inequalities for any such stochastic process. It relies on the crucial assumption that one can find martingales multiplicatively dominating exponential transforms of the process. We further introduce the general class of martingales that we shall exhibit in order to obtain the particular deviation results of this paper, namely mixture martingales.

\subsection{Exponential Family Bandit Models}

A one-parameter canonical exponential family is a class $\cP$ of probability distributions characterized by a set $\Theta \subset \R$ of natural parameters, a strictly convex and twice-differentiable function $b : \Theta \rightarrow \R$ (called the log-partition function) and a reference measure $m$. It is defined as
\[\cP = \left\{ \nu_{\theta} , \theta \in \Theta : \text{$\nu_{\theta}$ has density $f_\theta(x) = e^{x\theta - b(\theta)}$ with respect to $m$}\right\}.\]
Example of exponential families include the set of Bernoulli distribution, Poisson distributions, Gaussian distribution with known variance or Gamma distributions with known shape parameter. For any exponential family  $\cP$ it can be shown that the mean $\mu(\theta)$ of the distribution $\nu_\theta$ satisfies $\mu(\theta) = \dot{b}(\theta)$. Observe that $\mu$ is a strictly increasing function of the natural parameter $\theta$, hence distributions in $\cP$ can be alternatively parameterized by their means.

We adopt this parameterization in this paper. Letting $\cI := \dot{b}(\Theta)$ be the set of possible mean parameters, for all $\mu \in \cI$ we define $\nu^\mu$ to be the distribution in $\cP$ that has mean $\mu$. We also define the Kullback-Leibler divergence between two distributions in $\cP$ as a function of their means by   

\[d(\mu,\mu') := \text{KL}\left(\nu^\mu,\nu^{\mu'}\right) = \int \ln \frac{f_{\dot{b}^{-1}(\mu)}(x)}{f_{\dot{b}^{-1}(\mu')}(x)} f_{\dot{b}^{-1}(\mu)}(x) \dif m(x).\]
This divergence function has a closed form expression in the classical exponential families mentioned above. For example for Gaussian distribution with variance $\sigma^2$ one has $d(\mu,\mu') = (\mu-\mu')^2/(2\sigma^2)$ and for Bernoulli distributions $d(\mu,\mu') = \mu \ln(\mu/\mu') + (1-\mu)\ln((1-\mu)/(1-\mu'))$. Further examples can be found in \cite{KLUCBJournal}.

An exponential family bandit model is a sequence of $K$ probability distributions $\nu^{\mu_1},\dots,\nu^{\mu_K}$ that belong to some one-dimensional canonical exponential family $\cP$: it can be fully parametrized by the vector of means $\bm\mu = (\mu_1,\dots,\mu_K) \in \cI^K$. In a bandit model, data is collected sequentially: an arm $A_t$ is selected at round $t$ and a sample $X_t$ from the distribution $\nu^{\mu_{A_t}}$ is observed. We denote by $N_a(t) = \sum_{s=1}^t \ind_{(A_s = a)}$ the number of selections of arm $a$ in the first $t$ rounds and $S_a(t) = \sum_{s=1}^t X_s \ind_{(A_s = a)}$ the sum of these observations. The empirical mean of the observations obtained from arm $a$ up to round $t$ is therefore defined as $\hat{\mu}_a(t) = S_a(t) / N_a(t)$ once $N_a(t) \neq 0$. We let $\cF_t = \sigma(A_1,X_1,\dots,A_t,X_t)$ be the filtration generated by the observations gathered within the first $t$ rounds and assume the sampling rule is such that $A_t$ is mesurable with respect to $\sigma(\cF_{t-1},U_t)$ where $U_t$ is a uniform random variable that is independent from $\cF_{t-1}$ (allowing randomized algorithms).

In this paper, our objective is to prove \emph{time-uniform} deviation inequalities for sums involving the terms $N_a(t)d(\hat{\mu}_a(t),\mu_a)$ (or some one-sided versions of these). The price for uniformity in time will be some $\ln\ln(N_a(t))$ term and we shall for example obtain deviation inequalities for sums of the entries of a stochastic process $\bm X(t) = \{X_a(t)\}_{a=1}^K$ of the form
\begin{equation}\label{eq:Xform}
  X_a(t) = N_a(t) d(\hat{\mu}_a(t),\mu_a) - c\ln(d + \ln N_a(t))
\end{equation}
for some constants $c$ and $d$. We now describe a general method to obtain time-uniform deviation inequalities for \emph{any} arm-dependent stochastic process $\bm X(t)$.

\subsection{A General Method for Obtaining Deviation Inequalities}

Let $\bm X(t) = \{X_a(t)\}_{a=1}^K$ be a stochastic process indexed by arms. Here we introduce a central assumption under which it is easy to obtain deviation inequalities for sums of the entries of $\bm X(t)$ by combining Ville's inequality for martingales with the Cramér-Chernoff method. For this reason, we call such processes $g$-VCC (in reference to the Ville-Cramér-Chernoff trio). We will also follow \cite{shafer2011test} in calling any non-negative martingale $M(t) \ge 0$ of unit initial value $M(0) = 1$ a \emph{test martingale}.


\begin{definition}\label{def:Central}
  Let $g : \Lambda \rightarrow \R$ be a function defined on a non-empty interval $\Lambda \subseteq \R$. A stochastic process $\bm X(t) = \{X_a(t)\}_{a=1}^K$ is called \emph{$g$-VCC} if it satisfies the following properties.
\begin{enumerate}
\item For any arm $a$ and $\lambda \in \Lambda$ there exists a test martingale  $M^{\lambda}_a(t)$ such that
 \begin{equation}\label{MartingaleTrick}\tag{$*$}
    \forall t \in \N, \ \ M^\lambda_a(t) \geq e^{\lambda X_a(t)  - g(\lambda)}.
  \end{equation}

\item\label{it:product}
  For any subset $\cS \subseteq \{1,\dots,K\}$ and for any $\lambda \in \Lambda$, the product $\prod_{a \in \cS} M^\lambda_a(t)$ is a martingale.
 \end{enumerate}

\end{definition}

We note that the independent work of \cite{Howard20Survey} also presents a general method based on the Cram\'er-Chernoff method to derive time-uniform concentration inequalities. The authors propose deviation inequalities for a two-dimensional stochastic processes $(S_t,V_t)$ under an assumption that bears similarities with $(*)$: $\exp(\lambda S_t - \phi(\lambda) V_t)$ has to be upper bounded by a martingale, for a known function $\phi$ and for all $\lambda$ in a certain range. Yet the proposed applications of these two general methods differ, in particular there is no emphasis on measuring deviations for multiple arms in the work of \cite{Howard20Survey}. 


\begin{remark}\label{rem:Ideal}
  To calibrate what to expect for $g$, we can use knowledge of the asymptotic distribution of the $X_a(t)$ given in \eqref{eq:Xform}. In our applications, Wilks' phenomenon \citep[see][Chapter~17]{DeLaPenaal09Book} tells us that $2 X_a(t)$ is asymptotically (for $N_a(t) \to \infty$) $\chi^2$ distributed when $c=0$ in \eqref{eq:Xform}. For $2 Y \sim \chi^2$, we have $\bE \sbr{e^{\lambda Y}} = (1-\lambda)^{-1/2}$. This strongly suggests (and this is what we will find) that $g(\lambda)$ should be at least $\frac{1}{2} \ln (1-\lambda)$, plus a mild additional cost for uniformity in time. For this reason we will refer to $g_{\chi^2}(\lambda) = \frac{1}{2} \ln (1-\lambda)$ as the ``ideal function''.
\end{remark}

For a $g$-VCC stochastic process $\bm X(t) = \{X_a(t)\}_{a=1}^K$, we provide a general deviation inequality for the sum of the entries $X_a(t)$ over any subset of arms. The threshold is related to the function $g$ through the following quantities.

\begin{definition} \label{def:Cg} For $g : \Lambda \rightarrow \R^+$, we define for all $x>0$, 
\begin{eqnarray*}
   C^g(x)
  &:=&
  \min_{\lambda \in \Lambda}~
  \frac{g(\lambda) + x}{\lambda}.
\end{eqnarray*}
We also define the convex conjugate of $g$, $ g^*(x)  :=  \max_{\lambda \in \Lambda}~ \left(\lambda x - g(\lambda)\right)$.  
\end{definition}

With these functions in hand, we can now state our $g$-VCC deviation inequality.

\begin{lemma} \label{lem:OneSubset} Fix $\cS \subseteq \{1,\dots,K\}$. Let  $\bm X(t) = \{X_a(t)\}_{a=1}^K$ be a $g$-VCC stochastic process.  Then
\begin{eqnarray*}
  \forall x > 0,&& \ \ \bP \left(\exists t \in \N : \sum_{a\in \cS} X_a(t) \ge |\cS| C^g\left(\frac{x}{|\cS|}\right)\right)  \leq  e^{-x}, \\
  \forall u > 0,&& \ \ \bP\left(\exists t \in \N : \sum_{a\in \cS} X_a(t) > u\right) \leq \exp\left(-|\cS|g^*\left(\frac{u}{|\cS|}\right)\right). 
\end{eqnarray*}
\end{lemma}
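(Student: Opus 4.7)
The plan is to apply the classical Cram\'er--Chernoff method in its time-uniform (Doob--Ville) form, separately under each tilting parameter $\lambda \in \Lambda$, and then to optimize over $\lambda$. By property~\ref{it:product} of Definition~\ref{def:Central}, for each $\lambda$ the product $M^\lambda_\cS(t) := \prod_{a \in \cS} M^\lambda_a(t)$ is a non-negative martingale with $M^\lambda_\cS(0)=1$, so Ville's maximal inequality gives $\bP\bigl(\exists t : M^\lambda_\cS(t) \ge c\bigr) \le 1/c$ for every $c>0$. Multiplying the per-arm lower bounds \eqref{MartingaleTrick} across $a\in\cS$ then supplies the key pointwise estimate $M^\lambda_\cS(t) \ge \exp\bigl(\lambda \sum_{a\in\cS} X_a(t) - |\cS| g(\lambda)\bigr)$.

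I would first dispatch the Chernoff-style bound (second inequality). Fix $u>0$ and $\lambda>0$ in $\Lambda$. On the event $\{\sum_{a\in\cS} X_a(t) > u\}$ the pointwise estimate forces $M^\lambda_\cS(t) > \exp(\lambda u - |\cS| g(\lambda))$, so Ville yields $\bP\bigl(\exists t : \sum_{a\in\cS} X_a(t) > u\bigr) \le \exp(|\cS| g(\lambda) - \lambda u)$. This holds for every admissible $\lambda$, so I minimize the right-hand side; the scaling identity $\sup_\lambda[\lambda u - |\cS| g(\lambda)] = |\cS| \sup_\lambda[\lambda (u/|\cS|) - g(\lambda)] = |\cS| g^*(u/|\cS|)$ then produces the announced exponent.

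For the threshold-form first inequality I simply invert this computation. Given $x>0$ and $\lambda>0$, set $T_\lambda := (|\cS|g(\lambda) + x)/\lambda$, chosen so that $\lambda T_\lambda - |\cS| g(\lambda) = x$. Repeating the argument with $u = T_\lambda$ (this time using Ville in the form $\bP(\exists t : M \ge e^x) \le e^{-x}$, which matches the closed inequality $\ge T_\lambda$) gives $\bP\bigl(\exists t : \sum_{a\in\cS} X_a(t) \ge T_\lambda\bigr) \le e^{-x}$. Selecting the $\lambda$ achieving $|\cS| C^g(x/|\cS|) = \inf_\lambda T_\lambda$ yields the stated bound.

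I do not foresee a substantial obstacle: the whole argument is Doob/Ville plus a Cram\'er--Chernoff optimization applied to the single product martingale $M^\lambda_\cS(t)$ supplied by the $g$-\Name{} assumption. The only mildly delicate point is the case in which the defining infimum of $C^g$ is not attained at a single $\lambda$; this is handled by approximation, picking $\lambda_n$ with $T_{\lambda_n} \downarrow |\cS| C^g(x/|\cS|)$ and passing to the limit by continuity of $\bP$ along the resulting increasing sequence of events.
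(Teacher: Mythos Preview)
Your proposal is correct and follows essentially the same route as the paper's proof: multiply the per-arm lower bounds \eqref{MartingaleTrick} into a single test martingale, apply Doob/Ville's maximal inequality to it, and then optimize over $\lambda$ to obtain each of the two displayed forms. Your added remarks about the strict versus non-strict inequality and the possible non-attainment of the infimum are more careful than the paper, which simply writes $\min$ in the definition of $C^g$ and does not discuss this point.
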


\begin{proof} Fix $\lambda \in \Lambda$. As $\bm X(t)$ is $g$-VCC (see Definition~\ref{def:Central}), we find
\begin{eqnarray*}
\bP\left(\exists t \in \N : \sum_{a\in \cS} X_a(t) > u\right) &=& \bP\left(\exists t \in \N : e^{\lambda \left[\sum_{a\in \cS} X_a(t)\right]} > e^{\lambda u}\right) \\
&\leq & \bP\left(\exists t \in \N : \prod_{a \in \cS} M_{a}^{\lambda}(t) > e^{\lambda u - |\cS|g(\lambda)}\right).
\end{eqnarray*}
As $\prod_{a \in \cS}M_{a}^{\lambda}(t)$ is a test martingale, it follows from Ville's inequality ($\bP(\exists t \in \N^* : M(t) \ge 1/x) \leq x$ for any non-negative super-martingale starting from $\bE[M(0)]=1$ and any $x \in (0,1]$, \citealt{Ville39}) that
\begin{equation}\bP\left(\exists t \in \N : \sum_{a\in \cS} X_a(t) > u\right) \leq  e^{-\left[\lambda u - |\cS|g(\lambda)\right]}\label{ToOptim1}\end{equation}
Equivalently, one can also establish that for all $x>0$, for all $\lambda \in \Lambda$, 
\begin{equation}\bP\left(\exists t \in \N : \sum_{a\in \cS} X_a(t) > \frac{|\cS|g(\lambda) + x}{\lambda}\right) \leq  e^{-x}\label{ToOptim2}\end{equation}
Picking the best possible $\lambda$ in \eqref{ToOptim2} yields the first inequality in Lemma~\ref{lem:OneSubset} while picking the best possible $\lambda$ in \eqref{ToOptim1} yields the second inequality. 
\end{proof}

The deviation inequalities given in Lemma~\ref{lem:OneSubset} are either expressed in terms of the threshold function $C^g$ or in terms of the convex conjugate $g^*$. Depending on $g$, one of these two quantities might be easier to compute that the other one. Note that if $g^*$ is well-behaved, the threshold function can be obtained by inverting $g^*$, as stated below.

\begin{proposition} Assume $g^*$ is increasing. For all $u \in g^*(\R^+)$, $C^g(u) = (g^*)^{-1}(u)$.  
\end{proposition}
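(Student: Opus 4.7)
The plan is to prove both inequalities $C^g(u) \le (g^*)^{-1}(u)$ and $C^g(u) \ge (g^*)^{-1}(u)$ by directly unfolding the two definitions and using that the convex conjugate $g^*$ is a Legendre-type transform. Since $g^*$ is increasing, for any $u$ in its range there is a unique $x^\star := (g^*)^{-1}(u) > 0$ with $u = g^*(x^\star)$, and the claim is equivalent to $C^g(u) = x^\star$. Throughout the argument I would work with $\lambda > 0$, as the definition of $C^g$ implicitly requires this (the expression $\tfrac{g(\lambda) + x}{\lambda}$ enters the proof of Lemma~\ref{lem:OneSubset} via Markov's inequality after multiplying by $\lambda$, which assumes $\lambda > 0$).

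For the lower bound $C^g(u) \ge x^\star$, I would start from the definition of $g^*$: for every $\lambda \in \Lambda$,
\[
u \;=\; g^*(x^\star) \;\ge\; \lambda x^\star - g(\lambda).
\]
Rearranging (using $\lambda > 0$) yields $x^\star \le \tfrac{g(\lambda) + u}{\lambda}$, and taking the infimum over $\lambda \in \Lambda$ on the right-hand side gives $x^\star \le C^g(u)$. This half is pure algebra and uses only the variational definition.

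For the upper bound $C^g(u) \le x^\star$, I would argue that the supremum defining $g^*(x^\star)$ is attained (or arbitrarily well approximated) at some $\lambda^\star \in \Lambda$ with $\lambda^\star > 0$, so that
\[
u \;=\; g^*(x^\star) \;=\; \lambda^\star x^\star - g(\lambda^\star),
\qquad\text{hence}\qquad
x^\star \;=\; \frac{g(\lambda^\star) + u}{\lambda^\star} \;\ge\; C^g(u).
\]
In the regime where the sup is not attained, I would pass to a maximizing sequence $\lambda_n$ and take a limit; in the applications of this paper $g$ is convex and $\Lambda$ is an interval, so an interior maximizer exists whenever $x^\star$ is in the interior of the subdifferential range.

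The main obstacle is exactly this attainment issue for the upper bound: a priori $g^*$ is defined as a supremum over a possibly open interval $\Lambda$, so one must justify that the supremum is achieved (or handle boundary behaviour via a limit) to rewrite $x^\star$ as $(g(\lambda^\star)+u)/\lambda^\star$. Everything else is a short manipulation of the two variational definitions, and the result is essentially a restatement of Fenchel duality in the form $(g^*)^{-1} = C^g$ tailored to the one-sided parameterisation used here.
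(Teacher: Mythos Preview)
Your proof is correct and follows essentially the same route as the paper, which phrases the two halves as the compositional inequalities $g^*(C^g(x)) \ge x$ and $C^g(g^*(x)) \le x$ before applying the increasing inverse $(g^*)^{-1}$. Your explicit handling of the attainment issue for the upper bound is in fact more careful than the paper's, which simply writes $\min$ and $\max$ in the definitions and declares the two inequalities ``easy to check''.
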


\begin{proof} As $g^*$ is increasing on $\R^+$, the inverse function $(g^*)^{-1}$ is well defined on $\cI := g^*(\R^+)$. From the definitions of $C^g$ and $g^*$, it is easy to check that 
\[\forall x > 0, \ \ g^*(C^g(x)) \geq x \ \ \text{and} \ \ C^g(g^*(x)) \leq x.\]
These two inequalities respectively yield that for all $u \in \cI$, $(g^*)^{-1}(u) \leq C^g(u)$ and $C^g(u) \leq (g^*)^{-1}(u)$, which concludes the proof.  
\end{proof}


%

\subsection{Mixture Martingales}\label{subsec:MixtureMartingales}

Introducing the cumulant generating function $\phi_{\mu}(\eta) : = \ln \bE_{X \sim \nu_{\mu}}\left[e^{\eta X}\right]$ for all $\mu \in \cI$, it holds for all $\eta \in \R$ that
\begin{equation}\label{eq:Z.a.eta}
  Z_a^\eta(t) ~\df~ \exp\left(\eta S_a(t) - \phi_{\mu_a}(\eta)N_a(t) \right)
\end{equation}
is a test martingale with respect to the filtration $\cF_t$, for any sampling rule. Indeed, when $A_t = a$ we have
$
  \ex\sbrc*{Z_a^\eta(t)}{A_t, \cF_{t-1}}
  =
  Z_a^\eta(t-1)
  \ex\sbrc*{
    e^{\eta X_t - \phi_{\mu_a}(\eta)}
  }{A_t, \cF_{t-1}}
=
Z_a^\eta(t-1)$, and the same trivially holds when $A_t \neq a$. So by the tower rule $\ex\sbrc*{Z_a^\eta(t)}{\cF_{t-1}} = \ex\sbrc*{\ex\sbrc*{Z_a^\eta(t)}{A_t, \cF_{t-1}}}{\cF_{t-1}} = Z_a^\eta(t-1)$. More generally, for any probability distribution $\pi$ on $\eta$, the \emph{mixture martingale}
\begin{equation}\label{MixtureMartingale}
  Z_{a}^{\pi}(t) ~\df~ \int Z_a^\eta(t) \dif\pi(\eta)
\end{equation}
is also a test martingale, as can be seen by applying Tonelli's theorem
\[
  \ex \sbrc*{Z_a^\pi(t)}{A_t,\cF_{t-1}}
  ~=~
  \int
  \underbrace{
    \ex \sbrc*{Z_a^\eta(t) }{A_t,\cF_{t-1}}
    }_{= Z_a^\eta(t-1)}
    d\pi(\eta)
  ~=~
  Z_a^\pi(t-1)
  .
\]
Finally, given a family of priors $\vpi = \set{\pi_a}_{a=1}^K$, the \emph{product martingale} $Z_\cS^{\vpi}(t) \df \prod_{a\in \cS} Z_a^{\pi_a}(t)$ is also a test martingale with respect to $\cF_t$, for any subset $\cS$. Namely, when $A_t \in \cS$ we have
\[
  \ex\sbrc*{Z_\cS^\vpi(t) }{A_t, \cF_{t-1}}
  ~=~
  Z_{\cS \setminus \set{A_t}}^\vpi(t-1)
  \underbrace{
    \ex\sbrc*{Z_{A_t}^{\pi_{A_t}}(t) }{A_t, \cF_{t-1}}
  }_{= Z_{A_t}^{\pi_{A_t}}(t-1)}
  ~=~
  Z_\cS^\vpi(t-1)
  ,
\]
and the same result holds trivially when $A_t \notin \cS$. The martingale property follows by the tower rule.
Hence, a sufficient condition to establish that a stochastic process $\bm X(t)$ is $g$-VCC is to exhibit for all $\lambda \in \Lambda$ a family of priors $\pi_{a,\lambda}$ such that $M_a^\lambda(t):=Z_a^{\pi_{a,\lambda}}(t)$ satisfies \eqref{MartingaleTrick}. This is how we proceed in the next sections. 

\subsubsection{Example of Mixture Martingales} Among the first occurrence of such mixture martingales, one can mention the works of \cite{Darling68,Robbins70LIL} which consider the martingale $\int \exp\left(\eta S_t - \frac{\eta^2\sigma^2}{2}t \right) \dif\pi(\eta)$ where $S_t$ is a sum of $t$ i.i.d.\ standard Gaussian random variables and $\pi$ is a Gaussian prior. This martingale coincides with our $Z_a^\pi(t)$ for a single standard Gaussian arm $a$. The choice of Gaussian prior $\pi$ results in a threshold growing like $\sqrt{t \ln t}$. We use different priors, which asymptote at $\eta=0$, to obtain a deviation inequality for $S_t$ that is uniform in time and compatible with the Law of the Iterated Logarithm: $S_t$ is compared to a threshold that grows like $\sqrt{2t\ln\ln(t)}$.

More broadly, the term mixture martingale can refer to any martingale of the form $\int M^{\eta}(t) d\pi(\eta)$ where $M^{\eta}(t)$ is some martingale (not necessarily $Z_a(t)$) and $\pi$ is some probability distribution (that we call the prior). For example the likelihood ratio martingales introduces by \cite{Lai76} are of this form. Mixture martingale constructions are also at the heart of the game-theoretic approach to probability \citep{philipdawid1999}. The ``method of mixtures'' has then been popularized by \cite{DeLaPenaal04,DeLaPenaal09Book} who use it to prove self-normalized deviation inequalities for general stochastic processes. Examples of its use include the work of \cite{YadLinear11} who propose a self-normalized deviation inequality for a vector-valued martingale applied to the linear bandit problem or that of \cite{Balsubramani15} who derive time-uniform Hoeffding or Bernstein deviation inequalities. Most of these works present mixture martingales with specific choices of continuous priors for which the corresponding mixture can be either computed in closed form or well-approximated. In this paper, we will rely on priors constructed in hierarchical fashion from discrete and continuous ingredients with the goal of obtaining explicit near-optimal thresholds.

\section{New Deviation Inequalities for Exponential Families}\label{sec:expfam.case}

In this section, we first provide a general deviation result that holds for any one-dimensional exponential family and can also accommodate \emph{one-sided deviations} (Theorem~\ref{thm:DevExpo}). Next, we present in Section~\ref{subsec:Improved} tighter deviation inequalities that measure two-sided deviations for the special cases of Gaussian and Gamma distributions. The two sets of results rely on proving that a stochastic process is $g$-VCC for certain functions $g$, which we do by constructing appropriate mixture martingales based on hierarchical priors in Section~\ref{proof:DevExpo}.

\subsection{Main Result}

To state our result, we introduce one-sided versions of the Kullback-Leibler divergence, namely $d^+(u,v)=d(u,v)\ind_{(u \leq v)}$ and $d^-(u,v) = d(u,v)\ind_{(u \ge v)}$. We further introduce the notation
\begin{eqnarray*}
Y_a(t) &:=& [N_a(t)d(\hat{\mu}_a(t),\mu_a) - 3\ln (1+\ln(N_a(t))]^+ \\
Y_a^-(t) &:=& [N_a(t)d^-(\hat{\mu}_a(t),\mu_a) -3\ln (1+\ln(N_a(t))]^+ \\
Y_a^+(t) &:=& [N_a(t)d^+(\hat{\mu}_a(t),\mu_a) - 3\ln (1+\ln(N_a(t))]^+
\end{eqnarray*}
and let $\bm X(t) = \{X_a(t)\}_{a=1}^K$ be a stochastic process such that, for all $a$, either $\forall t, X_a(t)=Y_a(t)$ (for two-sided deviations) or $\forall t, X_a(t) = Y_a^+(t)$ or $\forall t, X_a(t) = Y_a^-(t)$ (for one-sided deviations).

\begin{remark}
  We use as our correction function $3\ln(1+ \ln N_a(t))$, which is vacuous when $N_a(t) = 0$ because $\ln N_a(t) = - \infty$. Most algorithms for bandits avoid considering this situation, and start by pulling all arms once. In some scenarios, especially with many arms, it may be desirable to include the case $N_a(t) = 0$. There is no essential bottleneck, and one could adjust the analysis to, for example, replace it by $3\ln (1+ \ln (1+N_a(t)))$.
\end{remark}


 We provide in Theorem~\ref{thm:DevExpo} below a new self-normalized deviation inequality featuring a calibration function $\cC_{\text{exp}}$. To give the expression of $\cC_{\text{exp}}$, we need to introduce two functions. First for $u \ge 1$ the function $h(u) = u - \ln u$ and its inverse $h^{-1}(u)$. Secondly, the function defined for any $z \in [1,e]$ and $x \ge 0$ by 
\begin{equation}\label{eq:tilde.h}
\tilde h_z(x)
~=~
\begin{cases}
  e^{1/h^{-1}(x)} h^{-1}(x) & \text{if $x \ge h(1/\ln z)$,}
  \\
  z (x-\ln \ln z)
  & \text{otherwise.}
\end{cases}
\end{equation}

%
\begin{theorem}\label{thm:DevExpo}  Let $\cC_{\emph{exp}} : \R^+ \rightarrow \R^+$ be the function defined by
  \begin{equation}\label{eq:T}
    \cC_{\emph{exp}}(x) ~=~ 2 \tilde h_{3/2}\left(\frac{h^{-1}(1+x) + \ln(2\zeta(2))}{2}\right)
  \end{equation}
  where $\zeta(s) = \sum_{n=1}^\infty n^{-s}$. For $\cS$ a subset of arms and $x>0$,
\[\bP\left( \exists t \in \N: \sum_{a \in \cS} X_a(t) \geq |\cS|\cC_{\emph{exp}}\left(\frac{x}{|\cS|} \right)\right) \leq e^{-x}.\]
Moreover, if $X_a(t)$ measures only one-sided deviation (that is for all $a$, $X_a = Y_a^+$ or $X_a = Y_a^-$), the calibration function can be replaced by the smaller $\tilde{\cC}_{\emph{exp}}(x) ~=~ 2 \tilde h_{3/2}\left(\frac{h^{-1}(1+x) + \ln(\zeta(2))}{2}\right)$.
\end{theorem}

 As can be seen in the proof given in Section~\ref{proof:DevExpo}, this result follows by exhibiting \emph{a family of functions} $g_{\xi}$ such that $\bm X(t)$ is $g_\xi$-VCC, applying Lemma~\ref{lem:OneSubset} and then optimizing the parameters to obtain the best possible calibration function. 
Proposition~\ref{prop:Lambert} below (proved in Appendix~\ref{proof:Lambert}) gives a tight bound on the inverse function $h^{-1}$, which yields an upper bound on the calibration function $\cC_{\text{exp}}$. On can easily see that $\cC_{\text{exp}}(x) \sim x$ when $x$ tends to infinity. For $x \geq 5$, a good approximation of the threshold is $\cC_{\text{exp}}(x) \simeq x + 4\ln(1 + x + \sqrt{2x})$, which is slightly larger than the approximation $\simeq x + \ln(x)$ that is added for comparison to Figure~\ref{fig:CompThres}. 

\begin{proposition} \label{prop:Lambert} The function $h$ is increasing on $[1,+\infty[$ and its inverse function, defined on $[1,+\infty[$, satisfies $h^{-1}(x) = -W_{-1}(-e^{-x})$ with $W_{-1}$ the negative branch of the Lambert function. Moreover,
 \[\forall x \geq 1, \ \ h^{-1}(x) \leq x + \ln(x + \sqrt{2(x-1)}).\]
\end{proposition}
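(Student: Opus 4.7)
The plan is to split into two tasks: identifying $h^{-1}$ via the Lambert function, and then establishing the explicit upper bound by reducing it to an elementary one-variable inequality.

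First I would verify the monotonicity claim by computing $h'(u) = 1 - 1/u$, which is $\geq 0$ on $[1,\infty)$ and strictly positive for $u>1$. Together with $h(1)=1$ and $h(u)\to\infty$, this gives that $h:[1,\infty)\to[1,\infty)$ is a bijection. For the Lambert representation, setting $y=h^{-1}(x)$ so that $x = y - \ln y$, I would rearrange to $y e^{-y} = e^{-x}$, that is $(-y)e^{-y} = -e^{-x}$. Since $y\geq 1$ forces $-y\leq -1$, the relevant branch of the inverse of $z\mapsto z e^z$ is $W_{-1}$, giving $-y = W_{-1}(-e^{-x})$ and hence $h^{-1}(x) = -W_{-1}(-e^{-x})$.

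For the quantitative bound, let $y=h^{-1}(x)$, so $\ln y = y - x$. Because $\ln$ is monotone, the target inequality $h^{-1}(x)\leq x + \ln(x+\sqrt{2(x-1)})$ is equivalent to $\ln y \leq \ln(x+\sqrt{2(x-1)})$, i.e.\ $y - x \leq \sqrt{2(x-1)}$. Substituting $x = y - \ln y$ on the right turns this into $\ln y \leq \sqrt{2(y-\ln y - 1)}$, which on squaring becomes the purely one-variable inequality
\[
(\ln y)^2 + 2\ln y + 2 \;\leq\; 2y \qquad \text{for all } y\geq 1.
\]
The whole argument now reduces to proving this.

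Finally I would establish the elementary inequality by studying $f(y) = 2y - (\ln y)^2 - 2\ln y - 2$. Direct computation gives $f(1)=0$, $f'(y)=2 - 2(\ln y + 1)/y$ with $f'(1)=0$, and $f''(y) = 2\ln y / y^2 \geq 0$ on $[1,\infty)$. Hence $f'$ is nondecreasing from $f'(1)=0$, so $f'\geq 0$ on $[1,\infty)$, so $f\geq f(1)=0$ throughout, which is exactly the required inequality. The only real obstacle in the whole proof is spotting the right reformulation in the previous paragraph; once the problem is reduced to showing $(\ln y + 1)^2 + 1 \leq 2y$, it is routine convexity.
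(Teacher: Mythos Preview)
Your proof is correct. The monotonicity and Lambert-function parts are handled cleanly (the paper's appendix actually only addresses the inequality, taking the first claims as routine), and your reduction of the bound to $(\ln y+1)^2+1\le 2y$ via the substitution $\ln y = y-x$ is valid; the second-derivative argument for $f(y)=2y-(\ln y)^2-2\ln y-2$ checks out.

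The paper takes a genuinely different route for the inequality. It first establishes the variational representation
\[
h^{-1}(x)=\inf_{z>1}\, z\Bigl(x-1+\ln\tfrac{z}{z-1}\Bigr),
\]
and then plugs in the particular feasible point $z=1+\bigl((x-1)+\sqrt{2(x-1)}\bigr)^{-1}$, for which $\tfrac{z}{z-1}=x+\sqrt{2(x-1)}$. This immediately gives an upper bound of the form $z\bigl(x-1+\ln(x+\sqrt{2(x-1)})\bigr)$, and a final elementary step using $\ln(x+\sqrt{2(x-1)})\le\sqrt{2(x-1)}$ yields the claimed bound. Your approach is more self-contained and elementary: it needs no dual representation and reduces everything to a single convexity computation. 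The paper's approach, on the other hand, makes transparent \emph{where} the bound comes from (a specific suboptimal $z$ in a minimisation), and the variational identity it uses is of the same flavour as the Legendre-type manipulations appearing elsewhere in the paper (e.g.\ Lemmas~\ref{lem:inner.value} and~\ref{lem:outer.value}). One minor presentational point: your phrase ``because $\ln$ is monotone'' is attached to the first rewriting, but that step is just the substitution $y-x=\ln y$; the monotonicity of $\ln$ is what justifies the next equivalence, $\ln y\le\ln(x+\sqrt{2(x-1)})\iff y\le x+\sqrt{2(x-1)}$.
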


%

\subsection{Refined Results for Gaussian and Gamma Distribution} \label{subsec:Improved}

For Gaussian and Gamma distributions, a different martingale construction, explained in detail in Appendix~\ref{sec:nicecase}, permits to establish the following results.

In a bandit model with Gaussian arms with means $\mu_a$ and known variance $\sigma^2$, the associated divergence is $d(\mu,\mu')= \frac{(\mu-\mu')^2}{2\sigma^2}$ and one can prove the following theorem.

\begin{theorem}\label{corr:Gaussian} In a Gaussian bandit model, introducing for all $a$ the process $X_a(t) = N_a(t) d(\hat{\mu}_a(t) , \mu_a) - 2 \ln (4 + \ln N_a(t))$, the stochastic process $\bm X(t)$ is $g_G$-VCC where
 \[\begin{array}{rcl}
g_{G} : ]1/2,1] & \longrightarrow & \R \\
     \lambda & \mapsto &
2\lambda  - 2\lambda  \ln \left(4\lambda \right) + \ln \zeta(2\lambda) - \frac{1}{2}\ln \left(1 - \lambda\right).
   \end{array}\]
Hence, letting $\cC_{\emph{G}} := C^{g_G}$, it follows from Lemma~\ref{lem:OneSubset} that for every subset $\cS$ and $x > 0$,
\[\bP \left(\exists t \in \N : \sum_{a\in \cS} \left[N_a(t)d\left(\hat{\mu}_a(t),\mu_a\right) - 2\ln(4+\ln N_a(t))\right] \geq |\cS| \cC_{\emph{G}}\left(\frac{x}{|\cS|}\right)\right)  \leq  e^{-x}.\]
\end{theorem}

In a bandit model with arms that are Gamma distributed with means $\mu_a$ and known shape parameter $\alpha$, the associated divergence is $d(\mu,\mu') = \alpha\left(\tfrac{\mu}{\mu'} - 1 - \ln \tfrac{\mu}{\mu'}\right)$ and one can prove the following theorem. 

\begin{theorem}\label{corr:Gamma} In a Gamma bandit model, introducing for all $a$ the process $X_a(t) = N_a(t) d(\hat{\mu}_a(t) , \mu_a) - 2 \ln (4 + \ln N_a(t))$, the stochastic process $\bm X(t)$ is $g_\Gamma$-VCC where
 \[\begin{array}{rcl}
g_{\Gamma} : ]1/2,1] & \longrightarrow & \R \\
     \lambda & \mapsto &
                         2 \lambda - 2\lambda  \ln \left(4 \lambda\right) + \ln \zeta(2\lambda) - \ln \left(1 - \lambda\right).
   \end{array}
 \]
Hence, letting $\cC_{\Gamma} := C^{g_{\Gamma}}$, it follows from Lemma~\ref{lem:OneSubset} that for every subset $\cS$ and $x > 0$,
\[\bP \left(\exists t \in \N : \sum_{a\in \cS} \left[N_a(t)d\left(\hat{\mu}_a(t),\mu_a\right) - 2\ln(4+\ln N_a(t))\right] \geq  |\cS| \cC_{\Gamma}\left(\frac{x}{|\cS|}\right)\right)  \leq  e^{-x}.\]
\end{theorem}

\subsection{Discussion}

The three deviation inequalities given Theorems~\ref{thm:DevExpo}, \ref{corr:Gaussian} and~\ref{corr:Gamma} all provide a control of the two-sided deviations of the empirical means from the true means, of the form
\[
 \bP\left(\exists t \in \N : \sum_{a\in \cS} N_a(t)d(\hat{\mu}_a(t),\mu_a) > \sum_{a\in \cS} c\ln(d+\ln(N_a(t))) + |\cS|\cC\left(\frac{x}{|\cS|}\right)\right) \leq  e^{-x}
\]
where $c$ and $d$ are two constants and $\cC(x)$ is a calibration function. For Gaussian or Gamma distributions one can use $c=2, d=4$ while $c=3,d=1$ apply for other one-dimensional exponential families. A more crucial difference is the calibration function $\cC$, which can be set to $\cC_{\text{G}}$ for Gaussian distributions, to $\cC_{\Gamma}$ for Gamma distributions and to $\cC_{\text{exp}}$ in general.

Those three calibration functions are hard to compare at first as they have no closed-form expressions. Equation \eqref{eq:T} provides an explicit expression for $\cC_{\text{exp}}$ but that still requires to numerically invert the function $h$, while $\cC_{\text{G}}$ and $\cC_{\Gamma}$ can be numerically approximated using Definition~\ref{def:Cg} which requires to minimize a convex function. 
In Figure~\ref{fig:CompThres} we compare those three thresholds to the ``ideal'' calibration function $C^{g_{\chi^2}}$ where $g_{\chi^2}(\lambda) = - \frac{1}{2}\ln(1-\lambda)$ (see Remark \ref{rem:Ideal}). We see that that this idealized calibration satisfies $ C^{g_{\chi^2}}(x) \simeq x + \ln(x)$ and that the calibration functions obtained for Gaussian and Gamma distributions are very close to it. The function $\cC_{\text{exp}}$ seems to be off by an additive term of order 10.

\begin{figure}[h]
\centering
\includegraphics[height = 6.5cm]{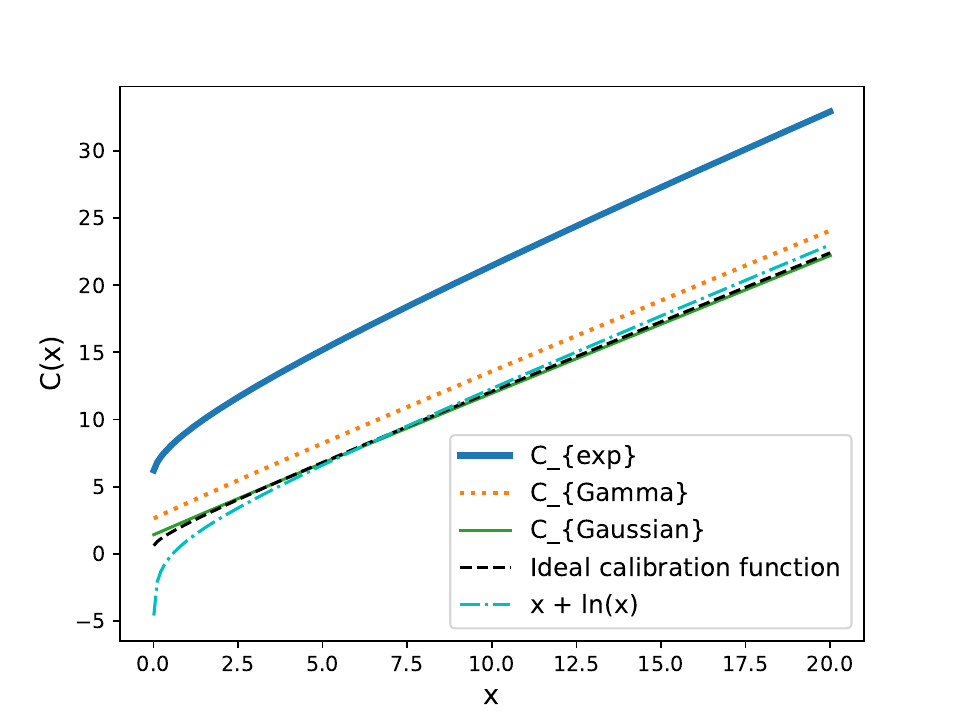} 
\caption{Several calibration functions $\cC(x)$ as a function of $x$. \label{fig:CompThres}}
\end{figure}

Despite this slightly larger calibration function, the general result of Theorem~\ref{thm:DevExpo} is interesting for the following reasons. First, obviously it covers more distributions like Bernoulli and Poisson distributions that are often relevant for applications of multi-armed bandits. Then, we noted that Theorem~\ref{thm:DevExpo} can be made tighter in case only one-sided deviations are measured (when $N_a(t)d^+(\hat{\mu}_a(t),\mu_a)$ or $N_a(t)d^-(\hat{\mu}_a(t),\mu_a)$ are used): $\cC_{\text{exp}}$ can be replaced by the slightly smaller threshold $\tilde{\cC}_{\text{exp}}$. In contrast, the construction presented in Appendix~\ref{sec:nicecase} cannot be easily adapted to obtain better results for one-sided deviations for Gaussian or Gamma distributions. Finally, the presence of the positive part in the definition of $Y_a(t)^{\pm}$ leads to the following improved result  holding uniformly over subsets:

\vspace{-0.3cm}

{\small
\[
 \bP\left(\exists t \in \N : \exists \cS' \subseteq \cS, \sum_{a\in \cS'} N_a(t)d^\pm(\hat{\mu}_a(t),\mu_a) > \sum_{a\in \cS'} 3\ln(1+\ln(N_a(t))) + |\cS|\cC_{\text{exp}}\left(\frac{x}{|\cS|}\right)\right) \leq  e^{-x}.
\]}

\subsubsection{Comparison to the State-of-the-Art} To the best of our knowledge, the only available result that controls deviations over multiple arms simultaneously is the one of \cite{Combes14Lip}. More precisely, Theorem~2 in \cite{Combes14Lip} can be rephrased as follows\footnote{The result only considers Bernoulli arms and $\cS = [K]$, but their analysis can be easily extended to cover the more general case of exponential families and any subset $\cS$}, introducing the function $\tilde f(u) = u - 2\ln(u)$ for $u \geq 2$:
\begin{equation}\bP\left(\exists t \leq n: \sum_{a \in \cS} N_a(t)d^+(\hat{\mu}_a(t),\mu_a) \geq  |\cS|\tilde f^{-1}\left( 1 + \ln\ln(n) + \frac{x+1}{|\cS|}\right) \right)  \leq  e^{-x}.\label{ineq:Combes}
\end{equation}
We note that here the deviations are uniform over a finite time range $\{1,\dots,n\}$. This style of deviation inequalities can also be deduced from Theorem~\ref{thm:DevExpo} for general exponential families:
\begin{equation}
\label{cor:BoundedTime}
\bP\left( \exists t \leq n: \ \ \sum_{a \in \cS} N_a(t) d^+(\hat{\mu}_a(t),\mu_a) \geq 3|\cS|\ln(1+\ln(n)) + |\cS|\cC_{\text{exp}}\left(\frac{x}{|\cS|} \right)\right) \leq e^{-x},
\end{equation}
or from Theorem~\ref{corr:Gaussian} and Theorem~\ref{corr:Gamma}:
\begin{equation}
\label{cor:BoundedTimeGG}
\bP\left( \exists t \leq n: \ \ \sum_{a \in \cS} N_a(t) d^+(\hat{\mu}_a(t),\mu_a) \geq 2|\cS|\ln(4+\ln(n)) + |\cS|\cC\left(\frac{x}{|\cS|} \right)\right) \leq e^{-x},
\end{equation}
where $\cC(x) =\cC_{\text{G}}(x)$ for Gaussian distributions and $\cC(x) = \cC_{\Gamma}(x)$ Gamma distributions. In Figure~\ref{fig:Combes}, we plot the thresholds (right-hand side of the deviation inequalities) featured in \eqref{ineq:Combes} and \eqref{cor:BoundedTime} for different values of $n$, $|\cS|$ and $x$, revealing that the threshold in \eqref{cor:BoundedTime} can be much smaller. 

\begin{figure}[h]
\begin{center}
\includegraphics[width=0.5\linewidth]{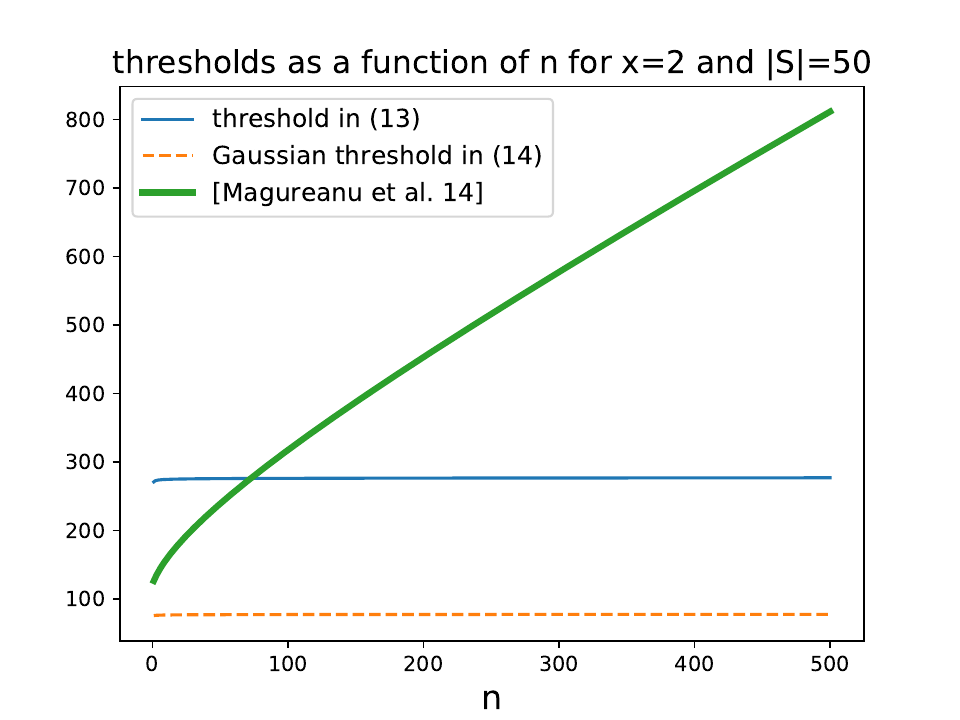}
\end{center}
\includegraphics[width=0.49\linewidth]{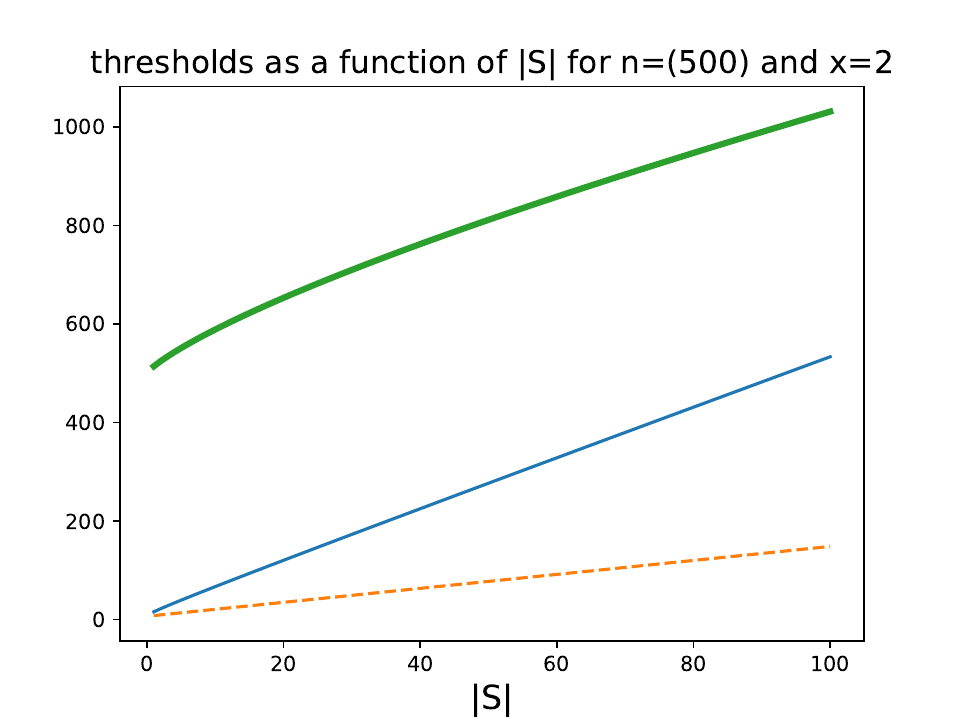}
\includegraphics[width=0.49\linewidth]{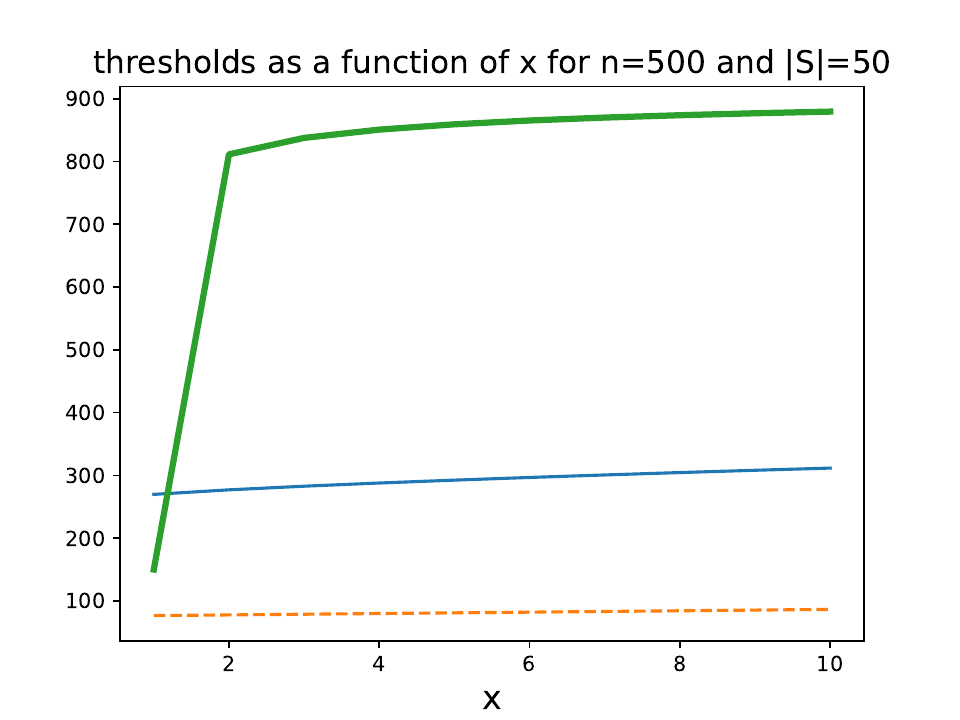}
\caption{Thresholds that follow from Theorem~\ref{thm:DevExpo} and Theorem~\ref{corr:Gaussian} compared to the threshold in \eqref{ineq:Combes} obtained by \cite{Combes14Lip}\label{fig:Combes}.}
\end{figure}


\subsubsection{Improved Result when $|\cS| = 1$} Theorem~\ref{thm:DevExpo} can be made slightly tighter for a subset of size 1 (see Appendix~\ref{proof:onearm}) and we obtain, with $\hat{\mu}_t$ the empirical mean of $t$ observations from a distribution with mean $\mu$ in an  exponential family,
\begin{equation}\bP\left(\exists t \in \N : t d\left(\hat{\mu}_t, \mu\right) \geq 3\ln(1+\ln(t)) + 2\tilde{h}_{3/2}\left(\frac{x + \ln(2\zeta(2))}{2}\right)\right) \leq e^{-x}.\label{dev:OneArm}\end{equation}
Albeit not our main focus, this result is interesting as it provides a deviation result in which the threshold features a $\log(\log(t))$ term, which is known to be unavoidable in the Gaussian case due to the Law of Iterated Logarithm. In the Gaussian (or sub-Gaussian) case, several results of this type already exist \citep{Robbins70LIL,Jamiesonal14LILUCB,JMLR15,ZhaoZSE16,Howard18Bernstein} and we do not claim ours to be the tightest in general. 
Beyond Gaussian distribution, \eqref{dev:OneArm} is the first result that controls deviations uniformly in $t\in \N$ for general exponential families and with a threshold scaling in $\log(\log(t))$. 

\subsection{Sketch of Proofs} \label{proof:DevExpo}

In this section, we provide a detailed proof of Theorem~\ref{thm:DevExpo}, leaving the proof of some intermediate lemmas to Appendix~\ref{details:Expo}. The proofs of Theorem~\ref{corr:Gaussian} and \ref{corr:Gamma} are given in Appendix~\ref{sec:nicecase}, but we provide below a high-level description of the martingale that is used for these results.

\paragraph{Proof of Theorem~\ref{thm:DevExpo}}

Fix $\xi > 0$ and define for all $\lambda \in [0, 1/(1+\xi))$,
\[ g_{\xi}(\lambda)  =  \lambda (1+\xi)\ln\left(C(\xi)\right) - \ln(1-\lambda(1+\xi)) \ \ \ \text{with} \ \ \ C(\xi) = \frac{2\zeta(2)}{(\ln(1+\xi))^2}\;.\]
The proof hinges on the fact that for the stochastic process $\bm X$, there exists a martingale satisfying~\eqref{MartingaleTrick}.

\begin{lemma}\label{thm:AssumptionExpo} For $\xi \in [0,1/2]$, $\bm X$ is $g_\xi$-VCC (see Definition~\ref{def:Central}).
\end{lemma}

As will be seen in the proof of Lemma~\ref{thm:AssumptionExpo}, given shortly, in case the stochastic process $\bm X$ only measures one-sided deviations, that is for all $a$ either $X_a(t) = Y_a^-(t)$ or $X_a(t) = Y_a^+(t)$, then $C(\xi)$ can be replaced by the smaller $C(\xi) = {\zeta(2)}/{(\ln(1+\xi))^2}$: the factor 2 that is removed corresponds to picking a one-sided versus a two-sided prior and leads to the slightly smaller threshold $\tilde{\cC}_{\text{exp}}$ given in the second statement of Theorem~\ref{thm:DevExpo}. 

Using Lemma~\ref{lem:OneSubset}, we obtain the following deviation inequality expressed with the function $C^{g_{\xi}}$ associated to $g_\xi$ (see Definition~\ref{def:Cg}): for all $\xi >0$, for all $x>0$,
\[\bP\left(\exists t \in \N : \sum_{a \in \cS} X_a(t) \geq |\cS| C^{g_\xi}\left(\frac{x}{|\cS|}\right)\right) \leq e^{-x}.\]
Recalling that $C^{g_\xi}(x) = \min_{\lambda \in [0,1/(1+\xi))} \frac{x + g_\xi(\lambda)}{\lambda}$, the proof is completed by applying Lemma~\ref{lem:tuning.tight} below, proved in Appendix~\ref{sec:tuning}, to compute the optimal tuning of $\xi \in [0,1/2]$.

\begin{lemma}\label{lem:tuning.tight}
  Let $C(\xi) = \frac{2 \zeta(2)}{(\ln (1+\xi))^2}$. Fix $z \in [0, e-1]$ and $x \ge 0$. Then
\[
  \inf_{\substack{\xi \in [0, z] \\ \lambda \in [0, 1/(1+\xi)]}}~
  \frac{
    x
    - \ln \del*{1-\lambda (1+\xi)}
  }{
    \lambda
  }
  + (1+\xi)\ln C(\xi)
  ~=~
  2 \tilde h_{1+z} \del*{\frac{h^{-1} \del*{1 + x}
      + \ln \del*{2 \zeta(2)}
    }{2}
  }
  .
\]
\end{lemma}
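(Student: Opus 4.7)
The plan is to perform the two-variable optimization sequentially: first minimize over $\lambda$ in closed form, then minimize over $\xi$ while accounting for a possible boundary minimum. I would begin with the change of variable $u = \lambda(1+\xi) \in [0,1]$, which recasts the objective as
\[
(1+\xi)\left[\frac{x - \ln(1-u)}{u} + \ln C(\xi)\right],
\]
cleanly decoupling the $u$-minimization from the $\xi$-dependent term $\ln C(\xi)$.

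For the inner minimization, setting the derivative of $u \mapsto (x - \ln(1-u))/u$ to zero yields the first-order condition $u/(1-u) + \ln(1-u) = x$. Substituting $v = 1/(1-u)$ converts this to $v - \ln v = 1 + x$, i.e.\ $h(v) = 1 + x$, so $v^* = h^{-1}(1+x)$. Substituting back, the minimum value of the ratio is itself $v^* = h^{-1}(1+x)$, which is exactly the quantity appearing inside the argument of $\tilde h$ on the RHS.

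Introducing $w = 1+\xi \in [1, 1+z]$ and $A := h^{-1}(1+x) + \ln(2\zeta(2))$, the outer problem becomes
\[
\min_{w \in [1, 1+z]} G(w), \qquad G(w) := w\bigl(A - 2\ln\ln w\bigr).
\]
I would verify that $G(w) \to +\infty$ as $w \to 1^+$, and that $G$ is convex on $[1,e]$ (its second derivative $2(1-\ln w)/(w(\ln w)^2)$ is nonnegative there); since $z \le e - 1$, the domain $[1, 1+z]$ is contained in the convex region, so the minimum is attained either at an interior critical point or at the right endpoint $w = 1+z$. The critical-point equation $A = 2\ln\ln w + 2/\ln w$ simplifies, under the substitution $s = 1/\ln w$, to $h(s) = A/2$, giving $w^\star = \exp\bigl(1/h^{-1}(A/2)\bigr)$; using the critical-point identity to replace $A - 2\ln\ln w^\star = 2/\ln w^\star$, one obtains $G(w^\star) = 2\, h^{-1}(A/2)\,\exp\bigl(1/h^{-1}(A/2)\bigr)$.

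The two candidate values match exactly the two branches of $2\tilde h_{1+z}(A/2)$: when the interior critical point falls inside the feasible interval, $G(w^\star)$ reproduces the first branch of \eqref{eq:tilde.h}; otherwise, $G$ is decreasing throughout $[1,1+z]$ and the boundary value $G(1+z) = (1+z)(A - 2\ln\ln(1+z)) = 2(1+z)\bigl(A/2 - \ln\ln(1+z)\bigr)$ gives the second branch. The main obstacle is purely algebraic bookkeeping: verifying that the geometric condition $w^\star \le 1+z$ --- equivalent to $h^{-1}(A/2) \ge 1/\ln(1+z)$ by monotonicity of $\exp$ --- coincides with the case-split threshold in \eqref{eq:tilde.h} after applying $h$ to both sides. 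No step requires more than single-variable calculus together with the definition of $h^{-1}$.
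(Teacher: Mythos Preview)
Your proposal is correct and follows essentially the same route as the paper. The paper packages the two single-variable optimizations as separate auxiliary lemmas (Lemma~\ref{lem:inner.value} for the inner minimum over $q=\lambda(1+\xi)$ and Lemma~\ref{lem:outer.value} for the outer minimum over $y=1+\xi$), while you carry out both computations inline with the same substitutions and the same convexity/critical-point analysis; the content is identical.
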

\qed 

\vspace{-0.3cm}

We now provide the proof of the crucial Lemma~\ref{thm:AssumptionExpo}. 

\paragraph{Proof of Lemma~\ref{thm:AssumptionExpo}: building the martingale} Lemma~\ref{lem:ExistsMart} below shows that the deviations of $X_a(t)$ can be related to the deviations of a well-chosen mixture martingale $Z_a^{\pi}(t)$, where $\pi$ has a discrete support. The proof of Lemma~\ref{lem:ExistsMart} is given in Appendix~\ref{details:Expo}.


\begin{lemma}[mixture martingales]\label{lem:ExistsMart} Fix $\xi \in ]0,1/2[$ and $x>0$. There exists a (discrete) prior $\pi(x) = \pi(x,\xi)$ such that the corresponding mixture martingale \eqref{MixtureMartingale}, denoted by $Z_{a}^{\pi(x)}(t)$, satisfies, for all $t \in \N$,
\[\left\{X_a(t) - (1+\xi)  \ln\left(\frac{2\zeta(2)}{(\ln (1+\xi))^2}\right) \geq x\right\} \subseteq \left\{Z_{a}^{\pi(x)}(t) \geq e^{\frac{x}{1+\xi}}\right\}.\]
If $X_a(t) = Y_a^+(t)$ or $X_a(t)=Y_a^-(t)$, there exists a prior $\pi(x)$ such that 
\begin{equation}\left\{X_a(t) - (1+\xi)  \ln\left(\frac{\zeta(2)}{(\ln (1+\xi))^2}\right) \geq x\right\} \subseteq \left\{Z_{a}^{\pi(x)}(t) \geq e^{\frac{x}{1+\xi}}\right\}.\label{ImprovedOneSided}\end{equation}
\end{lemma}

\bigskip\noindent
A consequence of Lemma~\ref{lem:ExistsMart} is that, for every $z>1$, and every $\lambda>0$ 
\begin{eqnarray*}\left\{e^{\lambda\left({X}_a(t)-(1+\xi)\ln C(\xi)\right)}  \geq z \right\} &\subseteq& \left\{Z_a^{\pi(\ln(z)/\lambda)}(t) \geq e^{\frac{\ln(z)}{\lambda(1+\xi)}}\right\} \\
&\subseteq& \Big\{\underbrace{Z_{a}^{\pi(\ln(z)/\lambda)}(t) e^{-\frac{\ln(z)}{\lambda(1+\xi)}}}_{:=W_a^{z,\lambda}(t)}\geq 1\Big\},
\end{eqnarray*}
where $W_a^{z,\lambda}(t)$ is a martingale that satisfies $\bE[W_a^{z,\lambda}(0)] = e^{-\frac{\ln(z)}{\lambda(1+\xi)}}$ and, due to the above inclusion, \begin{equation}W_a^{z,\lambda}(t) \geq \ind_{\left(e^{\lambda\left(X_a(t)-(1+\xi)\ln C(\xi)\right)} \geq z \right)}.\label{eq:UBMart}\end{equation}
We now define another mixture martingale, for $\lambda \in \left]0, \frac{1}{1+\xi}\right[$:
\begin{eqnarray*}
 W_a^{\lambda}(t) = 1 + \int_{1}^\infty W_a^{z,\lambda}(t)dz.
\end{eqnarray*}
Using inequality \eqref{eq:UBMart} yields
\[W_{a}^{\lambda}(t) \geq e^{\lambda \left(X_a(t)-(1+\xi)\ln C(\xi) \right)}.\]
Moreover, a direct computation shows that $W_a^{\lambda}(0) = \frac{1}{1-\lambda(1+\xi)}$. Finally defining \[M_a^{\lambda}(t) ={(1 - \lambda(1+\xi))} W_a^{\lambda}(t),\]
one has that $M_a^{\lambda}(t)$ is a test martingale, i.e.\  $\bE[M_{a}^{\lambda}(t)] = 1$, that satisfies
\begin{eqnarray*}
 M_a^{\lambda}(t) &\geq & \exp\left(\lambda X_a(t) - \lambda (1+\xi)\ln(C(\xi))+ \ln(1-\lambda(1+\xi)) \right) \\
 & = & \exp\left(\lambda X_a(t) - g_{\xi}(\lambda)\right),
\end{eqnarray*}
which concludes the proof. Note that if for all $a$, $X_a(t) = Y_a^\pm(t)$, using the tighter statement \eqref{ImprovedOneSided} allows to replace the constant $C(\xi)$ by the smaller value $\frac{\zeta(2)}{(\ln (1+\xi))^2}$.

Above, we are in essence building a test martingale of value $M_t \ge e^{\lambda X_t}$ from test martingales guaranteeing $Z_t \ge e^{x} \ind \set{X_t \ge x}$. The possibilities and limits of doing this are exactly characterised by \cite{DAWID2011157} in the process of characterising the so-called \emph{admissible capital calibrators}. By changing the mixture on thresholds $x$ from exponential (as we do here) to polynomial, it is technically possible to guarantee $M_t \ge e^{X_t - O(\ln X_t)}$. We do not pursue this direction, as the additional $\ln X_t$ is not convenient for combining evidence of arms, and moreover it is not at all clear that the high cost in terms of multiplicative constants (i.e.\ the $g(\lambda)$) is worth it.
\qed

\vspace{-0.3cm}

\subsubsection{Alternative Mixture Martingales} The martingale $M_a^{\lambda}(t)$ built in the proof of Lemma~\ref{thm:AssumptionExpo} can be viewed as a mixture martingale with a hierarchical prior, which is a continuous mixture of some discrete priors $\pi(\cdot)$ defined in Lemma~\ref{lem:ExistsMart}. Indeed, one can write
\[M_a^{\lambda}(t) = (1-\lambda(1+\xi))Z_a^{0}(t) + \int_{1}^{\infty}\left(\int Z_a^{\eta}(t) \pi(\log(z)/\lambda)(d\eta)\right)(1-\lambda(1+\xi))e^{-\frac{\log(z)}{\lambda(1+\xi)}}dz.\]

To prove Theorem~\ref{corr:Gaussian} and Theorem~\ref{corr:Gamma}, we build different mixture martingales in Appendix~\ref{sec:nicecase}. Interestingly, they also rely on a hierarchical prior but this time the prior is a discrete average of continuous priors. More precisely, the martingale used in each case can be written of the form 
\[M_a^{\lambda}(t) = \sum_{i=1}^{\infty} \gamma_i \int Z_a^{\eta}(t)p_{T_i}^{\lambda,\mu_a}(\eta) d\eta\]
for a well chosen family of $(\gamma_i,T_i) \in (\R^+,\N^*)$, where $p_{t}^{\lambda,\mu_a}(\eta)$ is a continuous function satisfying 
\[\forall x, \ \int  e^{\eta (tx) - \phi_{\mu}(\eta) t} p_n^{\lambda,\mu_a}(\eta) d\eta = e^{\lambda t d(x,\mu_a)}.\]

We see that in both cases, elementary non-negative martingales of the form $Z_a^\eta(t)$ from \eqref{eq:Z.a.eta} are mixed under a (hierarchically constructed) prior distribution on $\eta$. Both approaches are similar in spirit, both implementing the Laplace technique of achieving a value close to that of the maximiser $\hat \eta$ (which is a function of the random data), and the peeling technique (to adapts to the random sample size $N_a(t)$). The combined density has an asymptote at $\eta = 0$, with density nearby proportional to $\frac{1}{\eta (\ln \eta)^2}$. The log factor is necessary for making the prior proper, and it is also the reason for the $\ln \ln N_a(t)$ terms in our deviation inequality.

An interesting alternative approach, which goes slightly outside the mixture martingale framework, is taken by \cite{squint}. There, a mixture martingale is constructed by mixing the increment $Z_a^\eta(t) - Z_a^\eta(0)$ under the \emph{improper} prior with density $1/\eta$. Subtracting $Z_a^\eta(0)$ solves the problem that, without it, the improper mixture would be infinite. However, it has the effect that the mixture can become negative, interfering with Ville's inequality. Yet the mixture value can be shown (for bounded outcomes) to be at least $- \ln N_a(t)$. Taking that into account properly yields, again, the $\ln \ln N_a(t)$ term in the resulting deviation inequality. 
We believe exploring these ideas further to be a worthwhile direction for future research.

\section{Asymptotically Optimal Adaptive Sequential Testing}\label{sec:generic.solved}

In this section, we explain how our new deviation inequalities can be useful to prove the correctness of a stopping strategy for generic sequential adaptive hypothesis testing problems, that we refer to as \emph{sequential identification problems}. 

Given a bandit model, we consider $M$ hypotheses $\cH_1 = (\bm \mu \in \cO_1), \dots, \cH_M = (\bm \mu \in \cO_M)$ where $\cO_1, \dots, \cO_M$ are open sets forming a partition of the set of possible means $\cO$. Our goal is to adaptively sample the arms until a decision is made that one of the hypotheses $\ihat$ is correct. Our goal is to identify the correct hypothesis for all possible means $\bm \mu \in \cO$. More precisely, we aim for \emph{$\delta$-correct strategies}, for which $\forall\bm \mu \in \cO, \ \bP_{\bm\mu}\left(\bm \mu \in \cO_{\ihat}\right) \geq 1 - \delta$. This problem falls into the framework of Sequential Adaptive Hypothesis Testing as introduced by \cite{Chernoff59} --who studied only discrete hypotheses and considered a different performance metric-- and is called General-Samp by \cite{ChenGLQW17}, who study Gaussian arms with unit variance. 

For general exponential family bandits, we analyse below a natural stopping rule based on Generalized Likelihood Ratio (GLR) tests. We prove that this stopping rule is $\delta$-correct for any sequential identification problem and that in some cases it attains the minimal sample complexity (in a regime of small risk $\delta$) when coupled with an appropriate sampling rule. We note that the independent work of \cite{Juneja19} studies the same problem as ours under the name ``partition identification'', also for exponential families. However, that work puts less emphasis on stopping rules, and uses the deviation inequality of \cite{Combes14Lip} for its analysis.


\subsection{A General Stopping Rule}

For every $\bm \mu$, we define \[\mathrm{Alt}(\bm \mu) = \bigcup_{i : \bm \mu \notin \cO_i} \cO_i.\]
If $\bm \mu \in \cO$, we let $i^*(\vmu)$ be the index of the unique element in the partition to which $\bm \mu$ belongs; in particular $\vmu \in \cO_{i^*(\vmu)}$ and $\mathrm{Alt}(\bm \mu) = \cO\backslash \cO_{i^*(\vmu)}$.
We let $\hat{\bm\mu}(t)$ be the vector of empirical means of the arms based on the observations available up to round $t$. If $\hat{\bm{\mu}}(t) \in \cO$, we let $\ihat(t) = i^*(\hat \vmu(t))$ so that $\hat{\bm\mu}(t) \in \cO_{\ihat(t)}$.

\begin{definition} The GLR statistic is defined as
\begin{equation}\label{eq:GLR}
  \hat{\Lambda}_t = \inf_{\bm\lambda \in \Alt(\hat{\bm \mu}(t))} \sum_{a=1}^K N_a(t) d\left(\hat{\mu}_a(t),\lambda_a\right).
\end{equation}
Given a sequence of thresholds $(\hat c_t(\delta))_{t\in\N}$, the GLR stopping rule with thresholds $\hat c_t(\delta)$ is defined by
\begin{equation}\label{eq:GLR.stop}
  \tau_\delta := \inf \left\{ t \in \N : \hat{\Lambda}_t > \hat{c}_t(\delta)\right\}.
\end{equation}
\end{definition}

A Generalized Likelihood Ratio statistic is usually defined for testing a possibly composite hypothesis $\cH_0 :(\mu\in \Omega_0)$ against a possibly composite alternative $\cH_1 : (\mu \in \Omega_1)$ by
\[R_t = \frac{\sup_{\lambda \in \Omega_0 \cup \Omega_1}\ell(X_1,\dots,X_t ; \lambda)}{\sup_{\lambda \in \Omega_0}\ell(X_1,\dots,X_t ; \lambda)}, \]
where $X_1,\dots,X_t$ are some observations whose likelihood $\ell(X_1,\dots,X_t ; \mu)$ depends on some unknown parameter $\mu$. Large values of $R_t$ tend to reject the hypothesis $\cH_0$. When the observations are obtained under a sampling rule $(A_t)$ in an exponential family bandit model and $\hat{\bm\mu}(t) \in \Omega_0\cup\Omega_1$ it can be shown that
\[\ln (R_t) = \inf_{\bm\lambda \in \Omega_0} \sum_{a=1}^K d(\hat{\mu}_a(t),\lambda_a).\]
The GLR statistic $\hat{\Lambda}_t$ can thus be interpreted as a statistic for testing $\cH_0 : \set*{\bm \mu \in \Alt(\hat{\bm \mu}(t))}$ against $\cH_1 : \set*{\bm \mu \in \cO_{\ihat(t)}}$ (if $\hat{\bm\mu}(t) \in \cO$, otherwise note that $\hat{\Lambda}_t = 0$ which prevent from stopping). However the two hypotheses that are ``tested'' at time $t$ are data-dependent.  Still, large values $\hat{\Lambda}_t$ tend to reject $\left(\bm \mu \in \Alt(\hat{\bm \mu}(t))\right)$: hypothesis $\ihat(t)$ must be true. Another possible interpretation of the GLR stopping rule is that it is running in parallel $M$ GLR tests of $\cH_0 : (\bm\mu \in \cO \backslash \cO_i)$ against $\cH_1 : (\bm\mu \in \cO_i)$ for $i=1,\dots,M$ and stops the first time one of these tests $\hat{\imath}$ rejects $\cH_0$. This ``Parallel GLRT'' view is the one discussed for example by \cite{GK19Epsilon}.

It can be observed that $\left\{\hat{\Lambda}_t > \hat{c}_t(\delta)\right\} = \left\{\cC_t(\delta)\subseteq \cO_{\ihat(t)}\right\}$ where $\cC_t(\delta)$ is the \emph{confidence region}
\begin{equation}\cC_t(\delta) := \left\{\bm \lambda : \sum_{a=1}^K N_a(t) d (\hat{\mu}_a(t),\lambda_a) \leq \hat{c}_t(\delta)\right\}.\label{CI:First}\end{equation}
The GLR stopping rule \eqref{eq:GLR.stop} can thus be rephrased in the following way: stop when the set of statistically plausible parameters $\cC_t(\delta)$ is included in one fold of the partition. Building on Theorem~\ref{thm:DevExpo}, Proposition~\ref{prop:GLRAnalysis} below provides a choice of thresholds for which the  GLR stopping rule yields a $\delta$-correct algorithm. We provide a choice of thresholds for which the GLR rule is $\delta$-correct when the hypothesis $\cH_{\ihat(\tau)}$ is recommended and the corresponding confidence intervals $\cC_t(\delta)$ always contain the true parameter with probability larger than $1-\delta$.

\begin{proposition}\label{prop:GLRAnalysis} Let $\cC_{\emph{exp}}$ be the threshold function defined in Theorem~\ref{thm:DevExpo}. The sequence of thresholds
\begin{equation}\hat{c}_t(\delta) = 3 \sum_{a = 1}^K \ln(1 + \ln N_a(t)) + K \cC_{\emph{exp}}\left(\frac{\ln(1/\delta)}{K}\right)\label{UniversalThreshold}\end{equation}
is such that, \emph{for every sampling rule}, \[\bP_{\bm\mu}(\forall t \in \N, \bm \mu \in \cC_t(\delta)) \geq 1 - \delta \ \ \ \text{and}  \ \ \ \bP_{\bm\mu}\left(\tau_\delta < \infty, \ihat(\tau_\delta) \neq i^*(\bm\mu)\right) \leq \delta.\]
\end{proposition}

\begin{proof} Using Theorem~\ref{thm:DevExpo} in the last inequality, one can write
\begin{eqnarray*}
 \bP_{\bm\mu}\left(\tau < \infty, \ihat(\tau) \neq i^*\right) & \le & \bP_{\bm\mu}\left(\exists t \in \N : \ihat(t) \neq i^*, \hat{\Lambda}_t > \hat c_t(\delta)\right) \\
                                                              & = & \bP_{\bm\mu}\left(\exists t \in \N : \exists i \neq i^*, \cC_t \subseteq \cO_i\right) \\
  &\leq &  \bP_{\bm\mu}\left(\exists t \in \N : \bm \mu \notin \cC_t\right) \\
 & = & \bP_{\bm\mu}\left(\exists t \in \N : \sum_{a=1}^K N_a(t) d(\hat{\mu}_a(t),\mu_a) \geq \hat c_t(\delta)\right) \\
 & \leq & \delta.
\end{eqnarray*}
This proves both claims of Proposition~\ref{prop:GLRAnalysis}.
\end{proof}

\subsection{An Asymptotically Optimal Adaptive Testing Procedure}\label{sec:asy.opt}

Proposition~\ref{prop:GLRAnalysis} provides a threshold for which the GLR stopping rule \eqref{eq:GLR.stop} is $\delta$-correct \emph{for any sampling rule}. We now show that used in conjunction with an appropriate ``Tracking'' stopping rule, it can even attain the optimal sample complexity. The following lower bound generalizes the sample complexity lower bound obtained by \cite{GK16} for the particular Best Arm Identification problem and is obtained with the exact same change-of-measure technique.

\begin{proposition}\label{prop:LB} Define the complexity term $T^*(\bm \mu)$ as  
\[T^*(\bm\mu)^{-1}= \sup_{\bm w \in \Sigma_K} \inf_{\bm \lambda \in \Alt(\bm\mu)} \sum_{a=1}^Kw_a d(\mu_a,\lambda_a),\]
where $\Sigma_K = \left\{ \bm w \in [0,1]^K : \sum_{i=1}^K w_i = 1\right\}$. Then any $\delta$-correct strategy satisfies \[\forall \bm\mu \in \cO,  \ \bE_{\bm\mu}[\tau_{\delta}] \geq T^*(\bm\mu) \ln \left(\frac{1}{3\delta}\right).\]
\end{proposition}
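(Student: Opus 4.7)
The plan is to follow the standard change-of-measure recipe from \cite{GK16}, adapted to the generic partition-into-hypotheses setting. The only non-trivial ingredient beyond plumbing is the transportation (or data-processing) lemma that relates expected observation counts under $\bm\mu$ to KL divergences between induced laws.

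First I would state and invoke the transportation lemma: for any $\delta$-correct strategy, any bandit model $\bm\lambda \in \cO$, and any stopping time $\tau$ with $\bE_{\bm\mu}[\tau] < \infty$ and event $\cE \in \cF_\tau$,
\begin{equation*}
  \sum_{a=1}^K \bE_{\bm\mu}[N_a(\tau)]\, d(\mu_a,\lambda_a)
  ~\geq~
  \mathrm{kl}\bigl(\bP_{\bm\mu}(\cE),\,\bP_{\bm\lambda}(\cE)\bigr),
\end{equation*}
where $\mathrm{kl}(x,y)$ is the binary KL divergence. This is exactly the Wald-type identity obtained by expanding the log-likelihood ratio of observations under $\bm\mu$ vs.\ $\bm\lambda$, using Wald's lemma to replace $N_a(\tau)$ by its expectation, and then contracting via data processing through the binary test $\ind_\cE$. (The case $\bE_{\bm\mu}[\tau_\delta] = \infty$ makes the target inequality trivial, so I may freely assume finiteness.)

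Next, I apply this with the event $\cE := \{\ihat(\tau_\delta) = i^*(\bm\mu)\}$ and with an arbitrary $\bm\lambda \in \Alt(\bm\mu)$. By definition of $\Alt$, there is an index $j \neq i^*(\bm\mu)$ with $\bm\lambda \in \cO_j$, hence $\cE \subseteq \{\bm\lambda \notin \cO_{\ihat}\}$. Applying $\delta$-correctness at $\bm\mu$ gives $\bP_{\bm\mu}(\cE) \geq 1-\delta$, and applying it at $\bm\lambda$ gives $\bP_{\bm\lambda}(\cE) \leq \delta$. By monotonicity of $\mathrm{kl}(x,y)$ in $x$ (for $x>y$) and in $y$ (decreasing for $y<x$),
\begin{equation*}
  \mathrm{kl}\bigl(\bP_{\bm\mu}(\cE),\bP_{\bm\lambda}(\cE)\bigr)
  ~\geq~
  \mathrm{kl}(1-\delta,\delta)
  ~\geq~
  \ln \frac{1}{3\delta},
\end{equation*}
the last step being the elementary inequality $\mathrm{kl}(1-\delta,\delta) = (1-2\delta)\ln\tfrac{1-\delta}{\delta} \geq \ln\tfrac{1}{3\delta}$ valid for all $\delta \in (0,1/2)$ (and trivial otherwise, since then $T^*(\bm\mu)\ln\tfrac{1}{3\delta}\leq 0$).

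Finally I optimize. Setting $w_a := \bE_{\bm\mu}[N_a(\tau_\delta)]/\bE_{\bm\mu}[\tau_\delta]$, which lies in $\Sigma_K$ because $\sum_a N_a(\tau_\delta) = \tau_\delta$, the two displays above yield, for every $\bm\lambda \in \Alt(\bm\mu)$,
\begin{equation*}
  \bE_{\bm\mu}[\tau_\delta]\; \sum_{a=1}^K w_a\, d(\mu_a,\lambda_a)
  ~\geq~
  \ln \frac{1}{3\delta}.
\end{equation*}
Taking the infimum over $\bm\lambda \in \Alt(\bm\mu)$ on the left and then noting that $\inf_{\bm\lambda\in\Alt(\bm\mu)}\sum_a w_a d(\mu_a,\lambda_a) \leq \sup_{\bm w'\in\Sigma_K}\inf_{\bm\lambda\in\Alt(\bm\mu)}\sum_a w'_a d(\mu_a,\lambda_a) = T^*(\bm\mu)^{-1}$ gives the claim. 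The only step requiring any real thought is the transportation lemma itself; the rest is book-keeping and a one-line binary-KL inequality.
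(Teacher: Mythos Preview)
Your proof is correct and follows exactly the change-of-measure argument from \cite{GK16} that the paper invokes (the paper does not spell out a proof, stating only that it ``is obtained with the exact same change-of-measure trick''). The transportation lemma, the choice of event $\cE=\{\ihat(\tau_\delta)=i^*(\bm\mu)\}$, the binary-KL lower bound $\mathrm{kl}(1-\delta,\delta)\ge\ln(1/(3\delta))$, and the normalisation $w_a=\bE_{\bm\mu}[N_a(\tau_\delta)]/\bE_{\bm\mu}[\tau_\delta]$ are precisely the steps of that argument.
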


We define, when they exist (that is, when the argmax below is unique) the \emph{optimal weights}
\begin{equation}\label{eq:wstar}
  \w^*(\bm\mu):= \argmax{\bm w \in \Sigma_K} \ \inf_{\bm \lambda \in \Alt(\bm\mu)} \sum_{a=1}^Kw_a d(\mu_a,\lambda_a)
\end{equation}
for $\bm \mu \in \cO$. For well-behaved sequential testing problems, those weights indicate the fraction of samples that should be allocated to each arm by an optimal strategy. This motivates the Tracking rule, originally proposed by \cite{GK16} as the D-Tracking rule for Best Arm Identification and that we recall here.  Letting $\cU_t = \{ a \in \{1,\dots,K\} : N_a(t)  \leq \max(\sqrt{t} - K/2,0)\}$ be the set of under-sampled arms, at time $t+1$ the selected arm is
\begin{equation}\label{eq:tracking.rule}
A_{t+1} \in \left\{\begin{array}{lll}
                       \displaystyle
                       \argmin{a \in \cU_t}  \ N_a(t)  \ \ \ & \text{if} \ \cU_t \neq \emptyset & (\textit{forced  exploration})\\
                       \displaystyle
                    \argmax{a \in [K]} \ t \,w_a^*(\hat{\bm\mu}(t)) -{N_a}(t) &\text{o.w.}& (\textit{tracking the plug-in estimate})
                   \end{array}
                 \right.
\end{equation}
It can be noted that $\bm w^*(\hat{\bm\mu}(t))$ is defined only if $\hat{\bm\mu}(t) \in \cO$. In practice if $\hat{\bm\mu}(t) \notin \cO$ the tracking step of the algorithm can be replaced by uniform exploration. Due to the forced exploration, as $\bm\mu \in \cO$ (which is an open set by assumption) the law of large numbers ensures that at some point $\hat{\vmu}(t) \in \cO$, and the tracking step can be applied.

\begin{theorem}\label{thm:OptimalTesting} Assume that the following assumptions are satisfied: 
\begin{enumerate}
\item\label{it:unique}
  For every $\bm \mu \in \cO$, there is a unique vector of optimal weights $\w^*(\bm\mu)$
 \item\label{it:cont}
   For all $i\in \{1,\dots,M\}$, the mapping $\bm \mu \mapsto \w^*(\bm \mu)$ is continuous on $\cO_i$.
\end{enumerate}
For $\delta \in (0,1]$ let  $\hat{c}_t(\delta)$ be a deterministic sequence of thresholds that is increasing in $t$ and for which there exists constants $C,D>0$ such that \[ \forall t \geq C, \forall \delta \in (0,1], \ \hat{c}_t(\delta) \leq \ln \left(\frac{Dt}{\delta}\right).\] Let $\tau_\delta$ be the GLR stopping rule \eqref{eq:GLR.stop} with thresholds $\hat{c}_t(\delta)$. The Tracking rule \eqref{eq:tracking.rule} ensures
\[\limsup_{\delta \rightarrow 0} \frac{\bE_{\bm\mu}\left[\tau_\delta\right]}{\ln(1/\delta)} = T^*(\bm\mu).\]
\end{theorem}

The proof of Theorem~\ref{thm:OptimalTesting} is given in Appendix~\ref{proof:SC}. Combining this result with Proposition~\ref{prop:GLRAnalysis} yields that an adaptive sequential test using the Tracking rule and the GLR stopping rule with thresholds \eqref{UniversalThreshold} is $\delta$-correct for every $\delta \in (0,1]$ and its sample complexity is asymptotically matching the lower bound of Proposition~\ref{prop:LB}, provided that the optimal weights $\w^*(\bm\mu)$ are well defined and continuous in $\bm\mu$.

Efficient ways to compute those weights are also needed for the actual implementation of the Tracking rule. In the next section, we will discuss particular examples of adaptive sequential tests in which those requirements are fulfilled and optimal (and efficient) adaptive testing is thus possible. We will see that smaller thresholds than the universal threshold \eqref{UniversalThreshold} can be used in some cases. 

The assumptions of Theorem~\ref{thm:OptimalTesting} are frequently satisfied for practical problems (see also \citealt[][Lemma~1]{NIPS2017_6773} proving continuity of the highly related oracle regret problem for the structured multi-armed bandit problem in the fixed-budget setting, under a unique optimiser assumption similar to~\ref{it:unique}). Uniqueness may however fail for other practical problems, including e.g.\ the Minimum Threshold problem studied by \cite{kaufmann2018sequential}. A solution for such cases was proposed in recent follow-up work by \cite{multiple.answers}, who propose regarding the oracle weight map $\vmu \mapsto \w^*(\vmu)$ from \eqref{eq:wstar} as set-valued, and prove that it is upper hemi-continuous and convex-valued for every sequential identification problem of the form we consider here (in particular with a unique correct answer for each instance). Leveraging these two properties, they analyse a variation of the Tracking rule~\eqref{eq:tracking.rule} for which the overall approach is asymptotically optimal in general \citep[Theorem~7]{multiple.answers}.

\section{Smaller Thresholds for Better Sequential Tests}\label{sec:stop.for.test}

A stylized form of (two-sided) deviation inequalities obtained in this paper (in Corollaries~\ref{corr:Gaussian} and~\ref{corr:Gamma} and Theorem~\ref{thm:DevExpo}) is the following. For any subset of arms $\cS \subseteq \{1,\dots,K\}$, for all $x > 0$,
\begin{equation}\label{eq:GenericDev}
 \bP\left(\exists t \in \N : \sum_{a\in \cS} N_a(t)d(\hat{\mu}_a(t),\mu_a) > \sum_{a\in \cS} c\ln(d+\ln(N_a(t))) + |\cS|\cC\left(\frac{x}{|\cS|}\right)\right) \leq  e^{-x}
\end{equation}
where $c$ and $d$ are two positive constants and $\cC(x)$ is a threshold function.
This result holds \emph{for any subset of arms} $\cS$. Combining \eqref{eq:GenericDev} with a weighted union bound, one obtains in Lemma~\ref{lem:AnySubset} below a deviation inequality that is uniform over subsets belonging to the support of a weight vector $\tilde\pi$.

\begin{lemma}[weighted union bound] \label{lem:AnySubset} Assume \eqref{eq:GenericDev} holds. Let $\tilde\pi$ be a probability distribution over subsets: $\sum_{\cS \subseteq \{1,\dots,K\}} \tilde \pi(\cS) = 1$. Then for all $x > 0$
\[
 \bP\left(\exists t , \exists \cS : \!\sum_{a\in \cS} N_a(t)d(\hat{\mu}_a(t),\mu_a) \!> \!\sum_{a\in \cS} c\ln(d+\ln(N_a(t))) + |\cS|\cC\left(\!\frac{x - \ln (\tilde{\pi}(\cS))}{|\cS|}\right)\!\right) \! \leq  e^{-x}.
\]
\end{lemma}

\begin{proof}
  A union bound followed by inequality \eqref{eq:GenericDev} gives
  \begin{align*}
    &\bP\left(\exists t , \exists \cS : \!\sum_{a\in \cS} N_a(t)d(\hat{\mu}_a(t),\mu_a) \!> \!\sum_{a\in \cS} c\ln(d+\ln(N_a(t))) + |\cS|\cC\left(\!\frac{x - \ln (\tilde{\pi}(\cS))}{|\cS|}\right)\!\right)
    \\
    &~\le~
      \sum_{\cS \subseteq \{1,\dots,K\}} \bP\left(\exists t : \!\sum_{a\in \cS} N_a(t)d(\hat{\mu}_a(t),\mu_a) \!> \!\sum_{a\in \cS} c\ln(d+\ln(N_a(t))) + |\cS|\cC\left(\!\frac{x - \ln (\tilde{\pi}(\cS))}{|\cS|}\right)\!\right)
    \\
    &~\le~
      \sum_{\cS \subseteq \{1,\dots,K\}}
      e^{-(x - \ln (\tilde{\pi}(\cS)))}
      ~=~
      e^{-x} \sum_{\cS \subseteq \{1,\dots,K\}} \tilde{\pi}(\cS)
      ~=~
      e^{-x}.
  \end{align*}
\end{proof}

We now explain how this result can serve to tighten the analysis of the GLR stopping rule for some particular sequential testing problems, to allow for the use of smaller threshold functions. We later discuss in Section~\ref{sec:ProjectedCI} the impact of this result on the design of confidence regions.

\subsection{Improved Stopping Rules for Best Arm Identification}\label{sec:improved.stopping}

The (fixed-confidence) Best Arm Identification problem is a particular sequential identification problem as defined in Section~\ref{sec:generic.solved} with $\cO_k = \{ \bm \mu : \mu_k > \max_{j \neq k} \mu_j\}$: the goal is to identify the arm with largest mean.  For this particular problem, the GLR statistic \eqref{eq:GLR} rewrites to 
\begin{equation}\hat{\Lambda}_t = \min_{b \neq \hat{\imath}(t)} \min_{\lambda_b > \lambda_{\hat{\imath}(t)}} \left[N_{\hat{\imath}(t)}(t)d(\hat{\mu}_{\hat{\imath}(t)}(t) , \lambda_{\hat{\imath}(t)}) + N_{b}(t)d(\hat{\mu}_{b}(t) , \lambda_b)\right]\label{GLRBAI}\end{equation}
and the associated stopping rule, $\hat{\Lambda}_t > \hat{c}_t(\delta)$, is referred to as the Chernoff stopping rule by \cite{GK16}.
In this particular case, it is possible to propose a smaller threshold than the universal threshold \eqref{UniversalThreshold} that still ensures a $\delta$-correct rule. Indeed, the probability of error of the strategy that stops when $\hat{\Lambda}_t > \hat{c}_t(\delta)$ and outputs $\ihat(\tau)$ is upper bounded as follows, assuming arm $1$ is the arm with largest mean:
\begin{eqnarray*}
\bP(\text{error}) & \leq&  \bP\left(\exists t \in \N , \exists a \neq 1 :  \min_{\lambda_a > \lambda_{1}} \left[N_{a}(t)d(\hat{\mu}_{a}(t) , \lambda_{a}) + N_{1}(t)d(\hat{\mu}_{1}(t) , \lambda_1)\right] > \hat{c}_t(\delta)\right) \\
 & \leq & \bP\left(\exists t \in \N , \exists a \neq 1:  N_{a}(t)d(\hat{\mu}_{a}(t) , \mu_{a}) + N_{1}(t)d(\hat{\mu}_{1}(t) , \mu_1) > \hat{c}_t(\delta)\right) \\
 & = & \bP\left(\exists t , \exists a \neq 1 : \sum_{j \in \{1,a\}} N_j(t) d(\hat{\mu}_j(t),\mu_j) > \hat{c}_t(\delta)\right).
\end{eqnarray*}
From Theorem~\ref{thm:DevExpo} and a union bound over the $K-1$ subsets $\{1,2\}, \dots,\{1,K\}$ (Lemma~\ref{lem:AnySubset} with a weight vector such that $\tilde{\pi}(\{1,a\}) = 1/(K-1)$ for $a\neq 1$) it holds that
\[\bP\left(\exists t , \exists a \neq 1 : \sum_{j \in \{1,a\}} N_j(t) d(\hat{\mu}_j(t),\mu_j) > 3\sum_{j \in \{1,a\}} \ln(1+\ln(N_j(t))) + 2 \cC_{\text{exp}}\left(\frac{\ln \frac{K-1}{\delta}}{2}\right)\right)\leq \delta.\]
This implies that the GLR rule is $\delta$-correct with the threshold
\begin{equation}\hat{c}_t(\delta) = 6\ln\left( \ln\left(\frac{t}{2}\right) +1\right) + 2 \cC_{\text{exp}}\left(\frac{\ln \frac{K-1}{\delta}}{2}\right).\label{BetterThreshold}\end{equation}
For large $t$, this will be smaller than the original threshold $\hat{c}_t(\delta) = \ln \frac{2t(K-1)}{\delta}$ proposed by \cite{GK16} in the Bernoulli case. It can hence lead to earlier stopping while preserving the optimal sample complexity guarantees, as this threshold still satisfies the assumptions of Theorem~\ref{thm:OptimalTesting}. Note also that this new threshold provides a better motivation for the stylized $\log((\log(t)+1)/\delta)$ threshold that is sometimes used in best arm identification experiments, and for which the empirical error probability is reported to remain below $\delta$.

\begin{remark} The improved threshold \eqref{BetterThreshold} yields a $\delta$ correct stopping rule, however the corresponding confidence interval \eqref{CI:First} does not satisfy $\bP\left(\forall t \in \N : \bm \mu \in \cC_t(\delta)\right)\geq 1 - \delta$. There is no equivalence between the improved $\delta$-correct stopping rule and improved $\delta$-valid confidence regions. We will discuss the implications of Lemma~\ref{lem:AnySubset} for confidence regions in Section~\ref{sec:ProjectedCI}.
\end{remark}

\subsection{Smaller Thresholds for More General Tests}

The reason why we are able to propose a smaller threshold for the BAI problem is that its GLR statistic \eqref{GLRBAI} only features pairs of arms. In more general tests, the structure of the GLR statistic may also be exploited to allow for a smaller threshold that does not depend on the total number of arms $K$ featuring in the universal threshold \eqref{UniversalThreshold} but on a smaller \emph{effective number} of arms.

\begin{definition}\label{def:Rank}
Consider a sequential identification problem specified by a partition $\cO = \bigcup_{i=1}^M \cO_i$. We say this problem has \emph{rank $R$} if for every $i \in \{1,\dots,M\}$ we can write
\[
  \cO\backslash \cO_i
  ~=~
  \bigcup_{q \in [Q]}
  \setc*{\vlambda \in \cI^K}{(\lambda_{k^{i,q}_1}, \ldots, \lambda_{k^{i,q}_R}) \in \mathcal L_{i,q}}
  ,
\]
for a family of arm indices $k_r^{i,q} \in [K]$ and open sets $\mathcal L_{i,q}$ indexed by $r \in [R]$, $q \in [Q]$ and $i \in [M]$. In words, the rank is $R$ if every set $\cO \setminus \cO_i$ is a finite union of sets that are each defined in terms of \emph{only $R$ arms}.
\end{definition}

The BAI problem has rank 2. Indeed, for all $i \in \{1,\dots,K\}$,
\[\cO\backslash \cO_i = \bigcup_{a \neq i}\left\{ \vlambda \in \cI^K \left| (\lambda_i,\lambda_a) \in \{ (x,y) : x < y\} \right.\right\}.\]
In any testing problem that has rank $R$, the GLR statistic may be rewritten
\[
  \hat{\Lambda}_t
  ~=~
  \min_{q \in [Q]}
  \inf_{\substack{\bm\lambda
      \\
      (\lambda_{k^{\ihat(t),q}_1}, \ldots, \lambda_{k^{\ihat(t),q}_R}) \in \mathcal L_{\ihat(t),q}
    }}
  \sum_{r=1}^R N_{k^{\ihat(t),q}_r}(t) d\left(\hat{\mu}_{k^{\ihat(t),q}_r}(t),\lambda_{k^{\ihat(t),q}_r}\right)
  ,
\]
which yields the expression \eqref{GLRBAI} in the BAI case.

\begin{proposition}\label{prop:PACrank}
Fix an identification problem of rank $R$. Then the GLR stopping rule \eqref{eq:GLR} is $\delta$-correct with threshold
\[
  \hat c_t(\delta) ~=~ 3 R \ln (1+\ln(t/R)) + R \cC_{\text{exp}} \del*{\frac{\ln \frac{M-1}{\delta}}{R}}
\]
\end{proposition}

\begin{proof} Fix $\vmu \in \cO$. For each $i \neq i^*$, $\vmu \in \cO \backslash \cO_i$, thus from Definition~\ref{def:Rank} there exists $q_i$ such that $(\mu_{k^{i,q_i}_1}, \ldots, \mu_{k^{i,q_i}_R}) \in \mathcal L_{i,q_i}$. Then
\begin{align*}
  &
    \pr_\vmu \set*{\text{$\tau_\delta < \infty$ and $\ihat(\tau_\delta) \neq i^*$}}
  \\
  &~\le~
    \pr_\vmu \set*{\exists t : \text{$\hat\Lambda_t \ge \hat c_t(\delta)$ and $\ihat(t) \neq i^*$}}
  \\
  &~=~
    \pr_\vmu \set*{\exists t, i \neq i^* : \text{$\hat\Lambda_t \ge \hat c_t(\delta)$ and $\ihat(t) = i$}}
  \\
  &~=~
    \pr_\vmu \set*{\exists t, i \neq i^* : \min_{q \in [Q]}
  \inf_{\substack{\bm\lambda
      \\
      (\lambda_{k^{i,q}_1}, \ldots, \lambda_{k^{i,q}_R}) \in \mathcal L_{i,q}
    }}
  \sum_{r=1}^R N_{k^{i,q}_r}(t) d\left(\hat{\mu}_{k^{i,q}_r}(t),\lambda_{k^{i,q}_r}\right) \ge \hat c_t(\delta)}
  \\
  &~\le~
    \pr_\vmu \set*{\exists t, i \neq i^* :
  \sum_{r=1}^R N_{k^{i,q_i}_r}(t) d\left(\hat{\mu}_{k^{i,q_i}_r}(t),\mu_{k^{i,q_i}_r}\right) \ge  3R \ln(1+\ln(t/R))+  \cC_{\text{exp}} \del*{\frac{\ln \frac{M-1}{\delta}}{R}}}
  \\
  &~\le~
    \pr_\vmu \set*{\exists t, i\neq i^*: 
    \sum_{r=1}^R N_{k^{i,q_i}_r}(t) d\left(\hat{\mu}_{k^{i,q_i}_r}(t),\mu_{k^{i,q_i}_r}\right) \ge 3\sum_{r=1}^R \ln\left(1+\ln N_{k^{i,q_i}_r}(t)\right)+  \cC_{\text{exp}} \del*{\frac{\ln \frac{M-1}{\delta}}{R}}}
  \\
  &~\le~
    \delta
    ,
\end{align*}
where the last inequality follows from Theorem~\ref{thm:DevExpo} and a union bound over $M-1$ subsets (Lemma~\ref{lem:AnySubset} with a weight vector $\tilde{\pi}(\{k_1^{i,q_i},\dots,k_R^{i,q_i}\}) = 1/(M-1)$ for $i\neq i^*$) together with the concavity of $s \mapsto \ln(1+\ln(s))$ that ensures
  \[
    \sum_{r=1}^R \ln\left(1+\ln N_{k^{i,q_i}_r}(t)\right)
    ~\le~
    R \ln (1+\ln (t/R))
    .
  \]
\end{proof}

\subsubsection{A Rank 4 Example} Assume we are given a collection of $K$ pairs of arms and want to find out which pair has the largest difference (which we think of as profit) between first component (which we think of as revenue) and second component (which we think of as cost). More precisely, we consider a $K \times 2$ array of random sources $X_{ij}$ where $i \in [K]$ and $j \in \set{1,2}$. Let $\mu_{ij} = \ex[X_{ij}]$ denote the means. A strategy samples one arm $A_t = (I_t, J_t)$ per round and its goal is to identify the largest profit pair
\[
  i^*(\vmu) ~=~ \arg\max_{i}~ \mu_{i,1} - \mu_{i,2}
  .
\]
It is easy to check that this problem, which we call \emph{Largest Profit Identification}, has rank 4 and the GLR statistic rewrites to \[
  \hat{\Lambda}_t
  ~=~
  \min_{b \neq \ihat}
  \inf_{\substack{\vlambda \in \mathbb R^{\set{b,\ihat} \times \set{1,2}} \\
      \lambda_{b,1} - \lambda_{b,2} > \lambda_{\ihat,1} - \lambda_{\ihat,2}}}
  \sum_{\substack{a \in \set{b,\ihat} \\ j \in \set{1,2}}} N_{a,j}(t) d\left(\hat{\mu}_{a,j}(t),\lambda_{a,j}\right)
  .
\]
By Proposition~\ref{prop:PACrank} the GLR stopping rule \eqref{eq:GLR.stop} is $\delta$-correct with the threshold
\[
  \hat c_t(\delta) ~=~ 12 \ln (1+\ln(t/4)) + 4 \cC_{\text{exp}} \del*{\frac{\ln \frac{K-1}{\delta}}{4}}
  .
\]

\begin{remark}
  For Largest Profit Identification the oracle weights $\w^*(\vmu)$, which are needed for implementing the asymptotically optimal procedure of Section~\ref{sec:asy.opt}, maximise the concave function $T^*(\vmu)^{-1}$. For both Gaussian and Bernoulli (and possibly more) we can write the objective as a Disciplined Convex Program and solve it efficiently with e.g.\ CVX \citep{cvx}.
\end{remark}

\subsubsection{Best Action Identification in a Game Tree} In the bandit literature, a particular structured identification problem that offers a simple model for Monte Carlo Tree Search in games has been recently studied by \cite{Teraoka14MCTS,GKK16,HuangASM17,NIPS17}. The goal is to quickly identify the action at the root of a (maxmin) game tree whose value is the largest by querying noisy samples of the leaves' values of that tree.

Lemma~8 in \cite{NIPS17} provides an expression for the optimal weights in a depth-two tree, that are then computable using disciplined convex optimization tools (e.g.\ CVX). Furthermore, it can be checked that this identification problem is of rank $L+1$, where $L$ is the maximum number of actions of the second player. This is much smaller than the number of leaves, which is $K \cdot L$ in a game tree where the first player has $K$ moves and the second player has $L$ moves. Assuming the weights (which are only numerically computable) satisfy the continuity assumption of Theorem~\ref{thm:OptimalTesting} (or, if not, by \citealt[Theorem~7]{multiple.answers}), the GLR rule with a rank $L+1$ threshold is asymptotically optimal in combination with the Tracking rule. We note that the existing literature does not provide asymptotically optimal algorithms for best action identification in a game tree, even for depth-two trees.

\section{Projected Confidence Intervals}\label{sec:ProjectedCI}

The deviation inequalities presented in this paper can also be used to build tight confidence regions on (functions of) the parameter $\bm\mu \in \cI^K$. We are particularly interested in building $\delta$-\emph{uniformly valid} confidence regions $\cC_t(\delta)$, that satisfy $\bP\left(\forall t \in \N, \bm \mu \in \cC_t(\delta)\right)\geq 1 - \delta$ for every sampling rule.

Lemma~\ref{lem:AnySubset} in combination with our deviation results allows to build such confidence regions. Indeed for any weight vector $\tilde{\pi}$ over subsets, the following confidence interval is $\delta$-uniformly valid (with $c$ and $d$ as given by the lemma):
\begin{equation}\cC_t^{\tilde\pi}(\delta) := \left\{\!\vlambda \!: \!\forall \cS, \!\sum_{a \in \cS}\! N_a(t)d(\hat{\mu}_a(t),\lambda_a\!)\! \leq \!c \!\sum_{a \in \cS} \!\ln(d + \!\ln N_a(t)\!)\! +\! |\cS| \cC_{\text{exp}} \left(\!\frac{\ln(1/(\tilde\pi(\cS)\delta))}{|\cS|}\!\right)\!\right\}.\label{centralRegion}\end{equation}
A natural question is thus which vector $\tilde\pi$ yields the most interesting confidence region. Answering this question would require to compare complicated shapes in $\R^K$ (like we do for $K=2$ in Figure~\ref{fig:stylised}(a) in the Introduction) and the answer would still depend on the \emph{purpose} of those confidence regions.

In this section we investigate their use for computing confidence intervals on derived quantities of the form $f(\vmu)$, where $f : \mathbb R^K \to \mathbb R$ is some fixed function. Knowing that $\vmu \in \cC_t$, we can immediately conclude that $f(\vmu) \in \cI_t(\delta) \df \setc*{f(\vlambda)}{ \vlambda \in \cC_t}$. The interplay of the structure of the function $f$ and the shape of the confidence region $\cC_t$ will jointly determine the tightness of the \emph{projected confidence interval} $\cI_t(\delta)$. The principal challenge is to find, for each $f$ of interest, a statistically tight $\cC_t$ with a computationally tractable way of computing $\cI_t$. In this section we study two classes of examples, linear $f$ and minima/maxima.

\subsection{Linear Functions}\label{sec:projected.linear}
In this section we consider an arbitrary linear function $f(\vmu) = \v^\top \vmu$ where $\v \in \mathbb R^K$. We will derive our results in the Gaussian case because it admits revealing and explicit closed-form expressions. In that case the confidence region \eqref{centralRegion} is $\delta$-uniformly valid for $c=2$ and $d=4$ and $g = g_G$, as licensed by Corollary~\ref{corr:Gaussian}.
The following two confidence intervals on $\v^\top \vmu$ follow from two extreme choices of weight vectors: one supported on all the singleton sets and one supported on the full set.

\begin{proposition}[Box]
  The following is a $\delta$-uniformly valid confidence interval on $\v^\top \vmu$
  \[
    \cI_t(\delta) = \left[\v^\top \hat \vmu(t)
    \pm \sum_{a \in [K]}  \sqrt{2 \del*{
        C^g \del*{
          \ln \frac{K}{\delta}
        }
        + c\ln(d+\ln(N_a(t)))
      } \frac{v_a^2}{N_a(t)}}\right]
    .
  \]
\end{proposition}
\begin{proof} Simple algebra show that $\cI_t(\delta) = \{\v^T \vlambda, \vlambda \in \cC_t^{\tilde\pi}(\delta)\}$ where $\tilde{\pi}$ is uniform on singletons. Indeed, as $\cC_t^{\tilde{\pi}}(\delta)$ is $\delta$-uniformly valid, it holds that for all $t\in\N$ and $a \in [K]$, $\abs*{\hat \mu_a(t) - \mu_a} \le \sqrt{\frac{2}{N_a(t)} \del*{
        C^g \del*{
          \ln \frac{K}{\delta}
        }
        + c\ln(d+\ln(N_a(t)))
      }}$.
\end{proof}

\begin{proposition}[Ellipse]
  The following is a $\delta$-uniformly valid confidence interval on $\v^\top \vmu$
  \[
    \cI_t(\delta)=\left[   \v^\top \hat \vmu(t)
    \pm\sqrt{
      2 \del*{K C^g \del*{\frac{\ln \frac{1}{\delta}}{K}} + \sum_{a \in [K]} c\ln(d+\ln(N_a(t)))} \sum_{a \in [K]} \frac{v_a^2}{N_a(t) }
    }\right].
  \]
\end{proposition}

\begin{proof} We show that $\cI_t(\delta) = \{\v^T \vlambda, \vlambda \in \cC_t^{\tilde\pi}(\delta)\}$ where $\tilde{\pi}$ is a point-mass on the whole set: $\tilde{\pi}(\{1,\dots,K\})=1$. Letting $C = \sum_{a=1}^K \ln(1+\ln N_a(t)) + K \cC_{\text{exp}}(\ln(1/\delta)/K)$, computing the upper bound of this confidence interval requires to compute
  \[
    \max_\vlambda~ \v^\top \vlambda
    \qquad
    \text{subject to}
    \qquad
    \sum_{a \in [K]} N_a(t) \frac{(\hat \mu_a(t) - \lambda_a)^2}{2} ~\le~ C
    .
  \]
  Introducing Lagrange multiplier $\rho$, we find that this is equivalent to
  \[
    \min_{\rho \ge 0} \max_\vlambda~
    \v^\top \vlambda
    +
    \rho \del*{C - \sum_{a \in [K]} N_a(t) \frac{(\hat \mu_a(t) - \lambda_a)^2}{2}}
    .
  \]
  Solving for $\vlambda$ by cancelling the derivative results in $\lambda_a
    ~=~
    \hat \mu_a(t)
    + \frac{v_a}{\rho N_a(t) }
  $,
  asking us to solve
  \[
    \min_{\rho \ge 0}~
    \v^\top \hat \vmu(t)
    + \sum_{a \in [K]} \frac{v_a^2}{2 \rho N_a(t) }
    + \rho C
    ~=~
    \v^\top \hat \vmu(t)
    + \sqrt{
      2 C \sum_{a \in [K]} \frac{v_a^2}{N_a(t) }
    }
  \]
  where zero $\rho$ derivative is found at $
    \rho
    ~=~
    \sqrt{C^{-1}
             \sum_{a \in [K]} \frac{v_a^2}{2 N_a(t) }
    }
   $. 
  As $\min_\vlambda \v^\top \vlambda = - \max_\vlambda (-\v)^\top \vlambda$, the lower bound of $\cI_t(\delta)$ also follows.
  \end{proof}

\subsubsection{Comparison}
The major difference between the two above bounds is the appearance of the sum outside vs inside of the square root. To get more intuition, let's compare in the special case $N_a(t) = t/K$ and approximate $C^g(x) \approx x$. Then we need to compare
\[
  \norm{\v}_1
  \sqrt{2 \del*{
      \ln \frac{K}{\delta}
      + c\ln(d+\ln(t/K))
    } \frac{K}{t}}
  \quad
  \text{and}
  \quad
  \norm{\v}_2
  \sqrt{
    2 \del*{\ln \frac{1}{\delta} + K c\ln(d+\ln(t/K))}  \frac{K}{t}
  }
  .
\]
We see that the box bound depends on the one-norm of $\v$, whereas the ellipse bound depends on the two-norm of $\v$, which can be smaller by a factor $\sqrt{K}$ (at the price of a factor $K$ multiplying the $\ln \ln t$ term). In a regime of small $\delta$, the ellipse bound can thus be much better than the box bound. Another case of interest is $N_a(t) = t \frac{\abs{v_a}}{\sum_a \abs{v_a}}$, which result from following the oracle weights $\w^*(\vmu)$. Also here the advantage of ellipse over box can again be as large as a factor $\sqrt{K}$.

\subsection{Minimum}

We now turn our attention to $f(\vmu) = \min_a \mu_a$. Estimating the minimum (or, symmetrically, maximum) mean is a natural task in the multi-armed bandit setting (see \citealt{kaufmann2018sequential}). Unlike in the linear case, here the situation is not symmetric. We will study separately the lower and upper confidence bounds
\[
 \LCB^{\tilde{\pi}}_t(\delta) = \min \left\{ \min_{a} \lambda_a : \bm \lambda \in \cC^{\tilde\pi,-}_t(\delta) \right\}\ \ \text{and} \ \ \  \UCB^{\tilde\pi}_t(\delta) = \max \left\{ \min_{a} \lambda_a : \bm \lambda \in \cC^{\tilde\pi,+}_t(\delta) \right\}\]
for the confidence regions 
\[\cC^{\tilde{\pi},\pm}_t(\delta) = \left\{ \bm\lambda : \forall \cS, \sum_{a\in \cS}\left[N_a(t)d^\pm(\hat{\mu}_a(t),\lambda_a) - 3\ln(1+\ln N_a(t))\right]^+ \leq |\cS|\cC_{\text{exp}}\left(\frac{\ln(\tilde{\pi}(\cS)/\delta)}{\delta}\right)\right\}\]
that are both $\delta$-uniformly valid by Lemma~\ref{lem:AnySubset}. It follows that $\bP\left\{\forall t \in \N : \min_a \mu_a \leq \UCB^{\tilde{\pi}}_t(\delta)\right\}\geq 1-\delta$ and $\bP\left\{\forall t \in \N : \min_a \mu_a \geq \LCB^{\tilde{\pi}}_t(\delta)\right\}\geq 1-\delta$. We investigate in each case the tightest possible confidence bound that can be obtained by optimising the choice of the weight vector $\tilde{\pi}$.

\subsubsection{Lower Confidence Bound} A minimum is low whenever \emph{one} entry is low. This means that the $\vlambda \in \cC_t^{\tilde \pi,-}$ of lowest mean will have all entries equal to $\hat \vmu$ except for one. This in turn means that we do not get any mileage out of combining evidence from multiple arms. Instead, the best $\LCB_t^{\tilde \pi}$ is obtained for the choice $\tilde{\pi}(\{k\})=1/K$ (uniform distribution on singletons). We find the following.

\begin{proposition}
  At time $t$, for each arm $a$, let $\theta_a(t) \le \hat \mu_a(t)$ be the solution to
  \[
    N_a(t) d^-(\hat \mu_a(t), \theta_a(t))
    ~=~
    3\ln(1+\ln(N_a(t)))
    +
    \cC_{\text{exp}} \del*{
      \ln \frac{K}{\delta}
    }
  \]
  (note the left-hand side increases with decreasing $\theta_a(t)$, so the solution can be found by binary search). Then 
  \[\bP\left\{\forall t \in \N, \min_a \mu_a \geq  \min_a \theta_a(t)\right\} \geq 1 - \delta.\]
\end{proposition}

\begin{proof}
With the choice $\tilde{\pi}(\{k\})=1/K$, $\cC_t^{\tilde{\pi},-}(\delta)$ is the set of $\bm\lambda$:
\[
  \forall a \in [K] :
  N_a(t) d^-(\hat \mu_a(t), \lambda_a)
  ~\le~
  3\ln(1+\ln(N_a(t)))
  +
  \cC_{\text{exp}} \del*{
    \ln \frac{K}{\delta}
  }
  .
\]
By definition, $\theta_a(t)$ is the lowest possible value for $\lambda_a$, and hence $\min_a \theta_a(t)$ is the lowest possible value for $\min_a \lambda_a$.
\end{proof}

\subsubsection{Upper Confidence Bound} Above, we found that we do not learn much about the lower bound in the presence of many arms. For the upper confidence bound the story is different. We explain in Proposition~\ref{prop:MinUB} how to compute $\UCB_t^{\tilde\pi}$ for a general weight vector $\tilde{\pi}$. We then show that empirically a weight vector supported on \emph{all subsets} can be helpful.

\begin{proposition}\label{prop:MinUB} Let $\theta(t)$ be the solution in $\theta$ to the equation
  \[
    \max_{\cS \subseteq [K]}~\left[
    \sum_{a \in \cS} \sbr*{
      N_a(t) d^+(\hat \mu_a(t), \theta)
      - 3\ln(1+\ln(N_a(t)))
    }^+
    -
    \card{\cS}
    \cC_{\text{exp}} \del*{
      \frac{
        \ln \frac{1}{\delta \tilde \pi(\cS)}
      }{
        \card{\cS}
      }
    }\right]
    ~=~
    0.
  \]
Then $\bP\left\{\forall t \in \N, \min_a \mu_a \leq  \theta(t)\right\} \geq 1 - \delta$.\end{proposition}

\begin{proof} We prove that $\UCB_t^{\tilde{\pi}}(\delta) = \theta(t)$. Let $\bm \lambda \in \cC_t^{\tilde{\pi},+}(\delta)$. By definition, 
\[
  \max_{\cS \subseteq [K]}~\left[
  \sum_{a \in \cS} \sbr*{
    N_a(t) d^+(\hat \mu_a(t), \lambda_a)
    - 3\ln(1+\ln(N_a(t)))
  }^+
  -
  \card{\cS}
  \cC_{\text{exp}} \del*{
    \frac{
      \ln \frac{1}{\delta \tilde \pi(\cS)}
    }{
      \card{\cS}
    }
  }\right]
  ~\le~ 0
  .
\]
What does this tell us about $\min_{a \in [K]} \lambda_a$? Well, consider a candidate value $\theta \ge \min_a \hat \mu_a(t)$ for the minimum. Among bandit models $\vlambda$ with $\min_a \lambda_a = \theta$, the left-hand side above is minimised at $\lambda_a = \max \set*{\hat \mu_a(t), \theta}$ and the maximal value of $\min_{a \in [K]} \lambda_a$ is the maximal value of $\theta$ such that 
\[
  \max_{\cS \subseteq [K]}~\left[
  \sum_{a \in \cS} \sbr*{
    N_a(t) d^+(\hat \mu_a(t), \theta)
    - 3\ln(1+\ln(N_a(t)))
  }^+
  -
  \card{\cS}
  \cC_{\text{exp}} \del*{
    \frac{
      \ln \frac{1}{\delta \tilde \pi(\cS)}
    }{
      \card{\cS}
    }
  }\right]
  ~\le~ 0
  .
\]
We recover the objective in the statement by noting that the left-hand side is a continuous and non-decreasing function of $\theta$.
\end{proof}

\subsubsection{Practical Choice of Weight Vector} The upper bound for a minimum may benefit from considering many subsets $\cS \subseteq [K]$ in the weighted union bound. The reason is that a smaller subset will have a smaller evidence term (summing fewer terms), but it may also have a smaller threshold. Here we investigate the use of cardinality-based weight vectors of the form
$\tilde \pi(\cS) = \pi(\card{\cS})/\binom{K}{\card{\cS}}$ for some probability vector $\pi$ on $\{1,\dots,K\}$.

First, let's consider the computation of $\theta(t)$ for those weight vectors: we are looking for the zero crossing of an increasing function, which can be found by e.g.\ binary search. It remains to efficiently evaluate the objective for a fixed $\theta(t)$. Here we propose to express the objective as
\[
  \max_{k \in [K]}
  \underbrace{
    \max_{\cS \subseteq [K] : \card{\cS} = k}~
    \sum_{a \in \cS} \sbr*{
      N_a(t) d^+(\hat \mu_a(t), \theta(t))
      - 3\ln(1+\ln(N_a(t)))
    }^+
  }_\text{the best set takes the $k$ largest contributors; implement by sorting once.}
  -
  k
  \cC_{\text{exp}} \del*{
    \frac{
      \ln \frac{\binom{K}{k}}{\delta \pi(k)}
    }{
      k
    }
  }
  .
\]
and observe that the best set of size $k$ takes the $k$ arms of largest contribution, which we can look up after  sorting the arms by their contribution. Hence each evaluation of the objective can be obtained in $O(K \ln K)$ time.

We expect combining evidence across arms to be particularly useful when there are several arms with means close to the minimum. We illustrate this empirically in Figure~\ref{fig:UCBcomparison} for a Bernoulli bandit model with $M$ arms with mean 0.1 and 4 more arms with means $0.2,0.3,0.4,0.5$ (thus $K=M+4$), for different values of $M$. We consider the use of a ``Box'' weight vector that is uniform on the singletons ($\pi(1)=1$), a weight vector supported on the whole set of arms ($\pi(K) =1$) and a weight vector that is uniform over subset sizes ($\pi(k) = 1/K$). For each value of $M$, data is collected using uniform sampling and we set $\delta=10^{-10}$ to focus on the high confidence regime. We see that the uniform weight vector consistently leads to smaller upper confidence bounds when compared to Box, with an increased gap when $M$ increases. We also experimented with a ``Zipf'' distribution for $\pi$ ($\pi(k) \propto 1/k$), which performed almost identically as the uniform vector.

\begin{figure}[h]
  \centering
\includegraphics[width=0.47\textwidth]{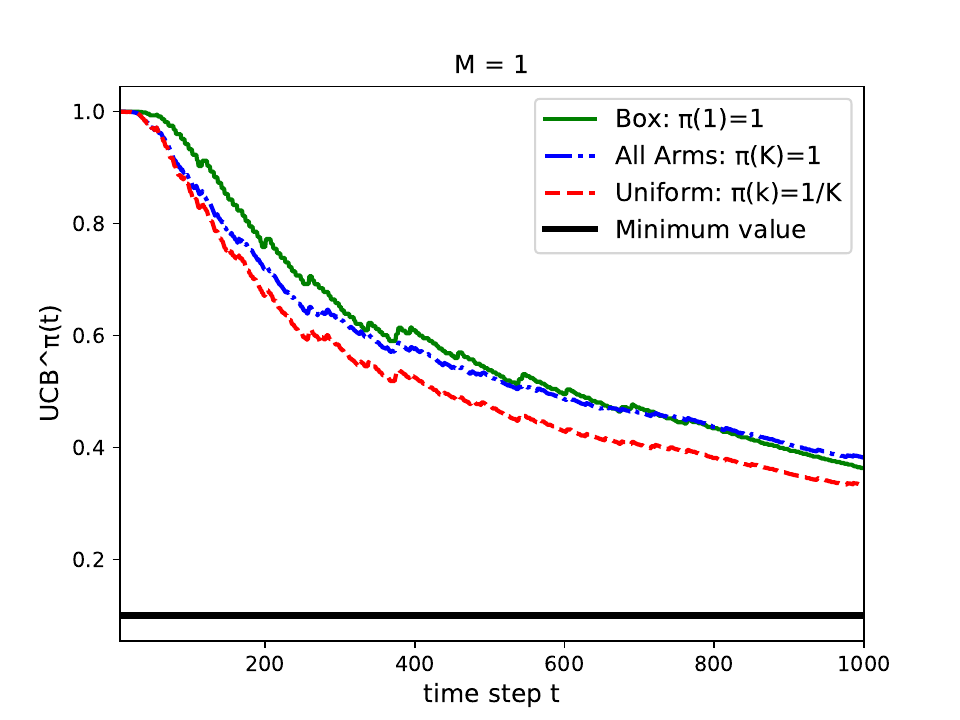}
\includegraphics[width=0.47\textwidth]{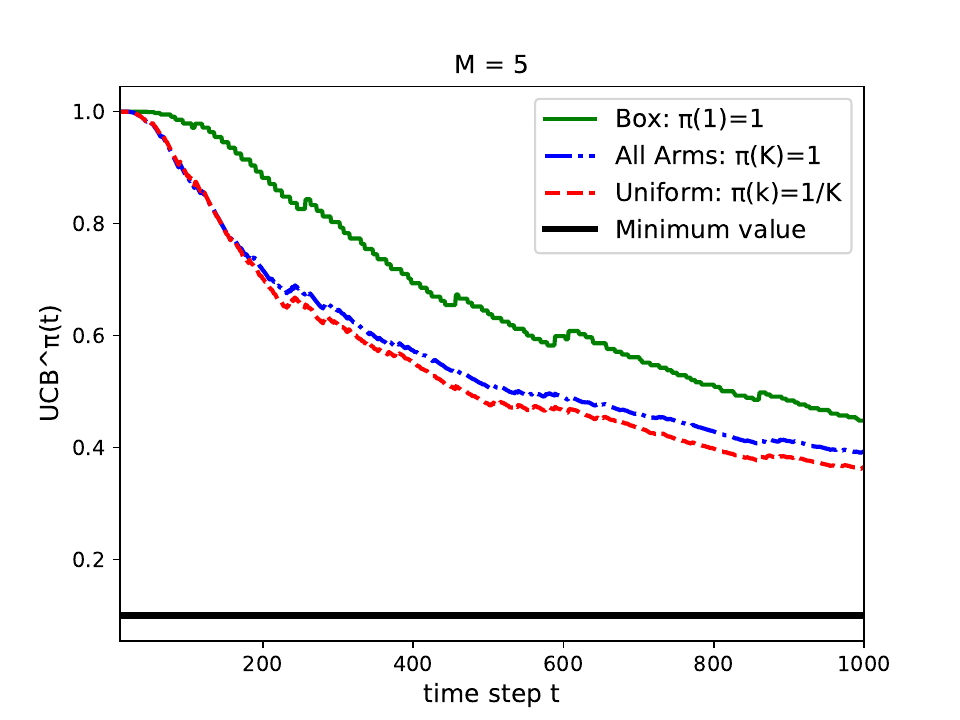}
\includegraphics[width=0.47\textwidth]{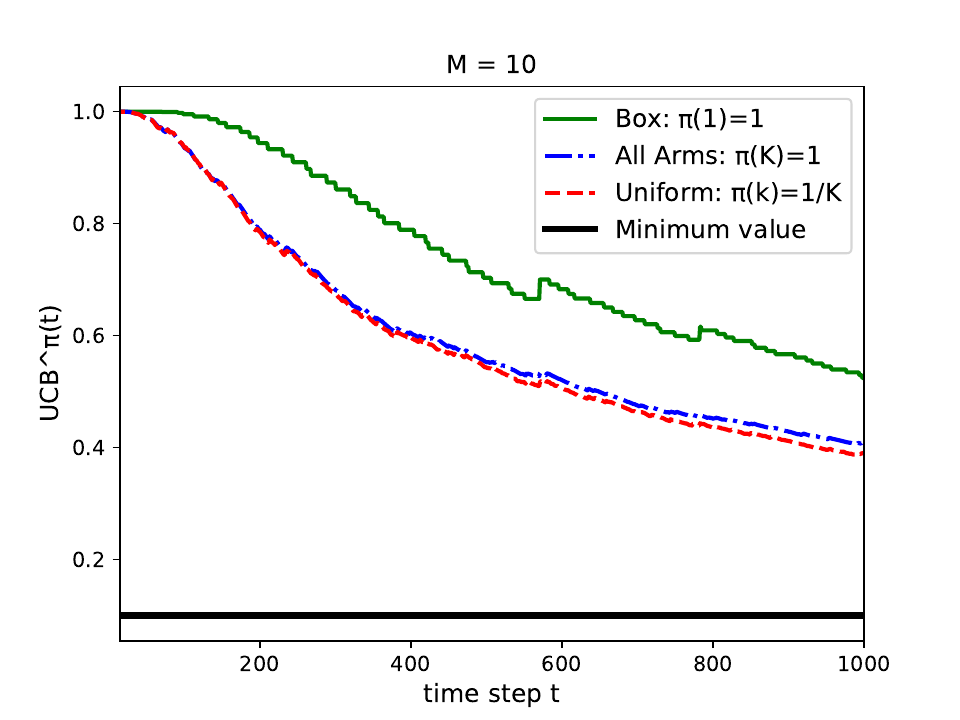}
\includegraphics[width=0.47\textwidth]{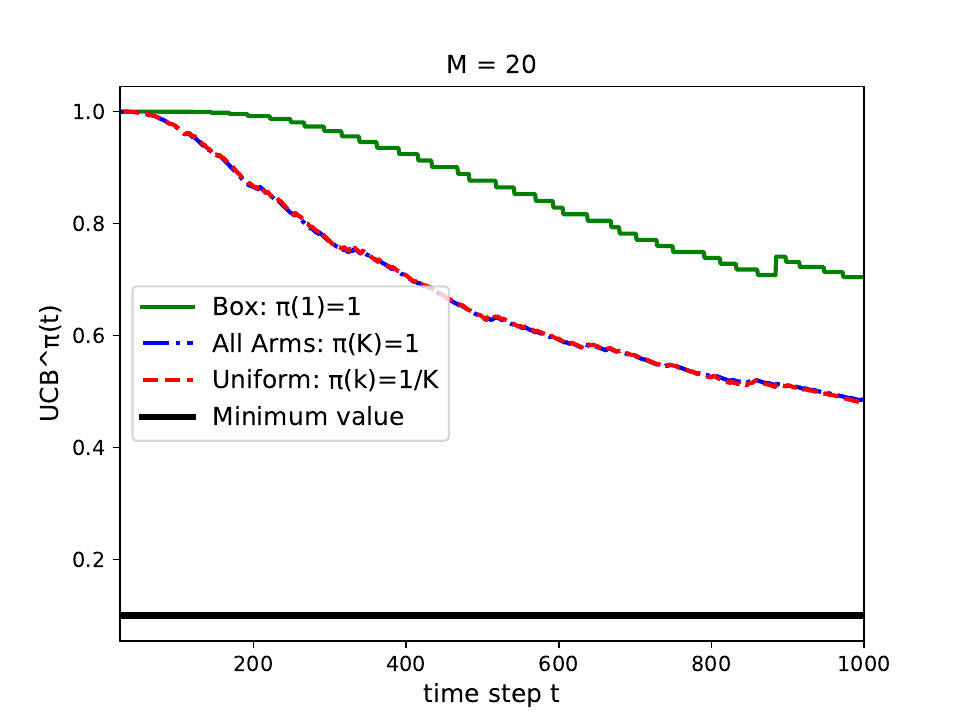}
\caption{\label{fig:UCBcomparison} $\UCB_t^{\tilde{\pi}}(\delta)$ as a function of $t$ for several cardinality-based weight vectors $\tilde{\pi}$ in a presence of $M$ identical arms with the minimal mean, for $M=1,5,10,20$.
}
\end{figure}

This experiment shows that for small values of $\delta$ a uniform cardinality-based weight vector is a robust choice: summing evidence across arms never hurts too much. In the particular case of minimums, we would like to mention that one can go even further and \emph{aggregate} samples from different arms, as explained in \cite{kaufmann2018sequential}, which leads to even smaller upper confidence bounds in experiments.

\section{Conclusion}

Sequential problems are studied in the multi-armed bandit model, where the learner sequentially picks arms to sample. The central question is what the learner infers from the samples that it has seen. This is used for deciding what to do next, when to stop, what to recommend and/or estimate.

We use mixture martingales to design confidence regions, based on self-normalised sums, for exponential family multi-armed bandit models. We argue that these confidence regions are the tightest known, and match, in spirit, established statistical lower bounds.

We then apply the obtained deviation inequalities to the design of
confidence intervals by means of explicit projections,
  stopping rules by means of GLR statistics,
  and asymptotically optimal sampling rules by a tight analysis of the Track-and-Stop algorithm. The fact that we are pushing the state of the art in each of these areas clearly demonstrates the generic appeal of the mixture martingale approach.

\acks{Emilie Kaufmann acknowledges the support of the French Agence Nationale de la Recherche (ANR), under grants BADASS (ANR-16-CE40-0002) and BOLD (ANR-19-CE23-0026-04) and the European CHIST-ERA project DELTA. Wouter Koolen acknowledges support from the Netherlands Organization for Scientific Research (NWO) under Veni grant 639.021.439. We are grateful to INRIA Associate Team ${}^6$PAC.
}

\bibliography{biblioBandits} 

\begin{thebibliography}{49}
\providecommand{\natexlab}[1]{#1}
\providecommand{\url}[1]{\texttt{#1}}
\expandafter\ifx\csname urlstyle\endcsname\relax
  \providecommand{\doi}[1]{doi: #1}\else
  \providecommand{\doi}{doi: \begingroup \urlstyle{rm}\Url}\fi

\bibitem[Abbasi-Yadkori et~al.(2011)Abbasi-Yadkori, D.P{\'a}l, and
  C.Szepesv{\'a}ri]{YadLinear11}
Y.~Abbasi-Yadkori, D.P{\'a}l, and C.Szepesv{\'a}ri.
\newblock {Improved Algorithms for Linear Stochastic Bandits}.
\newblock In \emph{{Advances in Neural Information Processing Systems}}, 2011.

\bibitem[Balsubramani(2015)]{Balsubramani15}
A.~Balsubramani.
\newblock Sharp finite-time iterated-logarithm martingale concentration.
\newblock \emph{arXiv:1405.2639}, 2015.

\bibitem[Berge(1963)]{Berge63}
C.~Berge.
\newblock \emph{{Topological Spaces}}.
\newblock Oliver \& Boyd, 1963.

\bibitem[Bubeck and Cesa-Bianchi(2012)]{Bubeck:Survey12}
S.~Bubeck and N.~Cesa-Bianchi.
\newblock {Regret analysis of stochastic and nonstochastic multi-armed bandit
  problems}.
\newblock \emph{Fondations and Trends in Machine Learning}, 5(1):\penalty0
  1--122, 2012.

\bibitem[Bubeck et~al.(2011)Bubeck, Munos, and Stoltz]{Bubeckal11}
S.~Bubeck, R.~Munos, and G.~Stoltz.
\newblock {Pure Exploration in Finitely Armed and Continuous Armed Bandits}.
\newblock \emph{Theoretical Computer Science 412, 1832-1852}, 412:\penalty0
  1832--1852, 2011.

\bibitem[Capp{\'e} et~al.(2013)Capp{\'e}, Garivier, Maillard, Munos, and
  Stoltz]{KLUCBJournal}
O.~Capp{\'e}, A.~Garivier, O.-A. Maillard, R.~Munos, and G.~Stoltz.
\newblock {{K}ullback-{L}eibler upper confidence bounds for optimal sequential
  allocation}.
\newblock \emph{Annals of Statistics}, 41(3):\penalty0 1516--1541, 2013.

\bibitem[Chen et~al.(2017)Chen, Gupta, Li, Qiao, and Wang]{ChenGLQW17}
L.~Chen, A.~Gupta, J.~Li, M.~Qiao, and R.~Wang.
\newblock Nearly optimal sampling algorithms for combinatorial pure
  exploration.
\newblock In \emph{Proceedings of the 30th Conference on Learning Theory
  (COLT)}, 2017.

\bibitem[Chen et~al.(2014)Chen, Lin, King, Lyu, and Chen]{Chen14ComBAI}
S.~Chen, T.~Lin, I.~King, M.~Lyu, and W.~Chen.
\newblock {Combinatorial Pure Exploration of Multi-Armed Bandits}.
\newblock In \emph{{Advances in Neural Information Processing Systems}}, 2014.

\bibitem[Chernoff(1959)]{Chernoff59}
H.~Chernoff.
\newblock {Sequential design of Experiments}.
\newblock \emph{The Annals of Mathematical Statistics}, 30\penalty0
  (3):\penalty0 755--770, 1959.

\bibitem[Combes et~al.(2017)Combes, Magureanu, and Proutiere]{NIPS2017_6773}
R.~Combes, S.~Magureanu, and A.~Proutiere.
\newblock Minimal exploration in structured stochastic bandits.
\newblock In \emph{Advances in Neural Information Processing Systems 30}, pages
  1763--1771. Curran Associates, Inc., 2017.

\bibitem[Darling and Robbins(1968)]{Darling68}
D.~Darling and H.~Robbins.
\newblock Some further remarks on inequalities for sample sums.
\newblock \emph{Proceedings of the National Academy of Sciences},
  60(4):\penalty0 1175--1182, 1968.

\bibitem[Dawid and Vovk(1999)]{philipdawid1999}
A.~P. Dawid and V.~G. Vovk.
\newblock Prequential probability: principles and properties.
\newblock \emph{Bernoulli}, 5\penalty0 (1):\penalty0 125--162, 02 1999.

\bibitem[Dawid et~al.(2011)Dawid, de~Rooij, Shafer, Shen, Vereshchagin, and
  Vovk]{DAWID2011157}
A.~P. Dawid, S.~de~Rooij, G.~Shafer, A.~Shen, N.~Vereshchagin, and V.~Vovk.
\newblock Insuring against loss of evidence in game-theoretic probability.
\newblock \emph{Statistics \& Probability Letters}, 81\penalty0 (1):\penalty0
  157 -- 162, 2011.
\newblock ISSN 0167-7152.

\bibitem[de~la Peña et~al.(2004)de~la Peña, Klass, and Lai]{DeLaPenaal04}
V.~H. de~la Peña, M.~Klass, and T.~L. Lai.
\newblock {Self-Normalized Processes: Exponential inequalities, moment bounds
  and iterated logarithm laws}.
\newblock \emph{The Annals of Probability}, 32(3A):\penalty0 1902--1933, 2004.

\bibitem[de~la Peña et~al.(2009)de~la Peña, Lai, and Q.]{DeLaPenaal09Book}
V.~H. de~la Peña, T.~L. Lai, and S.~Q.
\newblock \emph{{Self-normalized processes. Limit Theory and Statistical
  applications}}.
\newblock Springer, 2009.

\bibitem[Degenne and Koolen(2019)]{multiple.answers}
R.~Degenne and W.~M. Koolen.
\newblock Pure exploration with multiple correct answers.
\newblock In \emph{Advances in Neural Information Processing Systems (NeurIPS)
  32}, pages 14591--14600. Curran Associates, Inc., Dec. 2019.

\bibitem[Degenne et~al.(2019)Degenne, Koolen, and M\'enard]{purex.games}
R.~Degenne, W.~M. Koolen, and P.~M\'enard.
\newblock Non-asymptotic pure exploration by solving games.
\newblock In \emph{Advances in Neural Information Processing Systems (NeurIPS)
  32}, pages 14492--14501. Curran Associates, Inc., Dec. 2019.

\bibitem[Even-Dar et~al.(2006)Even-Dar, Mannor, and Mansour]{EvenDaral06}
E.~Even-Dar, S.~Mannor, and Y.~Mansour.
\newblock {Action Elimination and Stopping Conditions for the Multi-Armed
  Bandit and Reinforcement Learning Problems}.
\newblock \emph{Journal of Machine Learning Research}, 7:\penalty0 1079--1105,
  2006.

\bibitem[Garivier and Capp{\'e}(2011)]{AOKLUCB}
A.~Garivier and O.~Capp{\'e}.
\newblock {The {KL-UCB} algorithm for bounded stochastic bandits and beyond}.
\newblock In \emph{{Proceedings of the 24th Conference on Learning Theory}},
  2011.

\bibitem[Garivier and Kaufmann(2016)]{GK16}
A.~Garivier and E.~Kaufmann.
\newblock Optimal best arm identification with fixed confidence.
\newblock In \emph{Proceedings of the 29th Conference On Learning Theory
  (COLT)}, 2016.

\bibitem[Garivier and Kaufmann(2021)]{GK19Epsilon}
A.~Garivier and E.~Kaufmann.
\newblock Nonasymptotic sequential tests for overlapping hypotheses applied to
  near-optimal arm identification in bandit models.
\newblock \emph{Sequential Analysis}, 40\penalty0 (1):\penalty0 61--96, 2021.

\bibitem[Garivier et~al.(2016)Garivier, Kaufmann, and Koolen]{GKK16}
A.~Garivier, E.~Kaufmann, and W.~M. Koolen.
\newblock Maximin action identification: A new bandit framework for games.
\newblock In \emph{Proceedings of the 29th Conference On Learning Theory
  (COLT)}, 2016.

\bibitem[Grant and Boyd(2017)]{cvx}
M.~Grant and S.~Boyd.
\newblock {CVX}: Matlab software for disciplined convex programming, version
  2.1.
\newblock \url{http://cvxr.com/cvx}, Mar. 2017.

\bibitem[Howard et~al.(2018)Howard, Ramdas, McAuliffe, and
  Sekhon]{Howard18Bernstein}
S.~Howard, A.~Ramdas, J.~McAuliffe, and J.~Sekhon.
\newblock Time-uniform, nonparametric, nonasymptotic confidence sequences.
\newblock \emph{arXiv:1810.08240}, 2018.

\bibitem[Howard et~al.(2020)Howard, Ramdas, McAuliffe, and
  Sekhon]{Howard20Survey}
S.~Howard, A.~Ramdas, J.~McAuliffe, and J.~Sekhon.
\newblock Time-uniform chernoff bounds via nonnegative supermartingales.
\newblock \emph{Probability Surveys}, 17:\penalty0 257--317, 2020.

\bibitem[Huang et~al.(2017)Huang, Ajallooeian, Szepesv{\'{a}}ri, and
  M{\"{u}}ller]{HuangASM17}
R.~Huang, M.~M. Ajallooeian, C.~Szepesv{\'{a}}ri, and M.~M{\"{u}}ller.
\newblock Structured best arm identification with fixed confidence.
\newblock In \emph{International Conference on Algorithmic Learning Theory
  (ALT)}, 2017.

\bibitem[Jamieson et~al.(2014)Jamieson, Malloy, Nowak, and
  Bubeck]{Jamiesonal14LILUCB}
K.~Jamieson, M.~Malloy, R.~Nowak, and S.~Bubeck.
\newblock {lil'{UCB}: an Optimal Exploration Algorithm for Multi-Armed
  Bandits}.
\newblock In \emph{{Proceedings of the 27th Conference on Learning Theory}},
  2014.

\bibitem[Jonsson et~al.(2020)Jonsson, Kaufmann, M{\'{e}}nard, Domingues,
  Leurent, and Valko]{Jonsson20}
A.~Jonsson, E.~Kaufmann, P.~M{\'{e}}nard, O.~D. Domingues, E.~Leurent, and
  M.~Valko.
\newblock Planning in {M}arkov decision processes with gap-dependent sample
  complexity.
\newblock In \emph{Advances in Neural Information Processing Systems
  (NeurIPS)}, 2020.

\bibitem[Juneja and Krishnasamy(2019)]{Juneja19}
S.~Juneja and S.~Krishnasamy.
\newblock Sample complexity of partition identification using multi-armed
  bandits.
\newblock In \emph{Conference on Learning Theory {(COLT)}}, 2019.

\bibitem[Kalyanakrishnan and Stone(2010)]{Shivaram:al10}
S.~Kalyanakrishnan and P.~Stone.
\newblock {Efficient Selection in Multiple Bandit Arms: Theory and Practice}.
\newblock In \emph{{International Conference on Machine Learning (ICML)}},
  2010.

\bibitem[Kaufmann and Koolen(2017)]{NIPS17}
E.~Kaufmann and W.~M. Koolen.
\newblock {M}onte-{C}arlo tree search by best arm identification.
\newblock In \emph{Advances in Neural Information Processing Systems (NIPS)},
  2017.

\bibitem[Kaufmann et~al.(2016)Kaufmann, Capp{\'e}, and Garivier]{JMLR15}
E.~Kaufmann, O.~Capp{\'e}, and A.~Garivier.
\newblock {On the Complexity of Best Arm Identification in Multi-Armed Bandit
  Models}.
\newblock \emph{Journal of Machine Learning Research}, 17\penalty0
  (1):\penalty0 1--42, 2016.

\bibitem[Kaufmann et~al.(2018)Kaufmann, Koolen, and
  Garivier]{kaufmann2018sequential}
E.~Kaufmann, W.~M. Koolen, and A.~Garivier.
\newblock Sequential test for the lowest mean: From {T}hompson to {M}urphy
  sampling.
\newblock In \emph{Advances in Neural Information Processing Systems (NeurIPS)
  31}, pages 6333--6343, Dec. 2018.

\bibitem[Koolen and van Erven(2015)]{squint}
W.~M. Koolen and T.~van Erven.
\newblock Second-order quantile methods for experts and combinatorial games.
\newblock In \emph{Proceedings of the 28th Annual Conference on Learning Theory
  (COLT)}, pages 1155--1175, June 2015.

\bibitem[Lai(1976)]{Lai76}
T.~Lai.
\newblock On confidence sequences.
\newblock \emph{The Annals of Statistics}, 4(2):\penalty0 265--280, 1976.

\bibitem[Lai and Robbins(1985)]{LaiRobbins85bandits}
T.~L. Lai and H.~Robbins.
\newblock {Asymptotically efficient adaptive allocation rules}.
\newblock \emph{Advances in Applied Mathematics}, 6\penalty0 (1):\penalty0
  4--22, 1985.

\bibitem[Lattimore and Szepesvari(2019)]{BanditBook}
T.~Lattimore and C.~Szepesvari.
\newblock \emph{{Bandit Algorithms}}.
\newblock Cambridge University Press, 2019.

\bibitem[Locatelli et~al.(2016)Locatelli, Gutzeit, and
  Carpentier]{thresholding}
A.~Locatelli, M.~Gutzeit, and A.~Carpentier.
\newblock An optimal algorithm for the thresholding bandit problem.
\newblock In M.~Balcan and K.~Q. Weinberger, editors, \emph{Proceedings of the
  33nd International Conference on Machine Learning, {ICML} 2016, New York
  City, NY, USA, June 19-24, 2016}, volume~48 of \emph{{JMLR} Workshop and
  Conference Proceedings}, pages 1690--1698. JMLR.org, 2016.

\bibitem[Magureanu et~al.(2014)Magureanu, Combes, and
  Prouti{\`e}re]{Combes14Lip}
S.~Magureanu, R.~Combes, and A.~Prouti{\`e}re.
\newblock {Lipschitz Bandits: Regret lower bounds and optimal algorithms}.
\newblock In \emph{{Proceedings on the 27th Conference On Learning Theory}},
  2014.

\bibitem[Maillard(2019)]{Maillard19HDR}
O.~Maillard.
\newblock \emph{Mathematics of Statistical Sequential Decision Making}.
\newblock 2019.

\bibitem[Robbins(1952)]{Robbins52Freq}
H.~Robbins.
\newblock {Some aspects of the sequential design of experiments}.
\newblock \emph{Bulletin of the American Mathematical Society}, 58(5):\penalty0
  527--535, 1952.

\bibitem[Robbins(1970)]{Robbins70LIL}
H.~Robbins.
\newblock {Statistical Methods Related to the law of the iterated logarithm}.
\newblock \emph{Annals of Mathematical Statistics}, 41(5):\penalty0 1397--1409,
  1970.

\bibitem[Shafer et~al.(2011)Shafer, Shen, Vereshchagin, and
  Vovk]{shafer2011test}
G.~Shafer, A.~Shen, N.~Vereshchagin, and V.~Vovk.
\newblock Test martingales, {B}ayes factors and p-values.
\newblock \emph{Statistical Science}, 26\penalty0 (1):\penalty0 84--101, 2011.

\bibitem[Teraoka et~al.(2014)Teraoka, Hatano, and Takimoto]{Teraoka14MCTS}
K.~Teraoka, K.~Hatano, and E.~Takimoto.
\newblock Efficient sampling method for {M}onte {C}arlo tree search problem.
\newblock \emph{IEICE Transactions on Infomation and Systems}, pages 392--398,
  2014.

\bibitem[Thompson(1933)]{Thompson33}
W.~R. Thompson.
\newblock {On the likelihood that one unknown probability exceeds another in
  view of the evidence of two samples}.
\newblock \emph{Biometrika}, 25:\penalty0 285--294, 1933.

\bibitem[Ville(1939)]{Ville39}
J.~Ville.
\newblock \emph{\'Etude critique de la notion de collectif}.
\newblock Gauthier-Villars, 1939.

\bibitem[Wilks(1938)]{Wilks38}
S.~Wilks.
\newblock {The Large-Sample Distribution of the Likelihood Ratio for Testing
  Composite Hypotheses}.
\newblock \emph{The Annals of Mathematical Statistics}, 9(1):\penalty0 60--62,
  1938.

\bibitem[Zhao et~al.(2016)Zhao, Zhou, Sabharwal, and Ermon]{ZhaoZSE16}
S.~Zhao, E.~Zhou, A.~Sabharwal, and S.~Ermon.
\newblock Adaptive concentration inequalities for sequential decision problems.
\newblock In \emph{Advances in Neural Information Processing (NIPS)}, 2016.

\bibitem[Zhou et~al.(2017)Zhou, Li, and Zhu]{pmlr-v70-zhou17b}
Y.~Zhou, J.~Li, and J.~Zhu.
\newblock Identify the {N}ash equilibrium in static games with random payoffs.
\newblock In \emph{Proceedings of the 34th International Conference on Machine
  Learning}, volume~70 of \emph{Proceedings of Machine Learning Research},
  pages 4160--4169, International Convention Centre, Sydney, Australia, Aug.
  2017. PMLR.

\end{thebibliography}

\appendix

\section{Proof of Proposition~\ref{prop:Lambert}} \label{proof:Lambert}
We may write
\[
  h^{-1}(x)
  ~=~
  \inf_{z \ge 1} z \del*{x-1 + \ln \frac{z}{z-1}}
\]
Plugging in the sub-optimal feasible choice $z = 1+\frac{1}{(x-1)+\sqrt{2 (x-1)}}$ reveals
\begin{align*}
  h^{-1}(x)
  &~\le~
   \del*{
    1
    + \frac{1}{(x-1)+\sqrt{2 (x-1)}}
   } \del*{
    x-1 + \ln \del*{x+\sqrt{2 (x-1)}}
    }
  \\
  &~\le~
    1 + (x-1) + \ln \del*{x+\sqrt{2 (x-1)}}
    .
\end{align*}
The last inequality uses
$
  \ln \del*{x+\sqrt{2 (x-1)}}
  \le
  \sqrt{2 (x-1)}
$
which holds with equality at $x=1$ and whose gap is increasing (as can be checked by differentiation).

\section{Additional Proofs for Exponential Families}\label{details:Expo}

\subsection{Proof of Lemma~\ref{lem:ExistsMart}}

Given any probability distribution $\pi$, recall that the associated mixture martingale is defined as 
\[Z_a^{\pi}(t) = \int\exp\left(\lambda S_a(t) - \phi_{\mu_a}(\lambda)N_a(t) \right) \dif \pi(\lambda).\]

The first step of the construction is Lemma~\ref{lem:KLMart}, which relates the deviation of $N_a(t)d^+(\hat{\mu}_a(t),\mu_a)$ and $N_a(t)d^-(\hat{\mu}_a(t),\mu_a)$ to those of $\eta S_a(t) - \phi_{\mu_a}(\eta)N_a(t) $ for a well chosen $\eta$, provided that $N_a(t)$ belongs to some ``slice'' $[(1+\xi)^{i-1},(1+\xi)^i]$.

\begin{lemma}\label{lem:KLMart} Fix $i \in \N^*$, $x>0$ and $\xi > 0$. There exist $\eta_i^+(x,\xi)$ and $\eta_i^-(x,\xi)$ such that, if $N_a(t) \in [(1+\xi)^{i-1},(1+\xi)^i]$ it holds that
\begin{eqnarray*}\left\{N_a(t)d^+(\hat{\mu}_a(t),\mu_a) \geq x\right\} & \subseteq & \left\{ \eta_i^+ S_a(t) - N_a(t) \phi_{\mu_a}(\eta_i^+) \geq \frac{x}{1+\xi}\right\}\\
\left\{N_a(t)d^-(\hat{\mu}_a(t),\mu_a) \geq x\right\} & \subseteq & \left\{ \eta_i^- S_a(t) - N_a(t) \phi_{\mu_a}(\eta_i^-) \geq \frac{x}{1+\xi}\right\}.\end{eqnarray*}
\end{lemma}

The next step is to relate the deviation of $X_a(t)$ to those of a martingale for every $t\in\N$ and not only for $N_a(t)$ is some slice: this will be achieved by a mixture martingale with a well-chosen discrete prior. In the sequel, we consider the (most complicated) case in which $X_a(t) = Y_a(t)$ for all $t$. Given $x$, we define the following probability distribution. Let
\[\begin{array}{cccccc}
   \gamma_i& =& \frac{1}{2}\frac{1}{i^{2} \zeta(2)} & x_i & = & x + \ln\left(\frac{1}{\gamma_i}\right) \\
   \eta_i^+ & = & \eta_i^+(x_i,\xi) & \eta_i^- & = & \eta_i^-(x_i,\xi), \\
  \end{array}
\]
where $\eta_i^{\pm} (x,\xi)$ are defined in Lemma~\ref{lem:KLMart}. We define the discrete prior
\[\pi = \sum_{i =1}^{\infty}\gamma_i \delta_{\eta_i^+} + \sum_{i =1}^{\infty}\gamma_i \delta_{\eta_i^-}\]
and the corresponding mixture martingale 
\[Z_a^{\pi}(t) = \sum_{i=1}^\infty \gamma_i Z_a^{\eta_i^+}(t) + \sum_{i=1}^\infty \gamma_i Z_a^{\eta_i^-}(t),\]
where by a slight abuse of notation, $Z_a^{\eta}(t) = Z_a^{\delta_{\eta}}(t)=\exp(\eta S_a(t) - \phi_{\mu_a}(\eta)N_a(t))$ for $\eta \in \R$. 

In the case $X_a(t) = Y_a^+(t)$, this prior is modified by taking $\gamma_i = \frac{1}{i^{2} \zeta(2)}$ and $ \pi = \sum_{i =1}^{\infty}\gamma_i \delta_{\eta_i^+}$, while for $X_a(t) = Y_a^-(t)$, one defines  $\pi = \sum_{i =1}^{\infty}\gamma_i \delta_{\eta_i^-}.$
We continue the proof assuming $X_a(t) = Y_a(t)$ for all $t$. The proof of the two other cases follow the exact same lines, with the corresponding priors, leading to an improved constant $C(\xi) = \frac{\ln \zeta(2)}{(\ln (1+\xi))^2}$.
\begin{align*}
 & \left\{X_a(t) - (1+\xi) \ln C(\xi) \geq x \right\} \\
 & \ \ \ \subseteq \left\{\left[N_a(t) d(\hat{\mu}_a(t),\mu_a) - 3 \ln(1+\ln(N_a(t)))\right]^+ \geq x + (1+\xi)\ln C(\xi)  \right\} \\
  & \ \ \ = \left\{N_a(t) d(\hat{\mu}_a(t),\mu_a) - 3 \ln(1+\ln(N_a(t))) \geq x + (1+\xi)\ln C(\xi)  \right\},
\end{align*}
where we use that $x + (1+\xi)\ln C(\xi) > 0$ as $\xi < 1/2$. Now, as $2(1+\xi) < 3$, one has
\begin{align*}
 & \left\{X_a(t) - (1+\xi) \ln C(\xi) \geq x \right\} \\
 & \ \ \ \subseteq \left\{N_a(t) d\left(\hat{\mu}_a(t),\mu_a\right)  - 2(1+\xi) \ln\left(1 + \ln(N_a(t))\right)\geq x + (1+\xi)\ln\left(\frac{2\zeta(2)}{\ln(1+\xi)^{2}}\right)\right\} \\
 & \ \ \ \subseteq \left\{N_a(t) d\left(\hat{\mu}_a(t),\mu_a\right) \geq x + (1+\xi)\ln\left(\frac{2\zeta(2) (1+\ln(N_a(t))^{2}}{\ln(1+\xi)^{2}}\right)\right\}\\
 & \ \ \ \subseteq \left\{N_a(t) d\left(\hat{\mu}_a(t),\mu_a\right) \geq x + (1+\xi)\ln\left(\frac{2\zeta(2) (\ln(1+\xi)+\ln(N_a(t))^{2}}{\ln(1+\xi)^{2}}\right)\right\},
\end{align*}
where the last inequality uses $\ln(1+\xi) \leq \ln(3/2) \leq 1$. Now, let $i(t)\geq 1$ be such that $N_a(t) \in [(1+\xi)^{i-1},(1+\xi)^i]$. One can observe that $\frac{\ln N_a(t)}{\ln(1+\xi)} \geq i(t) - 1$. Using Lemma~\ref{lem:KLMart},
\begin{align*}
 & \left\{X_a(t) - (1+\xi) \ln C(\xi) \geq x \right\} \\
 & \ \ \ \subseteq \left\{N_a(t) d\left(\hat{\mu}_a(t),\mu_a\right) \geq x + (1+\xi)\ln\left(\frac{1}{\gamma_{i(t)}}\right)\right\} \\
 & \ \ \ \subseteq \left\{\max_{\eta \in \left\{\eta_{i(t)}^+,\eta_{i(t)}^-\right\}} \left[\eta S_a(t) - \phi_{\mu_a}(\eta)N_a(t)\right]\geq \frac{1}{1+\xi}\left[x + (1+\xi)\ln\left(\frac{1}{\gamma_{i(t)}}\right)\right]\right\}\\
 & \ \ \ \subseteq \left\{\max_{\eta \in \left\{\eta_{i(t)}^+,\eta_{i(t)}^-\right\}} \gamma_{i(t)}\exp\left(\eta S_a(t) - \phi_{\mu_a}(\eta)N_a(t)\right)\geq e^{\frac{x}{1+\xi}} \right\} \\
 & \ \ \ \subseteq \left\{\max_{i\in \N}\max_{\eta \in \left\{\eta_{i}^+,\eta_{i}^-\right\}} \gamma_{i}\exp\left(\eta S_a(t) - \phi_{\mu_a}(\eta)N_a(t)\right)\geq e^{\frac{x}{1+\xi}} \right\} \\
 &\ \ \ \subseteq \left\{Z_a^{\pi}(t)\geq e^{\frac{x}{1+\xi}} \right\}.
\end{align*}

\paragraph{Proof of Lemma~\ref{lem:KLMart}} We introduce the notation $\theta$ for the natural parameter associated to $\mu_a$, defined as $\theta = \dot{b}^{-1}(\mu_a)$. Define $\eta_i^+<0$ and $\eta_i^->0$ such that 
\[\KL(\theta + \eta_i^+, \theta) = \KL(\theta + \eta_i^-, \theta) = \frac{x}{(1+\xi)^{i}}.\]
where $\KL(\theta, \theta')$ is the Kullback-Leibler divergence between the distributions of natural parameter $\theta$ and $\theta'$. Moreover, using some properties of the KL-divergence, one can write
\begin{eqnarray*}
 \KL(\theta + \eta_i^+, \theta) & = & \eta_i^+ \mu_i^+ - \phi_{\mu_a}(\eta_i^+) \ \ \text{with } \ \ \mu_i^+ := \dot{b}^{-1}(\theta + \eta_i^+) < \mu_a, \\
\KL(\theta + \eta_i^-, \theta) & = & \eta_i^- \mu_i^- - \phi_{\mu_a}(\eta_i^-) \ \ \text{with } \ \ \mu_i^- := \dot{b}^{-1}(\theta + \eta_i^-) >\mu_a.
 \end{eqnarray*}
For  $N_a(t) \in [(1+\xi)^{i-1},(1+\xi)^i]$, one has
\begin{eqnarray*}
\left\{N_a(t)d^+(\hat{\mu}_a(t),\mu_a) \geq x\right\}  & \subseteq & \left\{d^+(\hat{\mu}_a(t),\mu_a) \geq \frac{x}{(1+\xi)^i}\right\}\\
 & \subseteq & \left\{\hat{\mu}_a(t) \leq \mu_i^+\right\} \\
 & \subseteq & \left\{\eta_i^+\hat{\mu}_a(t) - \phi_{\mu_a}(\eta_i^+) \geq \KL(\theta + \eta_i^+, \theta) \right\}\\
  & \subseteq & \left\{(1+\xi)^{i-1}\left(\eta_i^+\hat{\mu}_a(t) - \phi_{\mu_a}(\eta_i^+)\right) \geq \frac{x}{1+\xi} \right\}\\
  & \subseteq & \left\{N_a(t) \del*{\eta_i^+\hat{\mu}_a(t) - \phi_{\mu_a}(\eta_i^+)} \geq \frac{x}{1+\xi} \right\},
\end{eqnarray*}
where the third inclusion uses that $\eta_i^+$ is negative. Similarly, using this time that $\eta_i^- > 0$ yields  
\begin{eqnarray*}
\left\{N_a(t)d^-(\hat{\mu}_a(t),\mu_a) \geq x\right\}  & \subseteq &  \left\{\hat{\mu}_a(t) \geq \mu_i^-\right\} \\
 & \subseteq & \left\{\eta_i^-\hat{\mu}_a(t) - \phi_{\mu_a}(\eta_i^-) \geq \KL(\theta + \eta_i^-, \theta) \right\}\\
  & \subseteq & \left\{N_a(t) \del*{\eta_i^-\hat{\mu}_a(t) - \phi_{\mu_a}(\eta_i^-)} \geq \frac{x}{1+\xi} \right\},
\end{eqnarray*}
which concludes the proof.

\subsection{Tight Tuning: Proof of Lemma~\ref{lem:tuning.tight}}\label{sec:tuning}

In this section we prove Lemma~\ref{lem:tuning.tight}, which gives the tightest possible tuning achievable with our method. We first prove two auxiliary lemmas.
\begin{lemma}\label{lem:inner.value}
  Let $x \ge 0$. Then
  \[
    \inf_{q \in [0,1]}~
    \frac{
      x
      - \ln \del*{1-q}
    }{
      q
    }
    ~=~
    h^{-1} \del*{1 + x}
    .
  \]
\end{lemma}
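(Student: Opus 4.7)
The plan is to reduce the minimization to a one-dimensional optimality condition that directly characterizes $h^{-1}(1+x)$. First, I would substitute $u = 1/(1-q)$, which is a bijection from $q \in [0,1)$ onto $u \in [1,\infty)$, satisfying $q = (u-1)/u$ and $-\ln(1-q) = \ln u$. Under this change of variable the objective rewrites as
\[
  \frac{x - \ln(1-q)}{q} ~=~ \frac{u\,(x + \ln u)}{u-1}.
\]

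Next, I would compute the first-order condition in $u$ (equivalently, differentiate in $q$ and set the numerator to zero, giving $q/(1-q) = x - \ln(1-q)$). Either route simplifies to the equation
\[
  u - \ln u ~=~ 1 + x,
  \qquad\text{i.e.}\qquad h(u) = 1+x.
\]
Since Proposition~\ref{prop:Lambert} says $h$ is strictly increasing on $[1,\infty)$ with range $[1,\infty)$, and $1+x \ge 1$, there is a unique critical point $u^\star = h^{-1}(1+x) \ge 1$.

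To evaluate the objective at $u^\star$, I would use the critical-point identity $x + \ln u^\star = u^\star - 1$ to collapse the rational expression:
\[
  \frac{u^\star(x + \ln u^\star)}{u^\star - 1} ~=~ \frac{u^\star(u^\star - 1)}{u^\star - 1} ~=~ u^\star ~=~ h^{-1}(1+x).
\]
Finally, to confirm this unique critical point is indeed the global minimum, I would examine the boundary behavior. As $u \to \infty$ (i.e.\ $q \to 1^-$), the objective behaves like $x + \ln u \to \infty$. As $u \to 1^+$ (i.e.\ $q \to 0^+$), for $x > 0$ the objective blows up (numerator stays $\ge x>0$ while the factor $u/(u-1)$ diverges), and for $x = 0$ a Taylor expansion gives the limit $1 = h^{-1}(1)$, matching the stated value. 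Smoothness on the open interval together with a single critical point of a function diverging at one endpoint (and matching the claimed value at the other in the degenerate case $x=0$) implies the infimum equals $h^{-1}(1+x)$, as desired. The only mildly delicate point is the boundary check at $q \to 0^+$ when $x = 0$, which is handled by a standard expansion; the rest is direct algebra.
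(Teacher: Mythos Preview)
Your proposal is correct and follows essentially the same route as the paper: both identify the critical point via the equation $h\!\bigl(1/(1-q)\bigr)=1+x$ and evaluate the objective there to obtain $h^{-1}(1+x)$. The only stylistic difference is that the paper justifies global optimality by noting the objective is convex in $1/q$, whereas you verify the same conclusion by checking boundary behaviour; your substitution $u=1/(1-q)$ just makes explicit what the paper writes directly.
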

\begin{proof}
  The objective is convex in $\frac{1}{q}$, and hence minimised at zero derivative. Cancelling the derivative requires
\[
  1 + x
  ~=~
  \frac{1}{1-q}
  - \ln \frac{1}{1-q}
  ~=~
  h\del*{\frac{1}{1-q}}
  \qquad
  \text{so that}
  \qquad
  q
  ~=~
  1-\frac{1}{h^{-1} \del*{1 + x}}
\]
where the rewrite in terms of $h$ is allowed since $1/(1-q) \ge 1$. Plugging this in, we find the value as stated.
\end{proof}

\begin{definition}\label{def:htilde}
For any $z \in [1, e]$ and $x \ge 0$, we define
\[
  \tilde h_z(x)
  ~=~
  \min_{y \in [1,z]}~ y \del*{x - \ln \ln y}
  .
\]
\end{definition}

We can now make the connection to \eqref{eq:tilde.h}.

\begin{lemma}\label{lem:outer.value}
Fix $z \in [1, e]$. Then
\[
  \tilde h_z(x)
  ~=~
  \begin{cases}
    \exp \del*{\frac{1}{h^{-1}(x)}} h^{-1}(x)
    & \text{if $x
  \ge
  h \del*{
    \frac{1}{\ln z}
  }
  $,}
\\
z \del*{x - \ln \ln z}
& \text{o.w.}
\end{cases}
\]
\end{lemma}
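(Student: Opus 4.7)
The plan is a one-variable calculus problem: minimise $f(y) = y(x - \ln \ln y)$ over $y \in [1,z]$. I would first solve the unconstrained problem on $(1,e]$ and then impose the constraint $y \le z$.

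Computing $f'(y) = x - \ln \ln y - \frac{1}{\ln y}$ and setting it to zero, I would substitute $v = 1/\ln y$ (so $y = e^{1/v}$), which turns the critical-point equation into $x = v - \ln v = h(v)$. The interval $y \in (1,e]$ corresponds precisely to $v \in [1, \infty)$, which by Proposition~\ref{prop:Lambert} is the domain where $h$ is invertible. So for $x \ge 1$ the unique interior critical point is $y^* = \exp(1/h^{-1}(x))$. At $y = y^*$ the critical-point equation gives $x - \ln \ln y^* = 1/\ln y^* = h^{-1}(x)$, hence $f(y^*) = y^* \cdot h^{-1}(x) = \exp(1/h^{-1}(x))\, h^{-1}(x)$, exactly the first branch.

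Next I would verify this is indeed the minimum. The auxiliary function $g(y) := \ln \ln y + \frac{1}{\ln y}$ has derivative of sign $\ln y - 1$, so $g$ is strictly decreasing from $+\infty$ (at $y \to 1^+$) to $1$ (at $y = e$). Thus $f'(y) = x - g(y)$ changes from negative to positive at $y^*$, confirming that $y^*$ is the unique interior minimiser on $(1,e]$.

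Finally, I would impose $y \le z$. By monotonicity of $h$, we have $y^* \le z \iff 1/\ln z \le h^{-1}(x) \iff x \ge h(1/\ln z)$, which is exactly the case-split in the statement; in this regime the interior optimum is feasible and the first branch applies. Otherwise $y^* > z$, $f$ is strictly decreasing on $[1,z]$, and the minimum is attained at $y = z$ with value $z(x - \ln \ln z)$, the second branch. The low-$x$ edge case where $g(y) > x$ on all of $(1,e]$ (so no interior critical point exists) is consistent with the second branch, since $h(1/\ln z) \ge 1 > x$ there. The only delicate point is tracking the domain of the substitution carefully so that Proposition~\ref{prop:Lambert}'s $h^{-1}$ is applied in its proper range; everything else is a mechanical derivative/monotonicity check.
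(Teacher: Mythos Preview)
Your proposal is correct and follows essentially the same route as the paper's proof: compute $f'(y)=x-\ln\ln y-\tfrac{1}{\ln y}=x-h(1/\ln y)$, locate the interior critical point $y^*=e^{1/h^{-1}(x)}$, verify it is a minimum, and then decide whether $y^*\le z$ to obtain the case split. The only cosmetic difference is that the paper asserts convexity of $f$ on $(1,e]$ directly (via $f''(y)=(1-\ln y)/(y(\ln y)^2)>0$), whereas you reach the same conclusion by showing $g(y)=\ln\ln y+1/\ln y$ is strictly decreasing so that $f'=x-g$ changes sign exactly once; both arguments are equivalent and your handling of the $x<1$ edge case is fine.
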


\begin{proof}
  The objective in Definition~\ref{def:htilde} is convex on $y \in [1,e]$, and its derivative is $x - h(1/\ln y)$. When $x \le h(1/\ln z)$ it is decreasing on the entire domain $y \in [1,z]$, and hence minimised at $y=z$, yielding the second case. If on the other hand $x \ge h(1/\ln z)$, the derivative of the objective is cancelled at
$y = e^{\frac{1}{h^{-1}(x)}}$, and substitution reveals that the value equals
\[
  e^{\frac{1}{h^{-1}(x)}} \del*{x + \ln h^{-1}(x)}
  ~=~
  e^{\frac{1}{h^{-1}(x)}} h^{-1}(x)
  .
\]
\end{proof}

\noindent
We are now ready to prove the Lemma.

\begin{proof}(of Lemma~\ref{lem:tuning.tight})
We reorganise, apply Lemma~\ref{lem:inner.value} and then Lemma~\ref{lem:outer.value} to find
\begin{align*}
  \cC_{\text{exp}}(x)
  &~=~
  \inf_{\xi \in [0,z]}~
  (1+\xi)
  \del*{
    \inf_{q \le 1}~
    \frac{
      x
      - \ln \del*{1-q}
    }{
      q
    }
    + \ln C(\xi)
    }
  \\
  &~=~
  \inf_{\xi \in [0,z]}~
  (1+\xi)
  \del*{
    h^{-1} \del*{1 + x}
    + \ln C(\xi)
    }
  \\
  &~=~
  \inf_{\xi \in [0,z]}~
  (1+\xi)
  \del*{
    h^{-1} \del*{1 + x}
    + \ln \del*{2 \zeta(2)}
    - 2 \ln \ln (1+\xi)
  }
  \\
   &~=~
    2 \inf_{y \in [1,1+z]}~
    y
    \del*{
    \frac{
    h^{-1} \del*{1 + x}
    + \ln \del*{2 \zeta(2)}
    }{2}
    - \ln \ln y
  }
  \\
  &~=~
    2 \tilde h_{1+z}\del*{\frac{h^{-1} \del*{1 + x}
    + \ln \del*{2 \zeta(2)}
    }{2}
    }
    .
\end{align*}
\end{proof}

\subsection{A Tighter One-Arm Bound}\label{proof:onearm}


Lemma~\ref{lem:ExistsMart} allows us to directly derive valid thresholds involving only a single arm. Namely, we have
\begin{corollary}\label{crl:onearm}
  Let $\tilde h_z(x)$ be as defined in \eqref{eq:tilde.h}. For every arm $a$ and confidence parameter $x \ge 0$
  \[
    \pr \set*{\exists t \in \N: 
      X_a(t)
      \ge
      2 \tilde h_{3/2}\del*{
        \frac{
          x + \ln\left(2\zeta(2)\right)
        }{
          2
        }
      }
    }
    ~\le~
    e^{-x}
    .
\]
\end{corollary}
\begin{proof}
  By Lemma~\ref{lem:ExistsMart}, for every $\xi \in [0,1/2]$,
\[
  \pr\left\{X_a(t) - (1+\xi)  \ln\left(\frac{2\zeta(2)}{(\ln (1+\xi))^2}\right) \geq (1+\xi) x\right\} ~\le~ \pr \left\{Z_a^{\pi((1+\xi) x)}(t) \geq e^{x}\right\}
  ~\le~
  e^{-x}
\]
Minimising the threshold w.r.t.\ $\xi$ using Lemma~\ref{lem:outer.value} results in
\[
  \min_{\xi \in [0,1/2]}~ (1+\xi) \del*{ x + \ln\left(\frac{2\zeta(2)}{(\ln (1+\xi))^2}\right)}
  ~=~
  2 \tilde h_{3/2}\del*{
    \frac{
      x + \ln\left(2\zeta(2)\right)
    }{
      2
    }
  }
  .
\]
\end{proof}
We see that the multiple-arm threshold of Theorem~\ref{thm:DevExpo} has $h^{-1}(1+x) > x$ where Corollary~\ref{crl:onearm} has just $x$. This additional blowup is the overhead that our approach incurs for controlling multiple arms by means of a ``Cramér-Chernoff'' approach.

\section{Refined Deviation Inequalities for Gaussian and Gamma Distributions}\label{sec:nicecase}

Theorem~\ref{corr:Gaussian} and Theorem~\ref{corr:Gamma} follow from a similar martingale construction, that could actually be used for other one-dimensional exponential families with divergence function $d(\cdot,\cdot)$, provided one is able to construct a continuous prior satisfying a general assumption given below. 

\begin{assumption}\label{ass:Prior} For every $\lambda \in ]0,1[$, $\mu \in \cI$, there exists a family of functions $(p^{\lambda,\mu}_t)_{t \ge 1}$ such that, for every $t \ge 1$,
\begin{equation}\forall x \in \cI, \ \ \int p^{\lambda,\mu}_t(\eta) e^{\eta t x - \phi_{\mu}(\eta) t} \dif\eta = e^{\lambda t d(x,\mu)}.\label{equ:Prior}\end{equation}
Moreover, for every $1 \le n_1 \le n_2$ and every $\eta \in \R$,
\begin{equation}\label{ineq:Prior}
  p_{n_1}^{\lambda,\mu}(\eta)
  ~\geq~
  \sqrt{\frac{n_1}{n_2}}
  p_{n_2}^{\lambda,\mu}(\eta)
  .
\end{equation}
\end{assumption}

In words, this assumption implies that for all $a$ and $t \ge 1$ there exists a prior distribution for which the corresponding mixture martingale exactly attains $e^{\lambda t d(\hat{\mu}_a(t),\mu_a)}$ and such that one can control the variation of the prior corresponding to two different time steps. Under this assumption, we are able to prove the following.

\begin{theorem}\label{thm:Unified} Assume that Assumption~\ref{ass:Prior} is satisfied and let \[C_0(t,\lambda) := \sup_{\mu \in \cI}\int p_t^{\lambda,\mu} (\eta) \dif\eta.\] Fix $\xi > 0$, $c>1$ and define
\[g_0(\lambda,\xi,c) = \ln \left[\sum_{i=1}^\infty \frac{1}{i^{\lambda c}\zeta(\lambda c)} C_0\left((1+\xi)^{i-1},\lambda\right)\right].\]
The stochastic process $X_a(t) = N_a(t) d(\hat{\mu}_a(t),\mu_a) - c \ln\del[\big]{ \ln(1+\xi) + \ln N_a(t)}$ is $g_{\xi,c}$-VCC where
\[\begin{array}{rcl}
g_{\xi,c} : (c^{-1},1] & \longrightarrow & \R^+ \\
\lambda & \mapsto &  g_0(\lambda,\xi,c) + \frac{1}{2}\ln(1+\xi) + \lambda c \ln \left(\frac{1}{\ln(1+\xi)}\right) + \ln \zeta(\lambda c).
  \end{array}
\]
\end{theorem}

Theorem~\ref{thm:Unified} directly provides a deviation inequality using Lemma~\ref{lem:OneSubset}. It thus remains to find sequences of priors satisfying Assumption~\ref{ass:Prior}, which we were able to do for two particular examples, Gaussian and Gamma distributions. One can note that finding functions $p_t^{\lambda,\mu}$ is closely related to computing a (bilateral) inverse Laplace transform. Indeed, if $q$ is the inverse Laplace transform of $e^{\lambda t d(x, \mu)}$, meaning that $\forall x : \int_{-\infty}^\infty q(s) e^{-s x} \dif s = e^{\lambda t d(x, \mu)}$, the assumption is satisfied for $p_t^{\lambda,\mu}(\eta) = t q(-\eta t) e^{\phi_{\mu}(\eta) t}$. However, computing such inverse Laplace transforms is not easy beyond Gaussian or Gamma distributions.

\paragraph{Proof of Theorem~\ref{thm:Unified}} For $i=1,2,\ldots$ we introduce grid points $T_i = (1+\xi)^{i-1}$ with prior weights $\gamma_i = \frac{1}{i^{\lambda c} \zeta(\lambda c)}$ and define the (un-normalized) martingale
\[\tilde{M}_a^\lambda(t) ~\df~ \sum_{i =1}^\infty \gamma_i \int p_{T_i}^{\lambda, \mu_a}(\eta) e^{\eta S_a(t) - \phi_{\mu_a}(\eta)N_a(t)} \dif\eta,\]
that satisfies $\tilde M_a^\lambda(0) \leq \exp(g_0(\lambda,\xi,c))$.

For $N_a(t) \in [T_i, T_{i+1}[$, we first bound the martingale from below by one of its terms, and then make use of Assumption~\ref{ass:Prior}.
\begin{eqnarray*}
 \tilde{M}_a^\lambda(t) & \geq & \gamma_i \int p_{T_i}^{\lambda, \mu_a}(\eta) e^{\eta S_a(t) - \phi_{\mu_a}(\eta)N_a(t)}\dif\eta \\
 & \geq & \sqrt{\frac{T_i}{N_a(t)}}\gamma_i\int p_{N_a(t)}^{\lambda, \mu_a}(\eta) e^{\eta S_a(t) - \phi_{\mu_a}(\eta)N_a(t)}\dif\eta \\
 &  =  & \sqrt{\frac{T_i}{N_a(t)}}\gamma_i \exp\left(\lambda N_a(t)d\left(\hat{\mu}_a(t),\mu_a\right) \right) \\
 &  \geq   & \sqrt{\frac{1}{1 + \xi}}\gamma_i \exp\left(\lambda N_a(t)d\left(\hat{\mu}_a(t),\mu_a\right) \right),
 \end{eqnarray*}
where the last inequality uses $N_a(t) \leq T_{i+1}$ and $T_i/T_{i+1} = 1/(1+\xi)$, due to the geometric grid.

Introducing the normalised martingale $M_a^\lambda(t) = \tilde M_a^\lambda(t) / \tilde{M}_a^\lambda(0)$ and further using the expression of $\gamma_i$ yields, for all $t$ such that $N_a(t) \in [T_i, T_{i+1}[$,
\begin{eqnarray*}
  {M}_a^\lambda(t) & \geq &
\tilde M_a^\lambda(t)e^{-g_0(\lambda, \xi, c)} ~\geq~
                            e^{\lambda N_a(t)d\left(\hat{\mu}_a(t),\mu_a\right) - g_0(\lambda,\xi,c) - \frac{1}{2}\ln(1+\xi) - \ln \zeta(\lambda c) - \lambda c \ln (i)}.
 \end{eqnarray*}
Finally, using that $i \leq 1 + \ln(N_a(t))/\ln(1+\xi)$ yields the desired
\[
 {M}_a^\lambda(t)
~\geq~
\exp\left(
  \lambda
    X_a(t)
  -
  g_{\xi,c}(\lambda)
\right)
.
\]
It remains to check the case $N_a(t) =0$. Then $X_a(t) = -\infty$, so clearly $M_a^\lambda(t) = 1 > e^{-\lambda \infty}$.
\qed


\subsection{Application to Gaussian Distributions}\label{sec:gaussian.case}

In the Gaussian case, direct computations show that Assumption~\ref{ass:Prior} holds for the choice 
\[p_t^{\lambda,\mu} (\eta) = \frac{1}{\sqrt{1 - \lambda}} \frac{1}{\sqrt{2\pi \sigma_t^2}} \exp\left(-\frac{\eta^2}{2\sigma_t^2}\right),\]
where $\sigma_t^2 = \frac{\lambda}{t(1-\lambda)}$. As a consequence $C_0(t,\lambda) = \frac{1}{\sqrt{1-\lambda}}$ and $g_0(\lambda,\xi,c) = - \frac{1}{2}\ln \left(1 - \lambda\right)$.
Note that the inequality \eqref{ineq:Prior} is actually an equality. We are now ready to prove Theorem~\ref{corr:Gaussian}. 


\paragraph{Proof of Theorem~\ref{corr:Gaussian}} By Theorem~\ref{thm:Unified}, picking $c=2$, for every $\xi>0$ and $\lambda \in ]1/2,1[$ there exists a test martingale $M_a^{\lambda,\xi}(t)$ such that
\[\forall t \in \N, \ M_a^{\lambda,\xi}(t) \geq e^{\lambda\left[N_a(t)d(\hat{\mu}_a(t),\mu_a) - f_{\xi}(N_a(t))\right] - g_{\xi}(\lambda)}\]
with
\begin{eqnarray*}
f_{\xi}(s) & = & 2 \ln( \ln(1+\xi) + \ln(s)) \\
g_{\xi}(\lambda) & = & \frac{1}{2}\ln(1+\xi) + 2\lambda  \ln \left(\frac{1}{\ln(1+\xi)}\right) + \ln \zeta(2\lambda) - \frac{1}{2}\ln \left(1 - \lambda\right)
\end{eqnarray*}
It can be checked that the choice of $\xi$ leading to the smallest $g_{\xi}$ function is $\ln(1+\xi) = 4\lambda $. Denoting by $\xi^*(\lambda)$ this value, it holds that
\begin{eqnarray*}
  g_G(\lambda) ~=~
g_{\xi^*(\lambda)}(\lambda) & = & 2\lambda  - 2\lambda  \ln \left(4\lambda \right) + \ln \zeta(2\lambda) - \frac{1}{2}\ln \left(1 - \lambda\right).
\end{eqnarray*}
For every $\lambda \in ]1/2,1[$, observe that $f_{\xi^*(\lambda)}(s) \leq 2\ln(4+\ln s)$. Hence, there exists a test martingale $M_a^\lambda(t) = M_a^{\lambda,\xi^*(\lambda)}(t)$ such that
\begin{eqnarray*}
  \forall t\in \N, \ M_a^\lambda(t)
& \geq & e^{\lambda\left[N_a(t)d(\hat{\mu}_a(t),\mu_a) - 2 \ln( 4 + \ln(N_a(t)))\right] - g_G(\lambda)},
\end{eqnarray*}
which concludes the proof.

\subsection{Application to Gamma Distributions}\label{sec:gamma.case}

A Gamma distribution with shape parameter $\alpha$ and mean $\mu$ has density at $z > 0$ given by
\[
 f_{\alpha,\mu}(z) = \frac{e^{-\frac{\alpha z}{\mu }} \left(\frac{\alpha z}{\mu }\right)^\alpha}{z \Gamma (\alpha)}
  .
\]
We recover the Exponential distribution for $\alpha=1$. More generally, the set of Gamma distributions with a known shape $\alpha$ form a one-parameter exponential family for which 
\[d(\mu,\mu') = \alpha \left(\frac{\mu}{\mu'} - 1 -\ln \frac{\mu}{\mu'}\right) \ \ \text{ and } \ \ \phi_{\mu}(\eta) = \alpha \ln\left(\frac{\alpha}{\alpha - \mu\eta}\right) \ \text{for } \eta < {\alpha}/{\mu}.\]
Next we show that the family of functions 
\begin{equation}\label{eq:gamma.prior}
  p_t^\lambda(\eta) ~\df~
  \frac{\mu}{\alpha}
  \frac{(\alpha t/e)^{\lambda  \alpha t}}{\Gamma ( \lambda \alpha t)}
  \left(1-\frac{\eta  \mu }{\alpha}\right)^{-\alpha t} \left(\lambda -\frac{\eta  \mu }{\alpha}\right)_+^{\lambda  \alpha t-1}
  .
\end{equation}
leads to suitable ``priors''.
\begin{proposition}
  The family of functions defined in \eqref{eq:gamma.prior} satisfies Assumption~\ref{ass:Prior}.
\end{proposition}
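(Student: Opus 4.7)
The plan is to verify the two parts of Assumption~\ref{ass:Prior} in turn: the exact identity \eqref{equ:Prior} and the pointwise monotonicity \eqref{ineq:Prior}.

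For \eqref{equ:Prior}, I would begin from $\phi_\mu(\eta) = \alpha\ln(\alpha/(\alpha-\mu\eta))$ (stated just before the proposition), which gives $e^{-\phi_\mu(\eta) t} = (1-\eta\mu/\alpha)^{\alpha t}$. This exactly annihilates the factor $(1-\eta\mu/\alpha)^{-\alpha t}$ in $p_t^{\lambda,\mu}$, collapsing the integrand to $\frac{\mu}{\alpha}\frac{(\alpha t/e)^{\lambda\alpha t}}{\Gamma(\lambda\alpha t)}(\lambda-\eta\mu/\alpha)_+^{\lambda\alpha t-1} e^{\eta t x}$. Changing variables $u = \lambda - \eta\mu/\alpha$ (so $\eta = \alpha(\lambda-u)/\mu$, $\dif\eta = -(\alpha/\mu)\dif u$, and the support becomes $u \in (0,\infty)$) transforms the integral into $(\alpha/\mu)\, e^{\lambda\alpha t x/\mu}\int_0^\infty u^{\lambda\alpha t - 1} e^{-\alpha t x u/\mu}\dif u$. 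The inner integral is a standard Gamma integral equal to $\Gamma(\lambda\alpha t)(\mu/(\alpha t x))^{\lambda\alpha t}$; multiplying by the prefactor and simplifying the exponents yields $\exp\bigl(\lambda\alpha t[x/\mu - 1 - \ln(x/\mu)]\bigr)$, which is exactly $e^{\lambda t d(x,\mu)}$ by the Gamma KL formula recalled in the excerpt.

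For \eqref{ineq:Prior}, abbreviate $r = 1 - \eta\mu/\alpha$ and $s = (\lambda - \eta\mu/\alpha)_+$; since the prefactor $\mu/\alpha$ cancels, the $\eta$-dependent and $t$-dependent parts decouple:
\[
\frac{p_{n_1}^{\lambda,\mu}(\eta)}{p_{n_2}^{\lambda,\mu}(\eta)}
~=~
\frac{\Phi(n_1)}{\Phi(n_2)}\, \left(\frac{\lambda^\lambda\, r}{s^\lambda}\right)^{\alpha(n_2-n_1)},
\qquad \text{with } \Phi(n) \df \frac{(\lambda\alpha n/e)^{\lambda\alpha n}}{\Gamma(\lambda\alpha n)}.
\]
The algebraic heart of the argument is the pointwise inequality $\lambda^\lambda r/s^\lambda \ge 1$: setting $u = s/\lambda \ge 0$, this reduces to $\lambda u + (1-\lambda) \ge u^\lambda$, which is Young's inequality (concavity of $\ln$). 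Since $n_2-n_1 \ge 0$, the parenthesised factor contributes at least $1$.

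It therefore suffices to show $\Phi(n_1)/\Phi(n_2) \ge \sqrt{n_1/n_2}$, which I expect to be the main obstacle. Stirling's formula $\Gamma(x) = \sqrt{2\pi/x}\,(x/e)^x\, e^{\rho(x)}$ with $\rho(x) \in (0, 1/(12x))$ gives $\Phi(n) = \sqrt{\lambda\alpha n/(2\pi)}\, e^{-\rho(\lambda\alpha n)}$, so the target ratio equals $\sqrt{n_1/n_2}\cdot e^{\rho(\lambda\alpha n_2) - \rho(\lambda\alpha n_1)}$. Tracking the direction of this Stirling residual is delicate precisely because it is smallest where Young's slack vanishes (at $\eta = 0$, i.e.\ $s = \lambda$), so the two slacknesses cannot simply be traded off pointwise. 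I would close this by invoking Robbins' sharp two-sided form of Stirling together with the logarithmic convexity of $\Gamma$ to control the remainder uniformly in $n$. This Gamma-function bookkeeping is the sole analytical subtlety; once it is dispatched, the exponential-family structure supplies the rest automatically.
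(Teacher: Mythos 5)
Your verification of \eqref{equ:Prior} is correct and is essentially the paper's own computation (a linear change of variables reducing the mixture integral to a Gamma integral), and your factorisation of $p^{\lambda,\mu}_{n_1}(\eta)/p^{\lambda,\mu}_{n_2}(\eta)$ into the constant part $\Phi(n_1)/\Phi(n_2)$ times an $\eta$-dependent factor controlled by Young's inequality is also correct and equivalent to the paper's observation that the $\eta$-dependent bracket is minimised, with value $0$, at $\eta=0$.

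The step you defer, $\Phi(n_1)/\Phi(n_2)\ge\sqrt{n_1/n_2}$, is however not Gamma-function bookkeeping that Robbins' bounds or log-convexity can dispatch: it is false, and so is \eqref{ineq:Prior} itself as stated. By your own rewrite, $\Phi(n_1)/\Phi(n_2)=\sqrt{n_1/n_2}\,e^{\rho(x_2)-\rho(x_1)}$ with $x_i=\lambda\alpha n_i$, and the Stirling (Binet) remainder $\rho$ is strictly \emph{decreasing}, so this ratio is strictly below $\sqrt{n_1/n_2}$ whenever $n_2>n_1$; since at $\eta=0$ your Young factor equals $1$ exactly, there is no slack to trade, and \eqref{ineq:Prior} fails there. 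Concretely, $\alpha=\mu=1$, $\lambda=0.9$, $n_1=1$, $n_2=2$, $\eta=0$ gives $p^{\lambda,\mu}_{1}(0)/p^{\lambda,\mu}_{2}(0)\approx 0.68 < \sqrt{1/2}\approx 0.71$. You correctly sensed that this is the only delicate point, but you should know that the paper's proof does not resolve it either: it invokes the monotonicity of the Stirling error $\ln\Gamma(x)-x\ln x+x-\tfrac12\ln(2\pi/x)$ (which is indeed decreasing) and then writes down the Gamma-ratio inequality that would follow if this error were \emph{increasing}, so that step has its sign reversed; your reduction makes the issue explicit rather than introducing it. What is true, and suffices for the downstream use in Theorem~\ref{thm:Unified} and Corollary~\ref{corr:Gamma} after adjusting constants, is the weakened inequality $p^{\lambda,\mu}_{n_1}(\eta)\ge e^{-\rho(\lambda\alpha n_1)}\sqrt{n_1/n_2}\,p^{\lambda,\mu}_{n_2}(\eta)\ge e^{-1/(12\lambda\alpha n_1)}\sqrt{n_1/n_2}\,p^{\lambda,\mu}_{n_2}(\eta)$, which follows immediately from your decomposition together with Robbins' two-sided Stirling bound; if you want a complete and correct argument, prove this version and absorb the extra factor $e^{1/(12\lambda\alpha)}$ into $C_0$ (equivalently into $g_0$), rather than trying to establish \eqref{ineq:Prior} literally.
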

\begin{proof} Proving \eqref{equ:Prior} is equivalent to checking that for all $x > 0$,
\[\frac{\mu}{\alpha}\left(\frac{\alpha t x }{\mu}\right)^{\lambda \alpha t}\frac{1}{\Gamma(\lambda \alpha t)}\int_{-\infty}^{\frac{\lambda \alpha}{\mu}}\left(\lambda - \frac{\eta \mu}{\alpha}\right)^{\lambda\alpha t -1}e^{\eta t x} \dif\eta = e^{\frac{\lambda \alpha t x}{\mu}}\]
which can be done using change of variables to $y = t x \left(\frac{\alpha  \lambda }{\mu }-\eta \right)$ and the definition of the Gamma function $\Gamma(z) = \int_0^\infty x^{z-1} e^{-x} \dif x$.
Now let us check condition \eqref{ineq:Prior}. The condition is trivially satisfied for $\eta \ge \frac{\lambda \alpha}{\mu}$, as both sides are zero. So assume $\eta$ is smaller. Then
\begin{align*}
  \ln
  \frac{
    p_{n_1}^\lambda(\eta)
  }{
    p_{n_2}^\lambda(\eta)
  }
  &~=~
  \ln \frac{
    \frac{\mu}{\alpha}
    \frac{(\alpha n_1/e)^{\lambda  \alpha n_1}}{\Gamma ( \lambda \alpha n_1)}
    \left(1-\frac{\eta  \mu }{\alpha}\right)^{-\alpha n_1} \left(\lambda -\frac{\eta  \mu }{\alpha}\right)^{\lambda  \alpha n_1-1}
  }{
    \frac{\mu}{\alpha}
    \frac{(\alpha n_2/e)^{\lambda  \alpha n_2}}{\Gamma ( \lambda \alpha n_2)}
    \left(1-\frac{\eta  \mu }{\alpha}\right)^{-\alpha n_2} \left(\lambda -\frac{\eta  \mu }{\alpha}\right)^{\lambda  \alpha n_2-1}
    }
  \\
  &~=~
    \ln \frac{\Gamma ( \lambda \alpha n_2) (\alpha n_2/e)^{-\lambda  \alpha n_2}}{\Gamma ( \lambda \alpha n_1) (\alpha n_1/e)^{-\lambda  \alpha n_1}}
    + \alpha (n_2-n_1) \del*{ \ln \left(1-\frac{\eta  \mu }{\alpha}\right)
    - \lambda  \ln \left(\lambda -\frac{\eta  \mu }{\alpha}\right)}
  \\
  &~\ge~
    \frac{1}{2} \ln \left(\frac{n_1 }{n_2}\right)
    + \alpha (n_2 - n_1) \del*{
      \lambda \ln \lambda
    + \ln \left(1-\frac{\eta  \mu }{\alpha}\right)
    - \lambda  \ln \left(\lambda -\frac{\eta  \mu }{\alpha}\right)
    }
  \\
  &~\ge~
    \frac{1}{2} \ln \left(\frac{n_1 }{n_2}\right)
    .
\end{align*}
For the first inequality we used that the approximation error $\ln (\Gamma (x))-x \ln (x)+x-\frac{1}{2} \ln \left(\frac{2 \pi }{x}\right)$ is a decreasing function of $x \in \R_+$ (as can be easily verified by a plot), so that in particular
\[
  \ln \frac{\Gamma (\lambda \alpha n_2)}{\Gamma (\lambda \alpha n_1)}
  ~\ge~
  \frac{1}{2} \ln \left(\frac{n_1 }{n_2}\right)
  + \lambda \alpha n_2 \ln (\lambda \alpha n_2/e)
  - \lambda \alpha n_1 \ln (\lambda \alpha n_1/e)
  .
\]
For the second inequality we use that the expression above switches from decreasing to increasing at $\eta=0$, and is hence minimised there. Plugging in the value $\eta=0$ gives the result.
\end{proof}


We are now ready to prove Theorem~\ref{corr:Gamma}, as a consequence of Theorem~\ref{thm:Unified}. 

\paragraph{Proof of Theorem~\ref{corr:Gamma}} In order to evaluate the function $g_0(\lambda,\xi,c)$ featured in Theorem~\ref{thm:Unified}, we first compute
\[
  C_0(t,\lambda)
  ~=~
  \frac{\Gamma ((1-\lambda) \alpha t)}{\Gamma (\alpha t)}
  (\alpha t/e)^{\lambda  \alpha t} (1-\lambda )^{-(1-\lambda) \alpha t}
  .
\]
To see this, perform the variable substitution $z = \frac{\alpha  \lambda -\eta  \mu }{\alpha -\eta  \mu } \in [0,1]$ to render this a standard Beta integral
\begin{align*}
  C_0(t,\lambda)  &~=~
  \frac{(\alpha t/e)^{\lambda  \alpha t}}{\Gamma ( \lambda \alpha t)}
                    \int_{-\infty}^{\frac{\lambda \alpha}{\mu}}
                    \left(1-\frac{\eta  \mu }{\alpha}\right)^{-\alpha t} \left(\lambda -\frac{\eta  \mu }{\alpha}\right)^{\lambda  \alpha t-1}
                    \frac{\mu}{\alpha}
                    \dif \eta
  \\
                  &~=~
                    \frac{(\alpha t/e)^{\lambda  \alpha t}}{\Gamma ( \lambda \alpha t)}
 \int_0^1
    \left(1-\frac{\lambda-z}{1-z}\right)^{-\alpha t} \left(\lambda -\frac{\lambda-z}{1-z}\right)^{\lambda  \alpha t-1}
    \frac{1-\lambda}{(1-z)^2}
  \dif z
  \\
  &~=~
    \frac{(\alpha t/e)^{\lambda  \alpha t}}{\Gamma ( \lambda \alpha t)}
    \left(1-\lambda\right)^{-(1-\lambda)\alpha t}
    \int_0^1
    z^{\lambda  \alpha t-1}
    \left(1-z\right)^{(1-\lambda) \alpha t -1}
    \dif z
  \\
  &~=~
    (\alpha t/e)^{\lambda  \alpha t}
    \left(1-\lambda\right)^{-(1-\lambda)\alpha t}
    \frac{\Gamma((1-\lambda) \alpha t )}{\Gamma(\alpha t)}
\end{align*}

\begin{proposition}
  $C_0(t,\lambda)$ is decreasing in $t \in \R_+$.
\end{proposition}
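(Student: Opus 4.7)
The plan is to reduce monotonicity of $C_0$ to an explicit integral representation of the Stirling remainder, then extract the sign by an elementary substitution.

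Setting $u = \alpha t$ and writing $r(x) := \ln \Gamma(x) - (x - 1/2)\ln x + x - \frac{1}{2}\ln(2\pi)$ for the Stirling remainder (which the preceding proof identifies as a decreasing function on $\R_+$), I would first verify the compact identity
\[
\ln C_0(t, \lambda) ~=~ -\frac{1}{2} \ln(1 - \lambda) + r((1-\lambda) u) - r(u).
\]
This is pure bookkeeping: expanding $\ln C_0$ from the closed-form expression above and substituting $\ln \Gamma(x) = (x - 1/2)\ln x - x + \frac{1}{2}\ln(2\pi) + r(x)$ into both Gamma factors, the $u \ln u$, $u \ln(1-\lambda)$ and linear-in-$u$ contributions cancel.

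Differentiating in $u$ then gives $\frac{d}{du} \ln C_0 = (1-\lambda)\, r'((1-\lambda) u) - r'(u)$, and multiplying through by $u > 0$ rewrites this as $u \frac{d}{du} \ln C_0 = h(u) - h((1-\lambda) u)$ where $h(x) := -x r'(x)$. Since $(1-\lambda) u < u$, it therefore suffices to show that $h$ is \emph{decreasing} on $\R_+$. For this I would invoke Binet's second formula
\[
r'(x) ~=~ -2 \int_0^\infty \frac{t}{(x^2 + t^2)(e^{2\pi t} - 1)} \, \dif t,
\]
and substitute $t = xs$ to get
\[
h(x) ~=~ \int_0^\infty \frac{2s}{1 + s^2} \cdot \frac{x}{e^{2\pi x s} - 1} \, \dif s.
\]

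The key observation is that for every fixed $s > 0$, the factor $\frac{x}{e^{2\pi x s} - 1}$ is a decreasing function of $x > 0$: setting $y = 2\pi x s$, it equals $\frac{1}{2\pi s} \cdot \frac{y}{e^y - 1}$, and $y/(e^y - 1)$ is decreasing on $\R_+$ by an elementary calculation (its derivative has numerator $e^y(1-y) - 1$, which vanishes at $0$ and is strictly decreasing). Integrating this pointwise monotonicity against the positive weight $\frac{2s}{1+s^2}$ shows that $h$ is decreasing on $\R_+$, completing the argument. The main obstacle is precisely this passage from the derivative to a definite sign: naive Stirling asymptotics of $\ln \Gamma$ make the leading contributions cancel, leaving only a sub-leading $\sim -\lambda/[12(1-\lambda) u^2]$ heuristic, and Binet's integral representation is what converts this heuristic into a genuine inequality valid for all $u > 0$.
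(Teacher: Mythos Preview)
Your proof is correct. Both you and the paper differentiate $\ln C_0$ in $t$ and reduce the question to the monotonicity of a single-variable auxiliary function; since $r'(x) = \psi^{(0)}(x) - \ln x + \tfrac{1}{2x}$, your $h(x) = -x r'(x)$ and the paper's $x\psi^{(0)}(x) - x\ln x$ satisfy $x\psi^{(0)}(x) - x\ln x = -h(x) - \tfrac{1}{2}$, so the two reductions are exactly equivalent. The difference is that the paper merely asserts that $x\psi^{(0)}(x) - x\ln x$ ``can be checked to be an increasing function'', whereas you actually \emph{prove} that $h$ is decreasing via Binet's second integral formula and the substitution $t = xs$. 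In that sense your argument is strictly more complete; the Stirling-remainder identity $\ln C_0 = -\tfrac{1}{2}\ln(1-\lambda) + r((1-\lambda)u) - r(u)$ also makes the algebraic cancellations transparent and explains at a glance both the $t\to\infty$ limit $\tfrac{1}{\sqrt{1-\lambda}}$ and why the residual dependence on $t$ is monotone.
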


\begin{proof}
Let $\psi^{(0)}(x) = \frac{\partial \ln \Gamma(x)}{ \partial x}$. The derivative of $\ln C_0(t,\lambda)$ w.r.t.\ $t$ is negative iff
\[
(1-\lambda) \psi ^{(0)}((1-\lambda) \alpha t)- (1-\lambda)  \ln ((1-\lambda ) \alpha t)
~<~
\psi ^{(0)}(\alpha t) - \ln (\alpha t)
.
\]
Now this follows from the fact that $ x \psi ^{(0)}(x) - x \ln x$ can be checked to be an  increasing function of $x \in \R_+$. 
\end{proof}

We find that $C_0(t,\lambda)$ decreases from $\frac{1}{1-\lambda}$ at $t \to 0$ to $\frac{1}{\sqrt{1-\lambda}}$ for $t \to \infty$. For the former, we use
\begin{align*}
  C_0(t,\lambda)
  &~=~
    \frac{\Gamma ((1-\lambda) \alpha t)}{\Gamma (\alpha t)}
  (\alpha t/e)^{\lambda  \alpha t} (1-\lambda )^{-(1-\lambda) \alpha t}
  \\
  &~=~
  \frac{1}{1-\lambda}
  \frac{((1-\lambda) \alpha t) \Gamma ((1-\lambda) \alpha t)}{(\alpha t) \Gamma (\alpha t)}
    (\alpha t/e)^{\lambda  \alpha t} (1-\lambda )^{-(1-\lambda) \alpha t}
  \\
  &~=~
  \frac{1}{1-\lambda}
  \frac{\Gamma (1+(1-\lambda) \alpha t)}{\Gamma (1+\alpha t)}
  (\alpha t/e)^{\lambda  \alpha t} (1-\lambda )^{-(1-\lambda) \alpha t}
\end{align*}
The claimed limit for $t \to 0$ now follows by taking the limit of each factor, using $\Gamma(1) = 1$ and $t^t \to 1$. For the latter, the first-order Stirling's approximation $\Gamma(z) \sim \sqrt{2 \pi} e^{-z} z^{z-\frac{1}{2}}$ yields
\[
  C_0(t,\lambda) \sim \frac{1}{\sqrt{1-\lambda}} \ \text{ when } \ t \rightarrow \infty
\]
Finally, we have that for all $\lambda \in (0,1)$ and $t\in \N$,
\[C_0(t,\lambda) \in \left[\frac{1}{\sqrt{1-\lambda}} ; \frac{1}{1-\lambda}\right].\]
It follows that for all $\xi > 0$, $-\frac{1}{2}\ln(1-\lambda) \leq g_0(\lambda,\xi,c) \leq -\ln(1-\lambda)$. We might be able to show that $g_0$ is actually closer to $-\frac{1}{2}\ln(1-\lambda)$ as the Stirling approximation is known to be good for moderate values of $t$. However using Theorem~\ref{thm:Unified} (and picking $c=2$) one can already prove that for every $\xi>0$  and $\lambda \in ( c^{-1},1)$, there exists a test martingale $M_a^{\lambda,\xi}(t)$ such that
\[\forall t \in \N, \ M_a^{\lambda,\xi}(t) \geq e^{\lambda\left[N_a(t)d(\hat{\mu}_a(t),\mu_a) - f_{\xi}(N_a(t))\right] - g_{\xi}(\lambda)}\]
with
\begin{eqnarray*}
f_{\xi}(s) & = & 2 \ln( \ln(1+\xi) + \ln(s)) \\
g_{\xi}(\lambda) & = & \frac{1}{2}\ln(1+\xi) + 2\lambda  \ln \left(\frac{1}{\ln(1+\xi)}\right) + \ln \zeta(2\lambda) - \ln \left(1 - \lambda\right).
\end{eqnarray*}
Just like in the proof of Corollary~\ref{corr:Gaussian}, the function $g$ is optimised in $\xi$ at $\ln(1+\xi) = 4 \lambda$. We conclude similarly that $X_a(t) = N_a(t)d(\hat{\mu}_a(t),\mu_a) - 2\ln(4+\ln(N_a(t))$
is $g_\Gamma$-VCC (see Definition~\ref{def:Central}) for the function $g_\Gamma(\lambda) ~=~ 2 \lambda - 2\lambda  \ln \left(4 \lambda\right) + \ln \zeta(2\lambda) - \ln \left(1 - \lambda\right)$.

\qed

\section{Optimal Sample Complexity: Proof of Theorem~\ref{thm:OptimalTesting}}\label{proof:SC}

The first ingredient of the proof is a (deterministic) property of the Tracking sampling rule, that reformulates Lemma 8 in \cite{GK16}. 

\begin{lemma}\label{lem:Tracking} Under the Tracking rule for each $a \in \{1,\dots,K\}$,  $N_a(t) \geq (\sqrt{t} -K/2)_+ -1$. Moreover, for all $\epsilon >0$, for all $t_0$, there exists $t_\epsilon \geq t_0$ such that 
\[\sup_{t\geq t_0}\max_{a \in \{1,\dots,K\}} |w_a^*(\hat{\bm \mu}(t)) - w_a^*(\bm \mu)| \leq \epsilon \ \ \ \Rightarrow \ \ \ \sup_{t \geq t_\epsilon} \max_{a \in \{1,\dots,K\}} \left|\frac{N_a(t)}{t} - w_a^*(\bm \mu)\right| \leq 3 (K-1)\epsilon\;.\]
\end{lemma}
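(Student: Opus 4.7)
The lemma bundles two distinct claims: a deterministic forced-exploration lower bound (Part~1) and an asymptotic tracking guarantee (Part~2). Both parallel, in spirit, Lemmas~7 and~8 in \cite{GK16}, originally stated for best-arm identification; the present lemma applies verbatim because neither argument uses the specific partition structure of BAI.

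\textbf{Part 1 (forced-exploration bound).} The plan is a simple induction on $t$ applied to $m(t) := \min_b N_b(t)$. Whenever $m(t) < \sqrt t - K/2$, at least one arm attains $m(t)$ and lies in $\cU_t$, so the Tracking rule's forced-exploration branch selects such an arm. A pigeonhole argument over the arms currently tied at the minimum shows that $m$ can stagnate for at most $K-1$ consecutive rounds before strictly increasing by one, giving a net growth rate of at least $1/K$ per round. The threshold $\sqrt t - K/2$ grows at rate at most $1/(2\sqrt t)$, which is $\le 1/K$ as soon as $t \ge K^2/4$; below this range the claim is trivial since its right-hand side is $\le -1$. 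In the nontrivial range, the matching growth rates combined with the $-1$ slack absorb any transient mismatch.

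\textbf{Part 2 (tracking guarantee).} Apply the triangle decomposition
\[
  \abs*{\tfrac{N_a(t)}{t} - w_a^*(\vmu)}
  ~\le~
  \abs*{\tfrac{N_a(t)}{t} - w_a^*(\hat{\vmu}(t-1))}
  + \abs*{w_a^*(\hat{\vmu}(t-1)) - w_a^*(\vmu)}.
\]
For $t \ge t_0$ the second term is $\le \epsilon$ by hypothesis. For the first term, the argmax nature of the Tracking step keeps $N_a(t)$ close to the D-tracking target $t \, w_a^*(\hat{\vmu}(t-1))$: a classical potential-function analysis yields $|N_a(t) - t \, w_a^*(\hat{\vmu}(t-1))| \le K-1$ on tracking rounds, and Part~1 shows that forced-exploration overrides occur at most $O(K\sqrt t)$ times up to time $t$ and thus contribute at most $O(K\sqrt t)$ to this discrepancy. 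Dividing by $t$, the first term is $O(K/\sqrt t)$. Choosing $t_\epsilon \ge t_0$ large enough that this vanishing correction is dominated by $(3(K-1)-1)\epsilon$ for all $t \ge t_\epsilon$ gives the claimed $3(K-1)\epsilon$ bound.

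\textbf{Main obstacle.} Conceptually both parts are close to established results, so the obstacles are in the constants and in the bookkeeping. The multiplicative factor $3(K-1)$ in the conclusion emerges from combining (a) the D-tracking potential bound with a perturbed target, (b) the Cesàro-style averaging of the per-step $\epsilon$-deviations, and (c) the amplification induced by the simplex constraint $\sum_a w_a^*(\cdot) = 1$, which can translate a coordinate-wise perturbation of size $\epsilon$ into aggregate deviations of size $(K-1)\epsilon$ at the remaining coordinates. Following the line-by-line bookkeeping of \cite[Lemma~8]{GK16} keeps these constants under control; no attempt is made to optimise the constant further, as $3(K-1)$ suffices for the downstream use of this lemma in the proof of Theorem~\ref{thm:OptimalTesting}.
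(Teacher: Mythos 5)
The paper does not actually reprove this lemma: it is imported verbatim (as a reformulation of Lemma~8 of \cite{GK16}), so the benchmark is that proof. Your Part~1 matches its mechanism in outline — forced exploration pulls the global minimum whenever it is under-sampled, the minimum can stagnate for at most $K-1$ rounds, and the threshold $\sqrt{t}-K/2$ grows at rate at most $1/(2\sqrt{t})\le 1/K$ in the non-trivial range — and, while the inductive bookkeeping is left informal, nothing there would fail.

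Part~2, however, rests on a step that is not justified and is false as stated: the claim that on tracking rounds $|N_a(t)-t\,w_a^*(\hat{\bm\mu}(t-1))|\le K-1$. D-tracking pulls a single arm per round against the \emph{instantaneous} target $t\,w^*(\hat{\bm\mu}(t))$, which can move by order $\epsilon t$ between consecutive rounds even under the hypothesis (e.g.\ when the plug-in weights oscillate inside the $\epsilon$-ball), and the discrepancy accumulated before $t_0$ can be of order $t_0$ and dissipates only at unit speed — which is precisely why the conclusion only holds after some $t_\epsilon\ge t_0$ rather than from $t_0$ on. The argument in \cite{GK16} is one-sided and different: since $\sum_a\bigl(t\,w_a^*(\hat{\bm\mu}(t))-N_a(t)\bigr)=0$, the arm selected by the tracking branch satisfies $N_{A_{t+1}}(t)\le t\,w_{A_{t+1}}^*(\hat{\bm\mu}(t))\le t\bigl(w_{A_{t+1}}^*(\bm\mu)+\epsilon\bigr)$; together with the at most $O(K\sqrt{t})$ forced pulls and the decay of pre-$t_0$ transients this gives the eventual \emph{upper} bound $N_a(t)/t\le w_a^*(\bm\mu)+\epsilon+o(1)$ for every arm, and the \emph{lower} bound then comes from the simplex constraint, $N_a(t)/t=1-\sum_{b\neq a}N_b(t)/t\ge w_a^*(\bm\mu)-(K-1)(\epsilon+o(1))$, which is exactly where the factor $K-1$ (and, with the transients, $3(K-1)$) originates. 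Note that your decomposition, if its potential bound were true, would prove the much stronger conclusion $\epsilon+o(1)$ and render the $3(K-1)$ constant pointless — a sign that the step cannot be right. You do name the simplex amplification in your ``main obstacle'' remarks, but it plays no role in your actual argument; to repair the proof you should replace the two-term triangle decomposition by the one-sided argmax bound plus the complement argument above.
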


To ease the notation, we fix $\bm \mu\in \cO_1$. From the continuity of $\w^*$ in $\bm\mu \in \cO_1$, there exists $\xi=\xi(\epsilon,\bm\mu)$ such that 
\[\cI_\epsilon := [\mu_1 - \xi,\mu_1 + \xi] \times \dots \times [\mu_K - \xi, \mu_K + \xi]\]
is included in $\cO_1$ and is such that for all $\bm\mu' \in \cI_\epsilon$, \[\max_{a \in \{1,\dots,K\}} |w_a^*(\bm\mu') - w^*_a(\bm\mu)| \leq \epsilon.\]
In particular, whenever $\hat{\bm \mu}(t) \in \cI_\epsilon$, it holds that $\ihat(t) = 1$.

\bigskip\noindent
Let $T \in \N$ and define the ``good tail'' event
\[\cE_T(\epsilon)= \bigcap_{t = T^{1/4}}^{T}\left(\hat{\bm \mu}(t) \in \cI_\epsilon \right).\]
By Lemma~\ref{lem:Tracking}, under the Tracking rule each arm is drawn at least of order $\sqrt{t}$ times at round $t$. This permits to establish the following concentration result, stated as Lemma 19 in \cite{GK16}.  

\begin{lemma}\label{lem:concSimple} There exist two constants $B,C$ (that depend on $\bm \mu$ and $\epsilon$) such that \[\bP_{\bm \mu}(\cE_T^c(\epsilon)) \leq B T \exp(-C T^{1/8}).\] 
\end{lemma}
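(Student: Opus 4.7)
}
The plan is to reduce the deviation event to concentration statements for empirical means computed from at least $\Theta(T^{1/8})$ i.i.d.\ samples of each arm, and then apply a classical Chernoff bound and union bound.

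First I would invoke the deterministic lower bound in Lemma~\ref{lem:Tracking}: under the Tracking rule, for every $t \ge T^{1/4}$ and every arm $a$,
\[
  N_a(t) ~\ge~ \bigl(T^{1/8} - K/2\bigr)_+ - 1 ~\ge~ \tfrac{1}{2} T^{1/8},
\]
valid for all $T$ larger than some $T_0(K)$; the small-$T$ regime can be absorbed into $B$. By strict convexity of the divergences, the constant
\[
  c ~\df~ \min_{a \in [K]} \min\bigl(d(\mu_a + \xi, \mu_a),\; d(\mu_a - \xi, \mu_a)\bigr)
\]
is strictly positive, where $\xi = \xi(\epsilon,\vmu)$ is the radius used to define $\cI_\epsilon$.

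Next I would use the standard coupling that realises, for each arm $a$, an i.i.d.\ sequence $(Y_{a,s})_{s\ge 1} \sim \nu^{\mu_a}$ such that $Y_{a,s}$ is the $s$-th observation ever drawn from arm $a$; set $\tilde \mu_{a,s} \df \tfrac{1}{s} \sum_{i=1}^s Y_{a,i}$, so that $\hat \mu_a(t) = \tilde \mu_{a, N_a(t)}$ almost surely. By Cramér--Chernoff for one-dimensional exponential families,
\[
  \pr\bigl(\lvert \tilde \mu_{a,s} - \mu_a\rvert > \xi\bigr) ~\le~ 2 e^{-s c}.
\]
Decomposing on the (random) value of $N_a(t)$ and using the deterministic lower bound above,
\[
  \pr\bigl(\lvert \hat \mu_a(t) - \mu_a\rvert > \xi\bigr)
  ~=~ \sum_{s \ge T^{1/8}/2} \pr\bigl(N_a(t)=s,\; \lvert \tilde \mu_{a,s} - \mu_a\rvert > \xi\bigr)
  ~\le~ \sum_{s \ge T^{1/8}/2} 2 e^{-sc}
  ~\le~ \frac{2 e^{-T^{1/8} c / 2}}{1 - e^{-c}}.
\]

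Finally, a union bound over the at most $K \, T$ pairs $(a, t)$ with $a \in [K]$ and $t \in \{\lceil T^{1/4}\rceil, \ldots, T\}$ and over the two coordinates of the inclusion $\hat \vmu(t) \notin \cI_\epsilon \iff \exists a : \lvert \hat\mu_a(t) - \mu_a\rvert > \xi$ yields
\[
  \pr_\vmu(\cE_T^c(\epsilon)) ~\le~ KT \cdot \frac{2 e^{-T^{1/8} c / 2}}{1 - e^{-c}},
\]
which is of the claimed form with $C = c/2$ and $B$ depending on $\vmu$, $\epsilon$, $K$ (and absorbing the $T\le T_0$ regime).

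The main subtle point is the interaction between the random sampling counts $N_a(t)$ and the empirical mean $\hat\mu_a(t)$: because the allocation is adaptive, one cannot directly apply Chernoff to $\hat\mu_a(t)$ as if it came from a fixed number of i.i.d.\ samples. The ``skeleton of i.i.d.\ sequences'' coupling circumvents this by writing $\hat \mu_a(t) = \tilde \mu_{a, N_a(t)}$ and then paying a further union bound over the (now small) range of admissible values of $N_a(t)$; the geometric tail in $s$ keeps this overhead harmless. Everything else is a routine union bound.
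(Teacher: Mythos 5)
Your proof is correct and follows essentially the same route as the paper, which simply invokes Lemma~19 of \cite{GK16}: the forced-exploration lower bound $N_a(t) \ge (\sqrt{t}-K/2)_+ - 1$ for $t \ge T^{1/4}$, a Cram\'er--Chernoff bound in KL form on the i.i.d.\ sample skeleton, a geometric sum over the admissible values of $N_a(t)$, and a union bound over arms and rounds. The small clean-ups you flag (integer rounding of $T^{1/4}$, requiring $T$ large enough that $(T^{1/8}-K/2)_+-1 \ge T^{1/8}/2$, and absorbing small $T$ into $B$) are exactly the routine adjustments needed, so nothing is missing.
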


Using Lemma~\ref{lem:Tracking}, there exists a constant $T_\epsilon$ such that for $T \geq T_\epsilon$, it holds that on $\cE_T(\epsilon)$, 
 \[\forall t \geq \sqrt{T}, \ \max_{a\in \{1,\dots,K\}}\left|\frac{N_a(t)}{t} - w_a^*(\mu)\right| \leq 3(K-1)\epsilon\]
On the event $\cE_T(\epsilon)$, for $t \geq T^{1/4}$ it holds that $\ihat(t)=1$, thus $\Alt(\hat{\bm\mu}(t)) = \Alt(\bm\mu)$ and $\hat{\Lambda}_t = t \hat{M}(t)$ where
\[\hat{M}(t) : =  \inf_{\bm\lambda \in \Alt(\bm\mu)} \sum_{a \in \{1,\dots,K\}} \frac{N_a(t)}{t} d\left(\hat{\mu}_a(t),\lambda_a\right).\]
One can rewrite
\begin{eqnarray*}
\hat{M}(t) & = & g\left(\hat{\bm\mu}(t), \left(\frac{N_a(t)}{t}\right)_{a \in \{1,\dots,K\}}\right), 
\end{eqnarray*}
with $g$ a mapping defined on $\cO_1\times [0,1]^{K}$ by 
\[g(\bm\mu',\bm w') = \inf_{\bm\lambda \in \Alt(\bm\mu)} \sum_{a \in \{1,\dots,K\}} w'_a d\left(\mu'_a,\lambda_a\right).\]
As the mapping $(\bm \lambda, \bm \mu',\bm w') \mapsto \sum_{a \in \{1,\dots,K\}} w'_a d\left(\mu'_a,\lambda_a\right)$ is jointly continuous and the constraint set $\Alt(\bm\mu)$ doesn't depend on $(\bm \mu',\bm w')$, it follows from the application of Berge's maximum theorem \citep{Berge63} that $g$ is continuous.

For $T\geq T_\epsilon$, introducing the constant 
\[C^*_\epsilon(\bm \mu) = \inf_{\substack{\bm \mu' : ||\bm\mu' - \bm \mu|| \leq \xi(\epsilon) \\ \bm w' : ||\bm w' -\w^*(\bm \mu)||\leq 3(K-1)\epsilon}} g(\bm \mu',\bm w')\;,\]
on the event $\cE_T(\epsilon)$ it holds that for every $t \geq \sqrt{T}$, $\hat{M}(t) \geq C^*_\epsilon(\bm \mu)$. 

Let $T \geq T_\epsilon$. On $\cE_T(\epsilon)$, 
\begin{eqnarray*}
 \min\left(\tau_\delta^{\text{GLR}},T\right) & \leq & \sqrt{T} + \sum_{t=\sqrt{T}}^T \ind_{\left(\tau_\delta > t\right)} \leq \sqrt{T} + \sum_{t=\sqrt{T}}^T \ind_{\left(t \hat{M}(t) \leq c_t(\delta)\right)} \\
 & \leq & \sqrt{T} + \sum_{t=\sqrt{T}}^T \ind_{\left(t C_\epsilon^*(\bm\mu) \leq c_T(\delta)\right)} \leq  \sqrt{T} + \frac{c_T(\delta)}{C_\epsilon^*(\bm \mu)}\;.
\end{eqnarray*}
Introducing 
\[T_0^\epsilon(\delta) = \inf \left\{ T \in \N : \sqrt{T} + \frac{c_T(\delta)}{C_\epsilon^*(\bm \mu)} \leq T \right\},\]
for every $T \geq \max (T_0^\epsilon(\delta), T_\epsilon)$, one has $\cE_T(\epsilon) \subseteq (\tau_\delta \leq T)$, therefore 
\[\bP_{\bm \mu}\left(\tau_\delta > T\right) \leq \bP(\cE_T^c) \leq BT \exp(-C T^{1/8})\]
and
\[\bE_{\bm \mu}[\tau_\delta] \leq T_0^\epsilon(\delta) + T_\epsilon + \sum_{T=1}^\infty BT \exp(-C T^{1/8})\;.\]
We now provide an upper bound on $T_0^\epsilon(\delta)$. For $\xi >0$ we introduce the constant
\[C(\xi) = \inf \{ T \in \N : T - \sqrt{T} \geq T/(1+\xi)\}.\]
Using moreover the upper bound on the threshold yields
\[ T_0^\epsilon(\delta)  \leq  C + C(\xi) + \inf \left\{T \in \N : \frac{\ln\left(\frac{DT}{\delta}\right)}{C_\epsilon^*(\bm\mu)} \leq \frac{T}{1+\xi}\right\}.\]
Letting  $h^{-1}$ be the function defined in the statement of Theorem~\ref{thm:DevExpo} which is related to the Lambert function. One has
\begin{eqnarray*}
T_0(\delta)  & \leq & C + C(\xi) +\frac{(1+\xi)}{C_\epsilon^*(\bm \mu)}h^{-1}\left(\ln\left(\frac{(1+\xi)D }{C^*_\epsilon(\bm \mu)\delta}\right)\right).
\end{eqnarray*}
Using Proposition~\ref{prop:Lambert}, it follows that 
\begin{eqnarray*}T_0(\delta) \leq C + C(\xi)  &+& \frac{(1+\xi)}{C_\epsilon(\bm \mu)}\left[\ln\left(\frac{(1+\xi)D}{C^*_\epsilon(\bm \mu)\delta}\right)
 + \ln\left(\ln\left(\frac{(1+\xi)D}{C^*_\epsilon(\bm \mu)\delta}\right)+\sqrt{2\ln\left(\frac{(1+\xi)D}{C^*_\epsilon(\bm \mu)\delta}\right) - 2}\right) \right].\end{eqnarray*}
From this last upper bound, for every $\xi>0$  and $\epsilon>0$,
\[\limsup_{\delta \rightarrow 0} \frac{\bE_{\bm \mu}\left[\tau_\delta^{\text{GLR}}\right]}{\ln(1/\delta)} \leq \frac{(1+\xi)}{C_\epsilon^*(\bm \mu)}.\]
Letting $\xi$ and $\epsilon$ go to zero and using that, by continuity of $g$ and by definition of $\w^*(\bm\mu)$,
\[\lim_{\epsilon \rightarrow 0} C_\epsilon^*(\bm \mu) = T^*(\bm \mu)^{-1}\]
yields 
\[\limsup_{\delta \rightarrow 0} \frac{\bE_{\bm \mu}[\tau_\delta]}{\ln(1/\delta)} \leq T^*(\bm \mu)\]
To conclude, the lower bound of Proposition~\ref{prop:LB} implies that this inequality is an equality.

\end{document}